\newcounter{todocounter}
\title{Scaling Marginalized Importance Sampling to High-Dimensional State-Spaces via State Abstraction}
\author{
    %Authors
    % All authors must be in the same font size and format.
% Anonymous Submission
Brahma S. Pavse {\normalfont and} Josiah P. Hanna
}
\title{My Publication Title --- Single Author}
\author {
    Author Name
}
\title{Scaling Marginalized Importance Sampling to High-Dimensional State-Spaces via State Abstraction}
\author {
    % Authors
    Brahma S. Pavse,\textsuperscript{\rm 1}
    Josiah P. Hanna \textsuperscript{\rm 1}
}
\begin{document}

\maketitle

\begin{abstract}
We consider the problem of off-policy evaluation (OPE) in reinforcement learning (RL), where the goal is to estimate the performance of an evaluation policy, $\pi_e$, using a fixed dataset, $\mathcal{D}$, collected by one or more policies that may be different from $\pi_e$. Current OPE algorithms may produce poor OPE estimates under policy distribution shift i.e., when the probability of a particular state-action pair occurring under $\pi_e$ is very different from the probability of that same pair occurring in $\mathcal{D}$ \cite{voloshin2021opebench, fu2021opebench}. In this work, we propose to improve the accuracy of OPE estimators by projecting the high-dimensional state-space into a low-dimensional state-space using concepts from the state abstraction literature. Specifically, we consider marginalized importance sampling (MIS) OPE algorithms which compute state-action distribution correction ratios to produce their OPE estimate. In the original ground state-space, these ratios may have high variance which may lead to high variance OPE. However, we prove that in the lower-dimensional abstract state-space the ratios can have lower variance resulting in lower variance OPE. We then highlight the challenges that arise when estimating the abstract ratios from data, identify sufficient conditions to overcome these issues, and present a minimax optimization problem whose solution yields these abstract ratios. Finally, our empirical evaluation on difficult, high-dimensional state-space OPE tasks shows that the abstract ratios can make MIS OPE estimators achieve lower mean-squared error and more robust to hyperparameter tuning than the ground ratios.
\end{abstract}

% We also prove that, under certain conditions on the abstraction
% function, our abstract MIS OPE estimator
% is strongly consistent and can have lower variance than the ground equivalent. 

\section{Introduction}
    One of the key challenges when applying reinforcement learning (RL) \cite{sutton1998rlbook}
    to real-world tasks is the problem of
    off-policy evaluation (OPE) \cite{fu2021opebench, voloshin2021opebench}. The goal of OPE is to evaluate a policy of interest
    by leveraging offline data generated by possibly different policies. Solving the OPE
    problem would enable us to estimate the performance of a potentially risky policy without having to actually deploy it. This capability is especially important for sensitive real-world tasks such as healthcare and autonomous
    driving.
    \begin{figure}[H]
        \centering
            \includegraphics[scale=0.55]{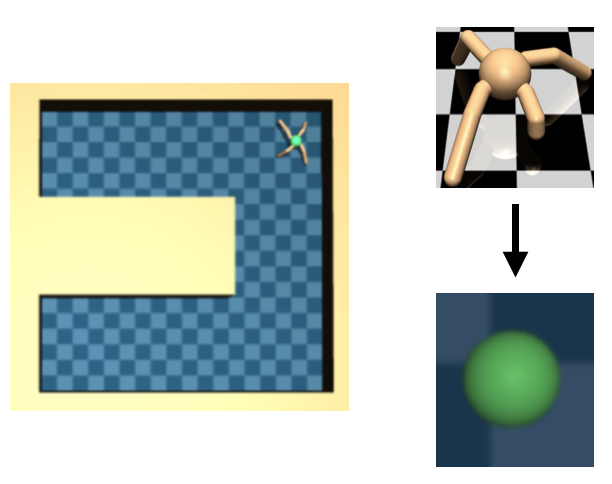}
        \caption{\footnotesize Left side: AntUMaze domain. Right side: Projecting high-dimensional ant into lower-dimensional point-mass.}
        \label{fig:abs_motivation}
    \end{figure}
    The core OPE problem is to produce 
    accurate policy value estimates under policy distribution shift. This problem is particularly difficult
    on tasks with high-dimensional state-spaces \cite{voloshin2021opebench, fu2021opebench}.
    For example, consider the AntUMaze problem illustrated 
    on the left side of Figure \ref{fig:abs_motivation}. In this
    task, an ant-like robot with a high-dimensional state representation moves in a U-shaped maze and receives a reward
    only for reaching a specific $2$D coordinate goal location. The state-space
    of this task includes information such as $2$D location, ant limb angles,
    torso orientation etc., resulting in a $29$-dimensional
    state-space. The OPE task is 
    to evaluate the performance of a particular policy's ability to take
    the ant to the $2$D goal location using data that may be collected by different policies. Policy distribution shift is common in this type of high-dimensional
    task since
    the chances of different policies inducing similar limb angles, torso orientations, paths traversed etc. are incredibly slim. Notice, however, that while different policies may induce different body configurations, they may traverse similar $2$D paths since all (successful) policies must move the ant through roughly the same path to reach the goal. Moreover, the only critical information needed from the state-space to determine the ant's per-step reward are its $2$D coordinates. Motivated by this
    observation, we propose to improve
    the accuracy of OPE algorithms on high-dimensional state-space tasks by projecting
    the high-dimensional state-space into a lower-dimensional space. This 
    idea is illustrated on the right side of
    Figure \ref{fig:abs_motivation} where the ant is reduced to a $2$D point-mass.
    
    With this general motivation in mind, in this paper, we leverage concepts from the state abstraction 
    literature \cite{li2006stateabs} to improve the accuracy of marginalized importance
    sampling (MIS) OPE algorithms which estimate state-action density correction ratios to compute a policy value estimate \cite{qiang2018horizon, xie2019MIS, yin2020misvariance}. Due to the low chances of similarity between states 
    of policies in high-dimensional state-spaces, current MIS algorithms can produce high variance state-action density ratios,
    resulting in high variance OPE estimates. 
    However, if we are given a suitable state abstraction function, we can 
    % 
    % To alleviate this issue 
    % we propose to use a provided and suitable state abstraction function
    % to 
    project the high-dimensional \textit{ground}  state-space into a lower-dimensional \textit{abstract} state-space. The projection step
    increases the chances of similarity between these
    lower-dimensional states, resulting in
    low variance density ratios and
    OPE estimates. To the best of our knowledge, this work is the first
    to leverage concepts from state abstraction to improve OPE. We make the following contributions:

    \begin{enumerate}
        % \item The observation that policies with distinct induced state-action densities can have similar induced abstract state-action densities 
        \item Theoretical analyses showing: (a) the variance of abstract state-action ratios
        is at most that of ground state-action ratios; and (b) that our abstract MIS OPE estimator is unbiased, strongly consistent, and can have lower variance 
        than its ground equivalent.
        \item An algorithm, based on a popular class of
        MIS algorithms, to estimate the abstract state-action ratios. 
        \item An empirical analysis of our estimator on a variety of
        high-dimensional state-space
        tasks.
    \end{enumerate} 
    
\section{Preliminaries}
In this section, we discuss relevant background information.
\subsection{Notation and Problem Setup}
    We consider an infinite-horizon
    Markov decision process (MDP), $\mathcal{M} = \langle \sset, \aset, \mathcal{R}, P, \gamma , d_0\rangle$,
    where $\sset$ is the state-space, $\aset$ is the action-space,
    $\mathcal{R}:\sset \times\aset \to \Delta([0,\infty))$ is the reward function,
    $P:\sset\times\aset\to\Delta(\sset)$ is the transition dynamics function, $\gamma\in[0,1)$ is the discount factor, and $d_0\in \Delta(\sset)$ is the initial state distribution. The
    agent acting, according to policy $\pi$, in the MDP generates a trajectory: $s_0, a_0, r_0, s_1, a_1, r_1, ...$, where $s_0\sim d_0$, $a_t\sim\pi(\cdot|s_t)$, 
    $r_t\sim \mathcal{R}(s_t, a_t)$, and $s_{t+1}\sim P(\cdot|s_t, a_t)$ for
    $t\geq0$. We define $r(s,a) := \E_{r\sim\mathcal{R}(s,a)}[r]$ and the agent's discounted state-action occupancy measure under policy $\pi$ as $d_{\pi}$:
    \begin{align*}
        d_{\pi}(s,a) := \lim_{T\to\infty}
            \frac{\sum_{t = 0}^{T - 1} \gamma^t d_{\pi}(s_t, a_t)}{\sum_{t = 0}^{T-1}\gamma^t}
    \end{align*}
    where $d_{\pi}(s_t, a_t)$ is the probability the agent will be in state $s$ and take action $a$ at time-step $t$ under policy $\pi$.

    In the infinite-horizon setting, we define the performance of policy $\pi$ to be its average reward, $\rho(\pi):= \E_{(s, a)\sim d_{\pi}, r\sim \mathcal{R}(s,a)}[r]$. Note that $\E_{(s, a)\sim d_{\pi}, r\sim 
    \mathcal{R}(s,a)}[r] = (1 - \gamma) \E_{s_0\sim d_0, a_0\sim 
    \pi}[q^\pi(s_0, a_0)]$ where $q^\pi(s,a) := \E[\sum_{k = 0}^\infty \gamma^tr_{k+t} | s_t = s, a_t = a]$
    is the action-value function which satisfies
    $q^\pi(s,a) = r(s,a) + \gamma\E_{s'\sim P(s,a), a'\sim\pi(s')}[q^\pi(s', a')]$.

\subsection{Off-Policy Evaluation (OPE)}
    In behavior-agnostic off-policy evaluation (OPE), the goal 
    is to 
    estimate the performance of an evaluation policy $\pi_e$ given only a fixed offline data set of transition tuples, $\mathcal{D} := \{(s_i, a_i, s_i', r_i)\}_{i=1}^{mT}$,
    where $(s_i, a_i)\sim d_{\mathcal{D}}$, $m$ is the batch size (number of trajectories), and $T$ is the fixed length of each trajectory,
    generated by \textit{unknown} and possibly \textit{multiple} 
    behavior
    policies. The difficulty in OPE is to estimate $\rho(\pi_e)$
    under $d_{\pi_e}$ given samples only from $d_\mathcal{D}$.
    
    We define the average-reward in dataset $\mathcal{D}$ to be $\bar{r}_{\mathcal{D}}:= \E_{(s, a)\sim d_{\mathcal{D}}, r\sim \mathcal{R}(s,a)}[r]$. 
    As in prior OPE work, we assume that if $d_{\pi_e}(s,a) > 0$ then $d_{\mathcal{D}}(s,a)> 0$. Empirically, we measure the accuracy of an estimate
    $\hat{\rho}(\pi_e)$ by generating $M$ datasets
    and then computing the \textit{relative}
    mean-squared error (MSE):
    $\text{MSE}(\hat{\rho}(\pi_e)) := \frac{1}{M} \sum_{i=1}^M \frac{\left(\rho(\pi_e) - \hat{\rho_i}(\pi_e)\right)^2}{\left(\rho(\pi_e) - \bar{r}_{\mathcal{D}_i}\right)^2}$, where  $\hat{\rho_i}(\pi_e)$ is computed using dataset $\mathcal{D}_i$ and $\bar{r}_{\mathcal{D}_i}$
    is the average reward in $\mathcal{D}_i$.
 
    \subsubsection{Marginalized Importance Sampling (MIS)}
    In this work, we focus on MIS methods, which evaluate $\pi_e$ by using
    the ratio between $d_{\pi_e}$ and $d_{\mathcal{D}}$. That is,
    MIS methods evaluate $\pi_e$ by estimating 
    $\rho(\pi_e):=  \E_{(s, a)\sim d_{\mathcal{D}}, r\sim \mathcal{R}(s,a)}[\zeta(s,a) r]$, where
    \begin{equation*}
        \zeta(s,a) := \frac{d_{\pi_e}(s,a)}{d_{\mathcal{D}}(s,a)}
    \end{equation*}
    is the state-action density ratio for state-action pair $(s,a)$
    and $d_\pi(s,a) = d_\pi(s)\pi(a|s)$.
    When the true $\zeta$ is known, the empirical estimate of $\rho(\pi_e)$ is:
    \begin{align}
        \hat{\rho}(\pi_e) &:= \frac{1}{N}\sum_{i=1}^N \zeta(s_i, a_i) r(s_i,a_i)
        \label{eq:qiang_estimator}
    \end{align}
    where $N$ is the number of samples. In practice, however,
    $\zeta$ is unknown and must be estimated.
    
    One set of $\zeta$-estimation algorithms, which have also
    shown potential for good OPE performance \cite{voloshin2021opebench}, is the DICE family \cite{nachum2020bestdice}.  While
    there are many variations, the general DICE optimization problem is:
    \begin{equation}
        \begin{split}
        &\max_{\zeta:\sset\times A\to\mathbb{R}}\min_{\nu:\sset\times A\to\mathbb{R}}J(\zeta, \nu) :=\\
            &\E_{(s, a, s')\sim d_{\mathcal{D}}, a'\sim\pi_e}[\zeta(s,a) (\nu(s,a) - \gamma\nu(s', a'))]\\
            &- (1 - \gamma)\E_{s_0\sim d_0, a_0\sim\pi_e}[\nu(s_0, a_0)] 
        \end{split}
        \label{eq:final_gr_obj}
    \end{equation}
    where the solution to the optimization, 
    $\zeta^*(s,a)$, are the true ratios. The estimator we present in Section
    \ref{sec:abs_mis} builds upon the DICE framework.

\subsection{State Abstractions}
    \label{sec:bg_state_abs}
    We define a state abstraction function as a mapping $\phi:\sset\to\sset^\phi$, where  $\sset$ is called the \textit{ground} state-space and $\sset^\phi$ is called the \textit{abstract} state-space.
    % with $\phi(s) \in \sset^\phi$, where $s\in\sset$ is a \textit{ground} state. 
    We consider state abstraction functions that
    partition the ground state-space into disjoint sets.
    %$\{\phi^{-1}(s^\phi) | s^\phi\in\sset^\phi\}$ 
    % We then have $\phi$ partitioning the ground 
    % state-space into disjoint sets $\{\phi^{-1}(s^\phi) | s^\phi\in\sset^\phi\}$. 

    We can use $\phi$ to project the original MDP into a new abstract MDP
    with the same action-space $\aset$ and reward and transition dynamics functions defined as:
    \begin{align*}
        \mathcal{R}^\phi(s^\phi, a) &= \sum_{s\in\phi^{-1}(s^\phi)}w(s)\mathcal{R}(s,a)\\ 
        P^\phi(s'^{\phi} | s^\phi, a) &= \sum_{s\in\phi^{-1}(s^\phi), s'\in\phi^{-1}(s'^{\phi})}w(s)P(s'|s,a)
    \end{align*}
    where $w:\sset\to[0,1]$ is a ground state weighting function where for each
    $s^\phi\in\mathcal{S}^\phi$, $\sum_{s\in\phi^{-1}(s^\phi)}w(s) = 1$ \cite{li2006stateabs}.
    Similarly a policy can be transformed
    into its abstract equivalent as:
    \begin{align*}
        \pi^\phi(a|s^\phi) &= \sum_{s\in\phi^{-1}(s^\phi)}w(s)\pi(a|s). 
    \end{align*}
    We define the following 
    state-weighting function for an arbitrary policy $\pi$: $w_\pi(s) = \frac{d_\pi(s)}{\sum_{s'\in \phi^{-1}(s^\phi)} d_\pi(s')}$ and only consider abstractions that satisfy:
    \begin{assumption} [Reward distribution equality] $\forall s_1,s_2\in \sset$ such that $\phi(s_1) = \phi(s_2)$, $\forall a, \mathcal{R}(s_1,a) = \mathcal{R}(s_2, a)$.
    \label{assumption:reward_equality}
    \end{assumption}
    Assumption \ref{assumption:reward_equality} implies that, regardless
    of the choice of the state-weighting function, for
    given action $a$, $\forall s \in s^\phi$, $\mathcal{R}^\phi(s^\phi,a) = \mathcal{R}(s,a)$ 
    i.e. the reward distribution of an abstract state 
    equals that of the ground states within that abstract state.
    % For our purpose of OPE, $\phi$ projects
    % the ground states in $\mathcal{D}$ into the 
    % abstract state-space -- $\mathcal{D} := \{(s, a, r, s')\}
    % \to \mathcal{D}^\phi := \{(s^\phi, a, r, s'{^\phi})\}$ where $s^\phi = \phi(s)$.

\section{Abstract Marginalized Importance Sampling}
\label{sec:abs_mis}
Marginalized IS methods may suffer from high variance in high-dimensional state-spaces. To potentially reduce this high variance, we propose
to first use $\phi$ to project $\mathcal{D}$ into
the abstract state-space to obtain: $\mathcal{D}^\phi \coloneqq \{(s^\phi, a, r^\phi, s'{^\phi})\}$ where $s^\phi = \phi(s)$ and $r^\phi(s^\phi,a) = r(s,a)\forall s\in s^\phi$, and then use the following estimator 
on $\mathcal{D}^\phi$ to estimate $\rho(\pi_e^\phi)$:
\begin{definition}[Abstract estimator] We define our estimator of
$\rho(\pi^\phi_e)$ as follows:
    \begin{align}
        \hat{\rho}(\pi^\phi_e) &:= \frac{1}{N}\sum_{i=1}^N \frac{d_{\pi^\phi_e}(s_i^\phi, a_i)}{d_{\mathcal{D}^\phi}(s_i^\phi, a_i)}r^\phi(s_i^\phi,a_i)
        \label{eq:our_estimator}
    \end{align}
where $N$ is the number of samples, $d_{\pi^\phi}(s^\phi, a) = d_{\pi^\phi}(s^\phi)\pi^\phi(a|s^\phi) $ with $d_{\pi^\phi}(s^\phi) = \sum_{s\in\phi^{-1}(s^\phi)}d_{\pi}(s)$ and $\pi^\phi$ constructed
using $w_\pi$.
\end{definition}
%\sum_{s\in\phi^{-1}}d_{\pi}(s)\pi(a|s)

In the remainder of this section, we first 
give an example to build intuition for why the abstract
ratios $\frac{d_{\pi^\phi_e}(s^\phi, a)}{d_{\mathcal{D}^\phi}(s^\phi, a)}$ can have lower variance than the ground ratios and then show theoretically that the OPE estimator given in Equation (\ref{eq:our_estimator}) is strongly consistent and 
can produce lower variance OPE estimates of $\rho(\pi_e)$ than the ground equivalent (Equation (\ref{eq:qiang_estimator})) when the true ratios are known. Finally, when the true abstract ratios are unknown, we 
% present the challenges in estimating the abstract ratios from data in practice, 
identify sufficient conditions under which unbiased estimation of the ratios is possible and adapt an existing DICE algorithm to estimate them.
% In this section, we: 1) provide intuition and proof that the
% abstraction-based MIS ratios are higher quality than the ground equivalents; 2) propose an OPE
% abstract state-based OPE estimator with 
% desirable statistical properties; and 3) construct a DICE algorithm
% to estimate the abstract state-action ratios.

\subsection{A Hard Example for Ground MIS Ratios}
    \begin{figure}[H]
        \centering
            \includegraphics[scale=0.375]{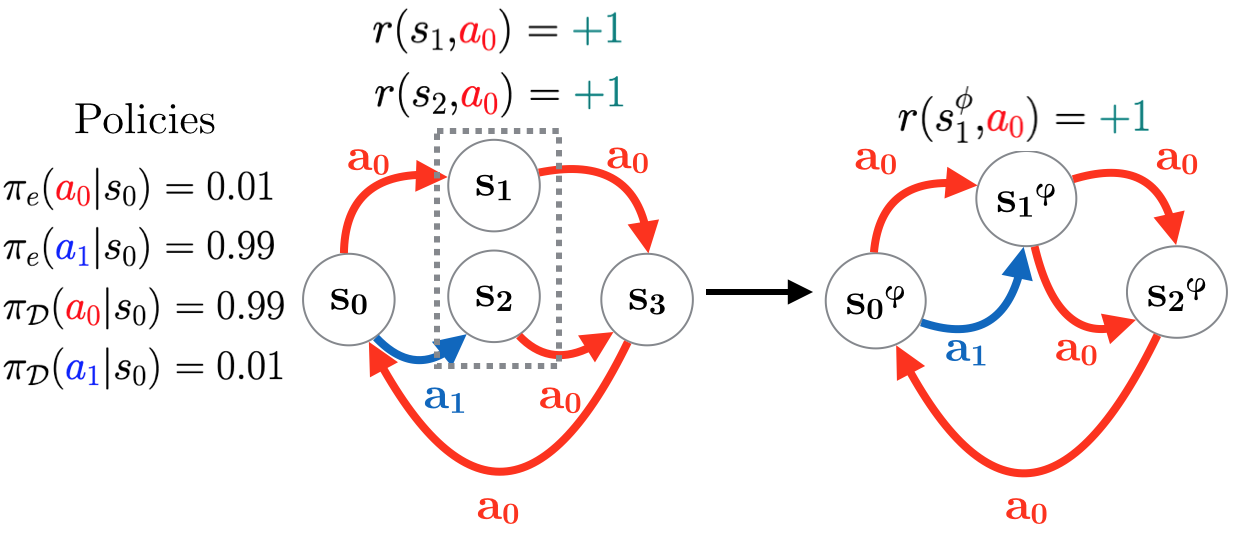}
        \caption{\footnotesize TwoPath MDP where ground density ratios for
        $(s_1, a_0)$ and $(s_2, a_0)$ are
        high variance. However, upon aggregation of equivalent
        states (grey dotted lines), the abstract 
        density ratio of $(s^\phi_1, a_0)$ is 
        low variance.}
        \label{fig:hard_eg}
    \end{figure}
    We present a hard example for ground MIS ratios in Figure \ref{fig:hard_eg} that
    provides intuition for why the abstract
    MIS ratios can have lower variance ratios than the ground ratios. 
    Consider two symmetric policies, $\pi_e$ and $\pi_\mathcal{D}$, executed
    in the ground MDP (left side). In this example, the high variance of
    the true state-action density ratios  
    $\frac{d_{\pi_e}(s_1, a_0)}{d_{\pi_\mathcal{D}}(s_1, a_0)}\approx 0$
    and $\frac{d_{\pi_e}(s_2, a_0)}{d_{\pi_\mathcal{D}}(s_2, a_0)}\approx 100$ can
    lead to high variance estimates of $\rho(\pi_e)$. Notice, however, that
    states $s_1$ and $s_2$ are essentially 
    equivalent i.e. $r(s_1, a) = r(s_2, a)\forall a\in\aset$ and can be aggregated
    together into a single state, $s^\phi_1$ (Assumption \ref{assumption:reward_equality}). We find that the
    state-action density ratio in this abstract MDP (right side)  $\frac{d_{\pi^\phi_e}(s_1^\phi, a_0)}{d_{\pi^\phi_\mathcal{D}}(s_1^\phi, a_0)} = 1$ is of low variance, which can lead to
    low variance estimates of $\rho(\pi_e)$. 
    
    In general, we prove that
    the abstract ratios are guaranteed to have variance at most that of
    the
    ground ratios (proof in Appendix
    \ref{app:proofs}):
    \begin{restatable}{theorem}{theoremOursLowerVarianceRatios}
    $\text{Var}\left(\frac{d_{\pi^\phi_e}(s^\phi, a)}{d_{\mathcal{D}^\phi}(s^\phi, a)}\right)\leq \text{Var}\left(\frac{d_{\pi_e}(s, a)}{d_{\mathcal{D}}(s, a)}\right)$
    \label{theorem:our_lower_var_ratios}
    \end{restatable}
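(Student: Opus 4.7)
The plan is to view the inequality as an instance of the Rao--Blackwell / law-of-total-variance phenomenon: the abstract ratio is precisely the conditional expectation of the ground ratio given the abstract state, so replacing the ground ratio by the abstract ratio can only reduce variance.

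First I would make the sampling distributions explicit. Both variances are taken over the sampling distribution of the data: the ground variance is over $(s,a)\sim d_{\mathcal{D}}$ and the abstract variance is over $(s^\phi,a)\sim d_{\mathcal{D}^\phi}$. Using the specific weighting function $w_{\pi_{\mathcal{D}}}(s)=d_{\mathcal{D}}(s)/d_{\mathcal{D}^\phi}(\phi(s))$ that the paper fixes, I would verify the clean marginalization identity
\begin{equation*}
d_{\mathcal{D}^\phi}(s^\phi,a)\;=\;d_{\mathcal{D}^\phi}(s^\phi)\,\pi_{\mathcal{D}}^\phi(a\mid s^\phi)\;=\;\sum_{s\in\phi^{-1}(s^\phi)} d_{\mathcal{D}}(s,a),
\end{equation*}
and the analogous identity for $\pi_e$. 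This is the step that makes everything work; it relies on choosing the state-weighting consistently with the data distribution, and without it the abstract ratio would not have the conditional-expectation interpretation.

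Next I would show the Rao--Blackwell identity. Conditioning $(s,a)\sim d_{\mathcal{D}}$ on the event $\{\phi(s)=s^\phi,\,a\}$ gives $\Pr(s\mid s^\phi,a)=d_{\mathcal{D}}(s,a)/d_{\mathcal{D}^\phi}(s^\phi,a)$ by the marginalization identity. Computing the conditional expectation of the ground ratio,
\begin{equation*}
\mathbb{E}\!\left[\tfrac{d_{\pi_e}(s,a)}{d_{\mathcal{D}}(s,a)}\,\Big|\,\phi(s)=s^\phi,a\right]
=\sum_{s\in\phi^{-1}(s^\phi)}\tfrac{d_{\mathcal{D}}(s,a)}{d_{\mathcal{D}^\phi}(s^\phi,a)}\cdot\tfrac{d_{\pi_e}(s,a)}{d_{\mathcal{D}}(s,a)}
=\tfrac{d_{\pi_e^\phi}(s^\phi,a)}{d_{\mathcal{D}^\phi}(s^\phi,a)},
\end{equation*}
where the last equality again uses the marginalization identity, this time for $\pi_e$. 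Hence the abstract ratio is exactly the conditional expectation of the ground ratio.

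Finally, I would apply the law of total variance with respect to the $\sigma$-algebra generated by $(\phi(s),a)$:
\begin{equation*}
\mathrm{Var}\!\left[\tfrac{d_{\pi_e}(s,a)}{d_{\mathcal{D}}(s,a)}\right]
=\mathbb{E}\!\left[\mathrm{Var}\!\left(\tfrac{d_{\pi_e}(s,a)}{d_{\mathcal{D}}(s,a)}\,\Big|\,\phi(s),a\right)\right]
+\mathrm{Var}\!\left[\tfrac{d_{\pi_e^\phi}(s^\phi,a)}{d_{\mathcal{D}^\phi}(s^\phi,a)}\right],
\end{equation*}
and the first (non-negative) term gives the inequality. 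The main obstacle is the bookkeeping in the first paragraph: one has to confirm that the choice $w=w_{\pi_{\mathcal{D}}}$ on the data side, together with the definitions $d_{\pi^\phi}(s^\phi,a)=d_{\pi^\phi}(s^\phi)\pi^\phi(a\mid s^\phi)$ and $d_{\pi^\phi}(s^\phi)=\sum_{s\in\phi^{-1}(s^\phi)}d_\pi(s)$, makes the abstract joint equal to the literal marginal of the ground joint; once this alignment is in place, the rest is a standard conditioning argument.
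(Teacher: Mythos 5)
Your proposal is correct, and it takes a genuinely different and more economical route than the paper. The paper reduces the claim to a comparison of second moments (using that both ratios have unit mean under the data distribution), groups terms by abstract state--action pair, and then proves the per-group inequality
\begin{equation*}
\sum_{s\in\phi^{-1}(s^\phi)}\frac{\bigl(d_{\pi_e}(s)\pi_e(a|s)\bigr)^2}{d_{\mathcal{D}}(s)\pi_{\mathcal{D}}(a|s)}\;\ge\;\frac{\bigl(d_{\pi^\phi_e}(s^\phi)\pi^\phi_e(a|s^\phi)\bigr)^2}{d_{\mathcal{D}^\phi}(s^\phi)\pi^\phi_{\mathcal{D}}(a|s^\phi)}
\end{equation*}
by induction on $|\phi^{-1}(s^\phi)|$, which amounts to an inductive proof of a Cauchy--Schwarz-type inequality. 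Your argument subsumes all of that: once you verify the marginalization identity $d_{\mathcal{D}^\phi}(s^\phi,a)=\sum_{s\in\phi^{-1}(s^\phi)}d_{\mathcal{D}}(s,a)$ (and its analogue for $\pi_e$), which does follow from the paper's definitions $d_{\pi^\phi}(s^\phi,a)=d_{\pi^\phi}(s^\phi)\pi^\phi(a|s^\phi)$ with $\pi^\phi$ built from $w_\pi$, the abstract ratio is literally $\E\bigl[\zeta(s,a)\mid\phi(s),a\bigr]$ and the law of total variance finishes the proof in one line. Your route also makes the equality conditions stated below the theorem transparent (equality holds exactly when the expected conditional variance vanishes, i.e., when the ground ratio is constant on each abstract class), and it does not require separately establishing that both ratios have mean one, since the tower property handles the means automatically. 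What the paper's elementary induction buys in exchange is a self-contained argument that is reused nearly verbatim in the proof of Theorem~\ref{theorem:our_variance}, where the reward factor is carried along inside the same per-group inequality; your conditioning argument would extend there too, but only under the same Assumption~\ref{assumption:reward_equality} that lets the reward be pulled out as a function of $(\phi(s),a)$ alone.
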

    % \begin{proof}
    % (Sketch). We first apply: $\text{Var}(X) = \E[X^2] - (\E[X])^2$. Then by induction,
    % we show
    % that the difference between the expanded ground and
    % abstract terms is greater than or equal to $0$.
    % \end{proof}
    where equality holds if either or both of the following are true: 1) $\phi$ is the identity
    function i.e. $\phi(s) = s, \forall s\in\sset$ and 2) if $\forall s_1,s_2\in \sset$ such that $\phi(s_1) = \phi(s_2) = s^\phi$
    and for a given action $a$,
    $\frac{d_{\pi_e}(s_1, a)}{d_{\mathcal{D}}(s_1, a)} = \frac{d_{\pi_e}(s_2, a)}{d_{\mathcal{D}}(s_2, a)}, \forall s^\phi \in \sset^\phi, a\in\aset$. Thus, Theorem \ref{theorem:our_lower_var_ratios} implies that
    projecting $\sset \to \sset^\phi$ can lower the variance of density ratios.
    
    % Additionally, as we will
    % prove, these ratios produce consistent estimates
    % of $\rho(\pi_e)$. 
    % The goal
    % of abstract MIS is to leverage state abstraction 
    % to learn higher quality IS correction ratios
    % for more data-efficient estimates of $\rho(\pi_e)$.
\subsection{MIS OPE with True Abstract Ratios}
    We now present the statistical properties of our
    estimator assuming it has access to the true abstract state-action ratios.
    Due to space constraints, we defer proofs to Appendix
    \ref{app:proofs}.
    
    We prove our abstract estimator is unbiased (Theorem \ref{theorem:our_unbiased} in Appendix \ref{app:proofs}) and strongly consistent (Theorem \ref{theorem:our_mse_consistency} and Corollary \ref{theorem:our_consistency}):
    \begin{restatable}{theorem}{theoremOursMSEConsistent} Our estimator, $\hat{\rho}(\pi^\phi_e)$,
    given in Equation \ref{eq:our_estimator} is an asymptotically 
    consistent estimator of $\rho(\pi_e)$ in terms of MSE: $\lim_{N\to\infty}\E[(\hat{\rho}(\pi^\phi_e) - \rho(\pi_e))^2] = 0$.
    \label{theorem:our_mse_consistency}
    \end{restatable}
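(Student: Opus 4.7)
The plan is to apply the standard bias-variance decomposition
\begin{equation*}
\E\bigl[(\hat{\rho}(\pi^\phi_e) - \rho(\pi_e))^2\bigr]
   = \bigl(\E[\hat{\rho}(\pi^\phi_e)] - \rho(\pi_e)\bigr)^2
   + \text{Var}\bigl(\hat{\rho}(\pi^\phi_e)\bigr),
\end{equation*}
and show that both terms vanish as $N\to\infty$. The squared bias is already $0$ at every finite $N$ by the earlier unbiasedness result (Theorem \ref{theorem:our_unbiased}), which follows from Assumption \ref{assumption:reward_equality} (so that $r^\phi(s^\phi,a)=r(s,a)$ for every $s\in\phi^{-1}(s^\phi)$) together with the aggregation identities $d_{\pi^\phi_e}(s^\phi,a) = \sum_{s\in\phi^{-1}(s^\phi)}d_{\pi_e}(s,a)$ and the analogous one for $d_{\mathcal{D}^\phi}$. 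I would simply cite that result and reduce the theorem to showing $\text{Var}(\hat{\rho}(\pi^\phi_e))\to 0$.

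For the variance, I would write $\hat{\rho}(\pi^\phi_e) = \frac{1}{N}\sum_{i=1}^N X_i$ where
\begin{equation*}
X_i := \frac{d_{\pi^\phi_e}(s_i^\phi,a_i)}{d_{\mathcal{D}^\phi}(s_i^\phi,a_i)}\, r^\phi(s_i^\phi,a_i),
\end{equation*}
and, following the standard OPE modelling convention of treating the tuples in $\mathcal{D}$ as iid draws of $(s,a)\sim d_{\mathcal{D}}$, conclude $\text{Var}(\hat{\rho}(\pi^\phi_e)) = \text{Var}(X_1)/N$. It then suffices to show $\text{Var}(X_1)<\infty$. Bounded rewards together with the coverage assumption $d_{\pi_e}(s,a)>0\Rightarrow d_{\mathcal{D}}(s,a)>0$---which lifts automatically to the abstract pairs via the aggregation identity for $d_{\mathcal{D}^\phi}$---guarantee that the abstract ratio is well-defined everywhere $d_{\pi^\phi_e}$ puts mass. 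Under the mild regularity already implicit in the OPE setup (finite support, or a uniform lower bound on $d_{\mathcal{D}^\phi}$ over the support of $d_{\pi^\phi_e}$), $X_1$ is bounded, hence square-integrable, and the $1/N$ factor drives the variance to $0$.

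The main obstacle I anticipate is not any individual step but the iid-samples assumption: $\mathcal{D}$ is actually generated as $m$ trajectories of length $T$, so within-trajectory samples are dependent. The cleanest resolution is to adopt the standard OPE convention of viewing transitions as iid draws from $d_{\mathcal{D}}$, which reduces the variance argument to the one-liner above. Otherwise one would invoke an ergodic law of large numbers for the Markov chain induced by the behavior policy and combine it with the uniform $L^2$ bound on $X_1$ just described, using uniform integrability to upgrade convergence in probability to $L^2$-convergence; that upgrade is where any genuine technical care is needed.
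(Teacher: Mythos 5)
Your proposal matches the paper's proof essentially step for step: the same bias--variance decomposition, an appeal to the unbiasedness result (Theorem \ref{theorem:our_unbiased}) to kill the bias term, and boundedness of the abstract ratio and reward (via the coverage and bounded-reward assumptions) to argue the variance of the sample mean decays like $1/N$. If anything you are more careful than the paper, which silently relies on the i.i.d.\ treatment of $\mathcal{D}$ to get the $1/(mT)$ variance scaling, whereas you flag that dependence issue explicitly and sketch the ergodic-LLN fallback.
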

    % \begin{proof}
    % (Sketch). We expand the MSE in terms of bias and variance, and
    % then show
    % that the bias is $0$ and variance tends to $0$ with more data.
    % \end{proof}
    We also compare the variances between our
    abstract estimator and the ground equivalent. If we assume that
     $\mathcal{D}$ is i.i.d as done in previous work \cite{sutton2008dyna, masatoshi2019minimax, nachum2019dualdice, zhang2020gendice}, then
    $\text{Var}(\hat{\rho}(\pi^\phi_e))\leq \text{Var}(\hat{\rho}(\pi_e))$,
    where equality holds only under the 
    same conditions as described below Theorem
    \ref{theorem:our_lower_var_ratios}. In general, however, variance
    reduction depends on the covariances between the weighted per-step rewards \cite{liu2019cursevariance}:
    \begin{restatable}{theorem}{theoremOursVariance} If Assumption \ref{assumption:reward_equality} and if for any fixed $1\leq t < k \leq T$, $\text{Cov}\left(\frac{d_{\pi^\phi_e}(s_t^\phi,a_t)}{d_{\mathcal{D}^\phi}(s_t^\phi,a_t)}r^\phi(s_t^\phi, a_t), \frac{d_{\pi^\phi_e}(s_k^\phi,a_k)}{d_{\mathcal{D}^\phi}(s_k^\phi,a_k)}r^\phi(s_k^\phi, a_k)\right) \leq
    \text{Cov}\left(\frac{d_{\pi_e}(s_t,a_t)}{d_{\mathcal{D}}(s_t,a_t)}r(s_t, a_t), \frac{d_{\pi_e}(s_k,a_k)}{d_{\mathcal{D}}(s_k,a_k)}r(s_k, a_k)\right)$ hold then $\text{Var}(\hat{\rho}(\pi^\phi_e))\leq \text{Var}(\hat{\rho}(\pi_e))$.
    \label{theorem:our_variance}
    \end{restatable}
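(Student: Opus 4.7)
My plan is to decompose the variance using the trajectory structure of $\mathcal{D}$, reducing the claim to a per-step variance inequality and per-pair covariance inequalities that can be dispatched separately. Indexing the $N=mT$ samples by trajectory $i$ and time $t$, let $X^\phi_{i,t}$ denote the $i,t$-th summand of Equation (\ref{eq:our_estimator}) and $X_{i,t}$ its ground analogue from Equation (\ref{eq:qiang_estimator}). Since the $m$ trajectories are drawn independently, all cross-trajectory covariances vanish and
\begin{align*}
\text{Var}(\hat{\rho}(\pi^\phi_e)) = \tfrac{1}{mT^2}\Bigl(\sum_t \text{Var}(X^\phi_t) + 2\sum_{t<k}\text{Cov}(X^\phi_t,X^\phi_k)\Bigr),
\end{align*}
with an analogous identity for $\text{Var}(\hat{\rho}(\pi_e))$. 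It thus suffices to dominate each abstract summand by its ground counterpart.

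For the per-step terms, I would first invoke the unbiasedness result (Theorem \ref{theorem:our_unbiased}) so the two estimators share a common mean, reducing $\text{Var}(X^\phi_t)\le\text{Var}(X_t)$ to the second-moment comparison $\E[(X^\phi_t)^2]\le\E[(X_t)^2]$. Conditioning on $(\phi(s_t),a_t)$, Assumption \ref{assumption:reward_equality} forces $\E[r_t^2\mid s_t,a_t]$ to depend only on $(\phi(s_t),a_t)$, so it factors out of the inner expectation. A short calculation then shows that the abstract state-action ratio is exactly the conditional expectation of the ground state-action ratio under $d_{\mathcal{D}}$ given $(s^\phi,a)$, namely $\tfrac{d_{\pi^\phi_e}(s^\phi,a)}{d_{\mathcal{D}^\phi}(s^\phi,a)} = \E_{d_{\mathcal{D}}}\bigl[\tfrac{d_{\pi_e}(s,a)}{d_{\mathcal{D}}(s,a)} \,\big|\, \phi(s)=s^\phi,a\bigr]$. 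Jensen's inequality applied to $x\mapsto x^2$ then delivers the pointwise second-moment bound, and taking the outer expectation over $(s^\phi,a)\sim d_{\mathcal{D}^\phi}$ closes the step. The per-pair covariance terms $\text{Cov}(X^\phi_t,X^\phi_k)\le\text{Cov}(X_t,X_k)$ are supplied directly by the theorem's hypothesis (with $r^\phi$ and $r$ interchangeable in expectation under Assumption \ref{assumption:reward_equality}).

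The main obstacle will be the per-step variance inequality: one must simultaneously use Assumption \ref{assumption:reward_equality} to isolate $\E[r^2\mid s^\phi,a]$ through the conditioning \emph{and} recognize the abstract ratio as the conditional expectation of the ground ratio before Jensen's inequality becomes available. Once that step is in hand, summing the per-step and per-pair inequalities and dividing by $mT^2$ immediately delivers $\text{Var}(\hat{\rho}(\pi^\phi_e))\le\text{Var}(\hat{\rho}(\pi_e))$.
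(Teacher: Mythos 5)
Your proposal is correct, and its skeleton matches the paper's: decompose $\text{Var}(\hat{\rho})$ into per-step variances plus pairwise covariances, dispatch the covariance terms by the theorem's hypothesis, and reduce the per-step comparison to a second-moment comparison via unbiasedness (the paper works with batch size $m=1$ and notes the extension to general $m$; your explicit remark that cross-trajectory covariances vanish is the same point). Where you genuinely diverge is in the per-step second-moment inequality, which is the technical heart of the proof. The paper groups the ground second moment by abstract class, factors out $r(s,a)^2$ using Assumption \ref{assumption:reward_equality}, and then proves $\sum_{s\in s^\phi}\frac{(d_{\pi_e}(s)\pi_e(a|s))^2}{d_{\pi_\mathcal{D}}(s)\pi_\mathcal{D}(a|s)} \geq \frac{(d_{\pi^\phi_e}(s^\phi)\pi^\phi_e(a|s^\phi))^2}{d_{\pi^\phi_\mathcal{D}}(s^\phi)\pi^\phi_\mathcal{D}(a|s^\phi)}$ by induction on $|s^\phi|$, minimizing a quadratic at each step. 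You instead observe that $d_{\pi^\phi}(s^\phi,a)=\sum_{s\in\phi^{-1}(s^\phi)}d_\pi(s,a)$ (which follows from the $w_\pi$-weighted definitions of $d_{\pi^\phi}$ and $\pi^\phi$), so the abstract ratio is the $d_{\mathcal{D}}$-conditional expectation of the ground ratio given $(\phi(s),a)$, and a single application of Jensen's inequality plus the tower property finishes the step. These are the same underlying inequality --- the paper's inductive claim $C^2\le SC'$ is exactly Cauchy--Schwarz, which is what conditional Jensen delivers --- but your route is shorter, avoids the induction entirely, and makes the equality condition transparent (equality iff the ground ratio is constant on each abstract class, matching the conditions stated after Theorem \ref{theorem:our_lower_var_ratios}). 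One minor point: the estimators use the mean reward $r(s,a)$, which under Assumption \ref{assumption:reward_equality} is literally constant on each abstract class, so you only need that constancy rather than a statement about $\E[r_t^2\mid s_t,a_t]$; this does not affect the validity of your argument.
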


\subsection{Estimating the Abstract Ratios}
    Thus far, we have assumed access to the true abstract ratios. However, in practice, these ratios are unknown and must be estimated from $\mathcal{D}^\phi$.
    In this section, we highlight the challenges in
     estimating the abstract ratios and identify sufficient conditions on $\phi$ that allow for accurate estimation. Once
    $\phi$ satisfies these conditions, any off-the-shelf state-action density
    estimation method can be used to estimate the abstract
    ratios. In this work, we focus on
    showing that DICE estimates the true abstract ratios. We note that the following new conditions on $\phi$ are only needed for unbiased estimation of the abstract ratios; accurate OPE on the abstracted MDP can still be realized under only Assumption \ref{assumption:reward_equality}.
    
    We first observe that evaluating $\pi_e$ using $\mathcal{D}$ is equivalent to evaluating $\pi^\phi_e$ using $\mathcal{D}^\phi$ if $\mathcal{D}^\phi$ is generated from an abstract MDP with transition dynamics constructed according to $w_{\pi_e}$. This equivalence is given by the following proposition:
    \begin{restatable}{proposition} {lemRpiToAbsRpi} If Assumption 
    \ref{assumption:reward_equality} holds, the average reward 
    of ground policy $\pi$ executed
    in ground MDP $\mathcal{M}$, $\rho(\pi)$, is equal to the average reward of
    abstract policy $\pi^\phi$ executed in abstract MDP $\mathcal{M}^\phi$
    constructed with $w_\pi$,
    $\rho(\pi^\phi)$. That is, $\rho(\pi) = \rho(\pi^\phi)$.
    \label{lemma:g_val_abs_val_equality}
    \end{restatable}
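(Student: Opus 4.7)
The plan is to expand both $\rho(\pi)$ and $\rho(\pi^\phi)$ as sums over state-action pairs, and then show that the choice of weighting function $w_\pi$ is precisely what makes the abstract joint occupancy $d_{\pi^\phi}(s^\phi,a)$ equal the aggregated ground joint occupancy $\sum_{s\in\phi^{-1}(s^\phi)} d_\pi(s,a)$. Once this is established, Assumption~\ref{assumption:reward_equality} lets us pull the common reward $r^\phi(s^\phi,a) = r(s,a)$ outside the inner sum and recover $\rho(\pi)$.

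Concretely, I would proceed in the following order. First, write
$\rho(\pi^\phi) = \sum_{s^\phi, a} d_{\pi^\phi}(s^\phi) \pi^\phi(a\mid s^\phi)\, r^\phi(s^\phi,a)$
and substitute the definition $\pi^\phi(a\mid s^\phi) = \sum_{s\in\phi^{-1}(s^\phi)} w_\pi(s)\pi(a\mid s)$ with $w_\pi(s) = d_\pi(s)/d_{\pi^\phi}(s^\phi)$. The $d_{\pi^\phi}(s^\phi)$ factor cancels, yielding
$d_{\pi^\phi}(s^\phi) \pi^\phi(a\mid s^\phi) = \sum_{s\in\phi^{-1}(s^\phi)} d_\pi(s)\pi(a\mid s) = \sum_{s\in\phi^{-1}(s^\phi)} d_\pi(s,a).$
Second, invoke Assumption~\ref{assumption:reward_equality}, which guarantees $r^\phi(s^\phi,a) = r(s,a)$ for every $s\in\phi^{-1}(s^\phi)$, and move it inside the inner sum. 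Third, recognize that $\{\phi^{-1}(s^\phi) : s^\phi\in\sset^\phi\}$ partitions $\sset$, so the double sum $\sum_{s^\phi}\sum_{s\in\phi^{-1}(s^\phi)}$ collapses into a single sum over $s\in\sset$, giving $\sum_{s,a} d_\pi(s,a) r(s,a) = \rho(\pi)$.

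The main obstacle is a subtle consistency check hidden in the very first step: I used the identity $d_{\pi^\phi}(s^\phi) = \sum_{s\in\phi^{-1}(s^\phi)} d_\pi(s)$, but $d_{\pi^\phi}$ is really the occupancy measure of $\pi^\phi$ inside the abstract MDP $\mathcal{M}^\phi$ whose dynamics $P^\phi$ and initial distribution are themselves defined through $w_\pi$. So I need to confirm that when the weighting function matches the policy whose occupancy is being aggregated, the abstract-MDP occupancy really does coincide with the projected ground occupancy. This is the classical consistency property of $\phi$-irrelevance abstractions, and I would verify it either by a one-step induction on the Bellman flow equation, showing that the aggregated ground occupancy satisfies the abstract flow equation with the $w_\pi$-constructed $P^\phi$ and $\pi^\phi$, or by appealing directly to the Li--Walsh--Littman style result; once that is in place the rest of the derivation is a routine rearrangement of sums.
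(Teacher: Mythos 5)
Your proposal is correct and follows essentially the same route as the paper's proof: expand $\rho(\pi^\phi)$ as $\sum_{s^\phi,a} d_{\pi^\phi}(s^\phi)\pi^\phi(a|s^\phi)r^\phi(s^\phi,a)$, substitute $\pi^\phi(a|s^\phi)=\sum_{s\in\phi^{-1}(s^\phi)}w_\pi(s)\pi(a|s)$ so that the normalizer $\sum_{s'\in\phi^{-1}(s^\phi)}d_\pi(s')$ cancels against $d_{\pi^\phi}(s^\phi)$, invoke Assumption \ref{assumption:reward_equality} to replace $r^\phi(s^\phi,a)$ by $r(s,a)$ inside the inner sum, and collapse the partition $\{\phi^{-1}(s^\phi)\}$ back into a single sum over the ground state-space. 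The ``consistency check'' you flag as the main obstacle is moot here because the paper \emph{defines} $d_{\pi^\phi}(s^\phi) := \sum_{s\in\phi^{-1}(s^\phi)}d_\pi(s)$ in the definition of the abstract estimator rather than deriving it from the dynamics of $\mathcal{M}^\phi$, so no Bellman-flow verification is performed (or needed) in the paper's argument --- though your instinct is sound that such a verification would be required if $d_{\pi^\phi}$ were instead taken to be the occupancy induced by actually running $\pi^\phi$ under $P^\phi_{w_\pi}$.
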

    Proposition \ref{lemma:g_val_abs_val_equality} suggests that we can estimate $\rho(\pi_e)$ by applying any OPE algorithm to evaluate $\pi_e^\phi$ using $\mathcal{D}^\phi$ if the abstract transition dynamics of $\mathcal{D}^\phi$ are distributed according to $P_{w_{\pi_e}}^\phi$. Unfortunately, since $w_{\pi_e}$ is unknown, two challenges arise.
    Fortunately, there are special cases where unbiased estimation of the abstract ratios is still possible using only existing MIS algorithms.
    % However, since $\mathcal{D}^\phi$ is off-policy data, it may not be generated from an abstract MDP with transition dynamics constructed according to $w_{\pi_e}$. Thus, we face the following challenge:
    
    \textbf{Challenge 1: Transition dynamics distribution shift}. The off-policy data $\mathcal{D}^\phi:= \{(s^\phi, a, r^\phi, s'^\phi)\}$ is distributed in the following way: $(s^\phi, a)\sim d_{\mathcal{D}^\phi},
    r^\phi\sim\mathcal{R}^\phi, s'^{\phi} \sim
    P^{\phi}_{w_\mathcal{D}}(s^\phi, a)$ where $P_{w_\mathcal{D}}^\phi$ are the transition dynamics
    of the abstract MDP constructed with $w_\mathcal{D}$
    as the state-weighting function. Thus, in addition
    to the original policy distribution shift problem, we also encounter a
    \textit{transition dynamics distribution
    shift} problem due to the projection where we want to evaluate $\pi^\phi_e$
    in an MDP with $P_{w_{\pi_e}}^\phi$, but we only have
    samples of data generated in an MDP with $P_{w_\mathcal{D}}^\phi$. Moreover, since $w_{\pi_e}$ is unknown, we cannot compute $P_{w_{\pi_e}}^\phi$ and  
    correct the distribution shift, say through importance sampling \cite{precup2000ISOPE}, as we would correct for policy distribution shift. However, one condition on $\phi$ that will avoid the transition distribution shift is:
    \begin{assumption} [Transition dynamics similarity] $\forall s_1,s_2\in \sset$ such that $\phi(s_1) = \phi(s_2)$, $\forall a\in\mathcal{A}, x\in\sset^\phi$, we have $\sum_{s'\in \phi^{-1}(x)}P(s'|s_1, a) = \sum_{s'\in \phi^{-1}(x)}P(s'|s_2, a)$.
    \label{assumption:trans_bisim}
    \end{assumption}
    Together with Assumption \ref{assumption:reward_equality},
    $\phi$ is now the so-called \textit{bisimulation} abstraction \cite{ferns2011bisim, castro2020bisim}. This
    property of $\phi$ eliminates the transition dynamics 
    distribution shift since now $P_{w_{\pi_e}}^\phi(s'^\phi|s^\phi, a) = P_{w_\mathcal{D}}^\phi(s'^\phi|s^\phi, a) = P^\phi(s'^\phi | s^\phi, a) = \sum_{s'\in s'^\phi}P(s'|s, a) \forall s\in s^\phi, \forall a\in \aset$ (applying Assumption \ref{assumption:trans_bisim} and definition of $P^\phi$ from Section \ref{sec:bg_state_abs}). Thus, we have $\mathcal{D}^\phi$ distributed with $P_{w_{\pi_e}}^\phi = P^\phi$, which then allows us to
    apply any MIS algorithm to compute the abstract state-action
    ratios using $\mathcal{D}^\phi$. 
    
    \textbf{Challenge 2: Inaccessible $\pi^\phi_e$}. To the best of our knowledge, all existing MIS algorithms require access to the evaluation policy to estimate the density ratios. However, in the abstract MDP, $\pi^\phi_e$ is inaccessible since $w_{\pi_e}$ is unknown. To overcome this issue, we identify the following condition on $\phi$:
    \begin{assumption} [$\pi_e$ action-distribution equality]
    $\forall s_1,s_2\in \sset$ such that $\phi(s_1) = \phi(s_2)$, $\pi_e(s_1) = \pi_e(s_2)$
    \label{assumption:pi_act_equality}
    \end{assumption}
    Assumption \ref{assumption:pi_act_equality} then gives us
    $\pi^\phi_e(s^\phi) = \pi_e(s), \forall s\in s^\phi$ (applying Assumption \ref{assumption:pi_act_equality} and definition of $\pi^\phi$ from Section \ref{sec:bg_state_abs}), which
    allows us to simulate sampling from $\pi^\phi_e(\cdot | \phi(s))$ by just
    sampling from $\pi_e(\cdot|s)$.

    Given a $\phi$ with these properties, we can compute the abstract ratios needed to estimate $\rho(\pi_e)$ by applying a suitable MIS algorithm to $\mathcal{D}^\phi$.
    % , the data generated from an abstract MDP with dynamics $P_{w_{\pi_e}}^\phi$. 
    We use BestDICE \cite{nachum2020bestdice}, and call our algorithm AbstractBestDICE, which solves the following optimization problem:
    \begin{equation}
        \begin{split}
        &\min_{\nu, \lambda} \max_{\zeta} J(\nu, \zeta, \lambda) := -\E_{\mathcal{D}^\phi}\left[\frac{1}{2} \zeta(s^\phi,a)^2\right]\\ &+ \E_{\mathcal{D}^\phi}\left[\zeta(s^\phi,a) \left(\gamma\E_{a'\sim\pi^\phi_e}[\nu(s'^\phi, a')] 
        - \nu(s^\phi,a) - \lambda\right)\right] \\
        &+ (1 - \gamma) \E_{s^\phi_0 \sim d_{0^\phi}, a_0\sim \pi^\phi_e}[\nu(s^\phi_0, a_0)] + \lambda
        \end{split}
        \label{eq:main_opt}
    \end{equation}
    where, $\nu:\sset^\phi\times\aset\to \mathbb{R}$, $\lambda\in\mathbb{R}$, and $\zeta:\sset^\phi\times\aset\to \mathbb{R}_{\geq0}$. The solution to the optimization, $\zeta^*(s^\phi, a) = d_{\pi^\phi_e}(s^\phi, a)/d_{\mathcal{D}^\phi}(s^\phi, a)$ is the true abstract
    ratios. Since the derivation of AbstractBestDICE follows the same steps
    as BestDICE, we defer it to Appendix \ref{app:abs_dice_der}.
    
    We note that it may be difficult to validate  Assumptions \ref{assumption:trans_bisim}
    and \ref{assumption:pi_act_equality} in practice, which may result in
    loss of the consistency
    guarantee of Theorem \ref{theorem:our_mse_consistency}. Nevertheless, in Section \ref{sec:empirical_study} we show that AbstractBestDICE leads to accurate OPE in high-dimensional state-spaces even when assumptions may not hold.

\section{Empirical Study}
\label{sec:empirical_study}
We will now show how projecting $\sset\to\sset^\phi$ can produce more accurate OPE estimates in practice.
We answer the following questions:
\begin{enumerate}
    \item Do the true abstract ratios produce lower variance OPE
    estimates than the true ground ratios?
    \item Does AbstractBestDICE: (a) compute the true ratios when Assumptions
    \ref{assumption:reward_equality}, 
    \ref{assumption:trans_bisim}, and \ref{assumption:pi_act_equality} are satisfied and (b) produce data-efficient and stable estimates of $\rho(\pi_e)$ even when Assumptions \ref{assumption:trans_bisim} and \ref{assumption:pi_act_equality}  fail to hold?
    % \item Does $\phi$ reduce variance of learning
    % and final $\pi_e$ estimates?
    % \item Is the abstract version of DICE more robust to hyperparameter tuning?

\end{enumerate}

\subsection{Empirical Setup}
    In this section, we describe the algorithms and domains of our empirical
    study. Due to space constraints, we defer specific details to the appendix (\ref{app:tabular_exp} and \ref{app:func_approx_exp}).

\subsubsection{Algorithms}  We compare AbstractBestDICE
to ground BestDICE \cite{nachum2020bestdice}. As also reported by \citet{nachum2020bestdice, fu2021opebench}, we found in preliminary experiments (see Appendix \ref{app:func_approx_exp}) that BestDICE performed much better than other MIS methods such as DualDICE \cite{nachum2019dualdice}, Minimax-Weight Learning \cite{masatoshi2019minimax}, etc.

\subsubsection{Domains} We focus on high-dimensional state-space tasks, which have been known to be particularly challenging for
DICE methods \cite{fu2021opebench}. For each environment 
below we specify a fixed $\phi$.
\begin{itemize}
    \item \textbf{TwoPath MDP}. This MDP is pictured in Figure \ref{fig:hard_eg}. In this domain, Assumptions
    \ref{assumption:reward_equality}, \ref{assumption:trans_bisim}, and
    \ref{assumption:pi_act_equality} are satisfied. We also
    run the same experiments for when these assumptions are violated (see
    Appendix \ref{app:tabular_exp}).
    
    \item \textbf{Reacher} \cite{brockman2016gym}. A robotic arm tries
    to move to a goal location. Here, $s\in\mathbb{R}^{11}$ and $a\in\mathbb{R}^{2}$. 
    Since the reward function is the Euclidean distance
    between the arm and goal, $\phi$ extracts only 
    the arm-to-goal vector, and angular velocities from the ground state, resulting in $s^\phi\in\mathbb{R}^4$. 
    % % 
    \item \textbf{Walker2D} \cite{brockman2016gym}. A bi-pedal robot tries
    to move as fast as possible. Here, $s\in\mathbb{R}^{18}$ and $a\in\mathbb{R}^{6}$. 
    We use the Euclidean distance from the start location as the reward function and use a $\phi$ that extracts $x$ and $z$ coordinates and top angle of the walker's body, resulting in $s^\phi\in\mathbb{R}^3$.
    \item \textbf{Pusher} \cite{brockman2016gym}. A robotic arm
    tries to push an object to a goal location. Here, $s\in\mathbb{R}^{23}$ and $a\in\mathbb{R}^{7}$. Since the reward function is the Euclidean distance
    between object and arm and object and goal, $\phi$ extracts only object-to-arm and object-to-goal vectors, resulting in 
    $s^\phi\in\mathbb{R}^6$.
    \item \textbf{AntUMaze} \cite{fu2020d4rl}. This sparse-reward task requires
    an ant to move from one end of the U-shaped maze to the other end.
    Here, $s\in\mathbb{R}^{29}$ and $a\in\mathbb{R}^{8}$. We use the ``play"
    version where the goal location is fixed. Since the reward function is $+1$ only if the $2$D location of the ant is at 
    a certain Euclidean distance from the $2$D goal location, $\phi$ extracts only
    the $2$D coordinates of the ant, resulting in
    $s^\phi\in\mathbb{R}^2$.
\end{itemize}

For Reacher, Walker2D, Pusher, and AntUMaze, $\phi$ satisfies only Assumption
\ref{assumption:reward_equality}.

\subsection{Empirical Results}
    In this section, we describe our main empirical results; additional experiments can be found in appendix \ref{app:func_approx_exp}.
    % We describe the core results in this section,
    % but include additional experiments in the
    % appendix.
    
\subsubsection{True Ratios for OPE}
We conduct an experiment on the TwoPath MDP to estimate
$\rho(\pi_e)$ where
we apply the ground estimator given in Equation (\ref{eq:qiang_estimator}) and our abstract estimator given in Equation (\ref{eq:our_estimator}),
assuming \textit{both have access to their respective true ratios}. The
results of this experiment are illustrated in Figure \ref{fig:true_ratio_quality}. We can observe that the abstract estimator
 with the true abstract ratios produces substantially more
data-efficient and lower
variance OPE estimates for different batch sizes compared
to the ground equivalent.

\begin{figure}[!h]
    \centering
        \subfigure[MSE vs. Batch size (\# of trajectories). The vertical axis axis is log-scaled. Errors are computed over $15$ trials with $95\%$ confidence intervals. Lower is better. Since $\rho(\pi_e) = \rho(\pi_\mathcal{D})$ in this domain, we use regular MSE instead of relative.]{\label{fig:true_ratio_quality}\includegraphics[scale=0.17]{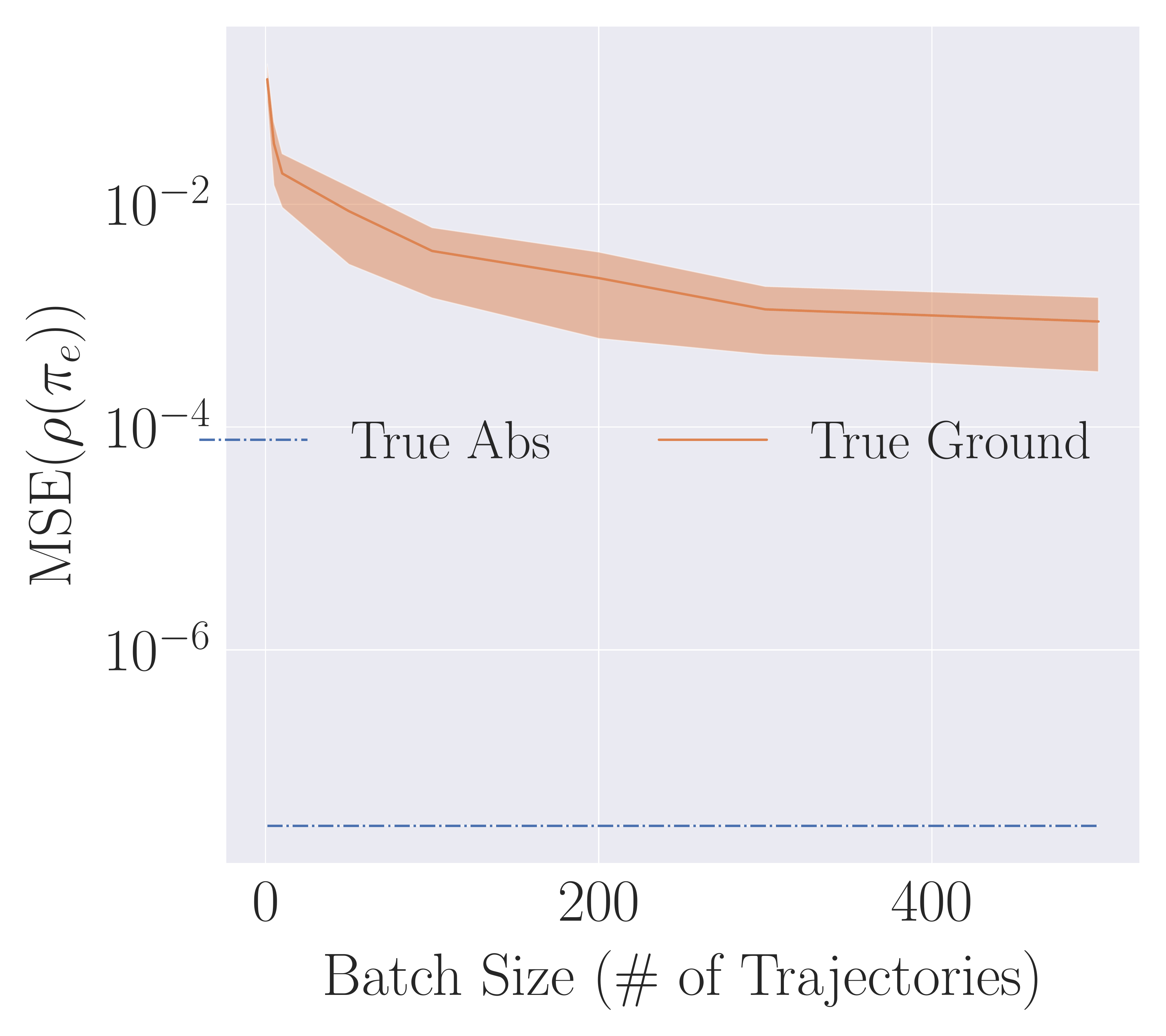}}
        \subfigure[Estimated $\hat{d}_{\pi^\phi_e}$ (vertical axis) vs. true $d_{\pi^\phi_e}$ (horizontal axis). Values are averaged over $15$ trials. We expect the dots to be as close
        to the 
        black line as possible. Each dot is for each $(s^\phi, a)$. Dots are only at extreme ends due to choice of $\pi_e$, $\pi_\mathcal{D}$, and ToyMDP design. Only $4$ out of $6$ dots are visible due to overlap between the dots for $(s_1^\phi, a_1)$ and 
        $(s_2^\phi, a_1)$, and $(s_1^\phi, a_0)$  and $(s_2^\phi, a_0)$.]{\label{fig:true_ratio_estimation}\includegraphics[scale=0.19]{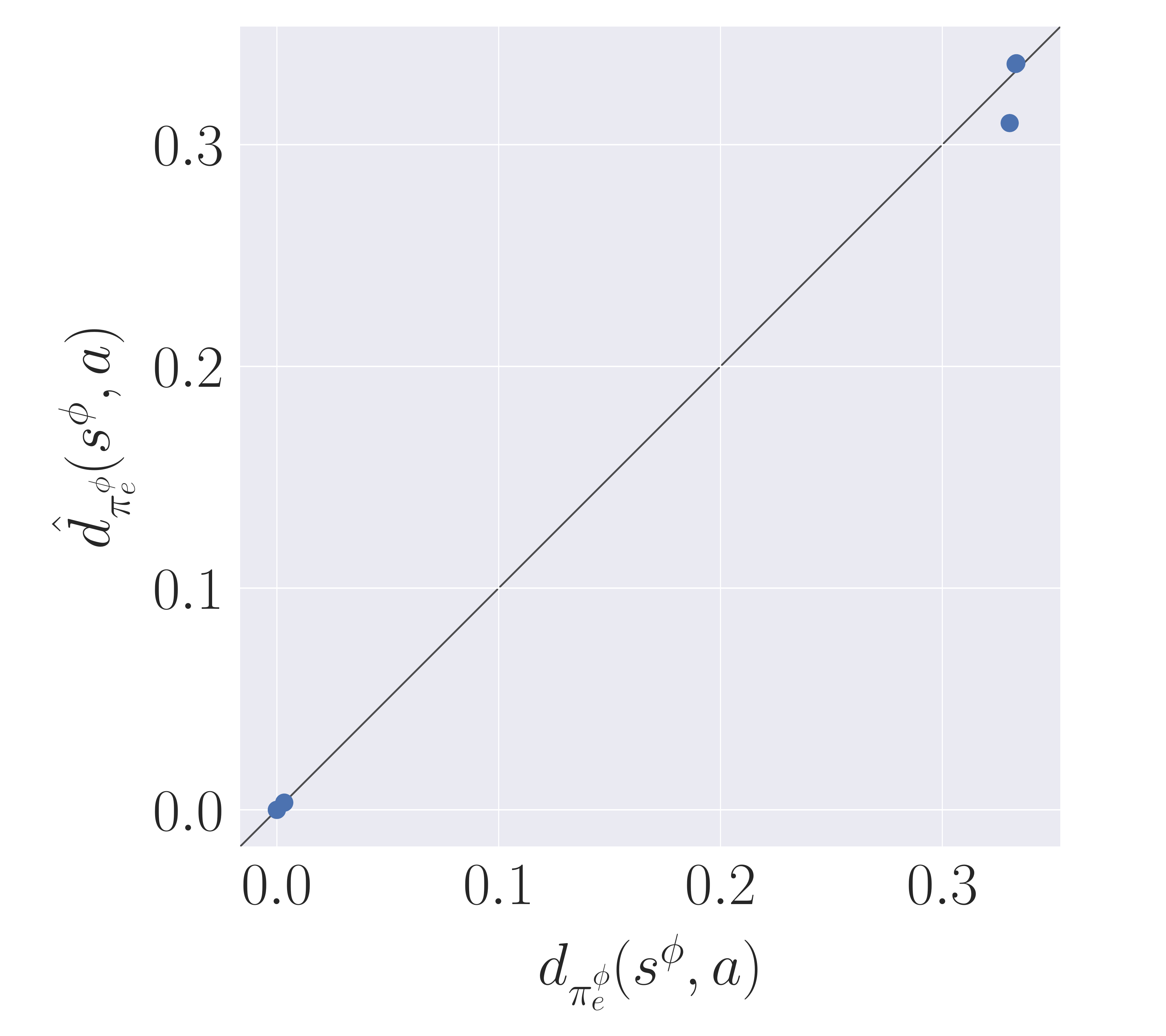}}
    \caption{\footnotesize True ratio experiments.}
    \label{fig:true_ratio_exp}
\end{figure}

\subsubsection{True Ratio Estimation} To verify if AbstractBestDICE accurately estimates the true ratios
we conduct the following experiment on the TwoPath MDP. We
give AbstractBestDICE data of batch size $300$ to estimate the abstract
ratios $\hat{\zeta}^\phi$ and then use $\hat{\zeta}^\phi$ to estimate the abstract
state-action densities of $\pi^\phi_e$, $\hat{d}_{\pi^\phi_e}(s^\phi,a) = \hat{\zeta}^\phi(s^\phi, a) d_{\pi^\phi_\mathcal{D}}(s^\phi, a)$, where
we have access to the true $d_{\pi^\phi_\mathcal{D}}(s^\phi, a)$. We 
then compare $\hat{d}_{\pi^\phi_e}$ to the true $d_{\pi^\phi_e}$, which we compute using a batch size of $300$ trajectories collected from $\pi^\phi_e$ roll-outs, 
using a correlation plot shown in Figure \ref{fig:true_ratio_estimation}. From the figure we can see
AbstractBestDICE accurately estimates the abstract state-action density
ratios. When assumptions are violated, however, ratio
estimation accuracy can reduce (see Appendix \ref{app:tabular_exp}).
\begin{figure*}[hbtp]
    \centering
        \subfigure[Reacher]{\label{fig:reacher_vs_batch}\includegraphics[scale=0.118]{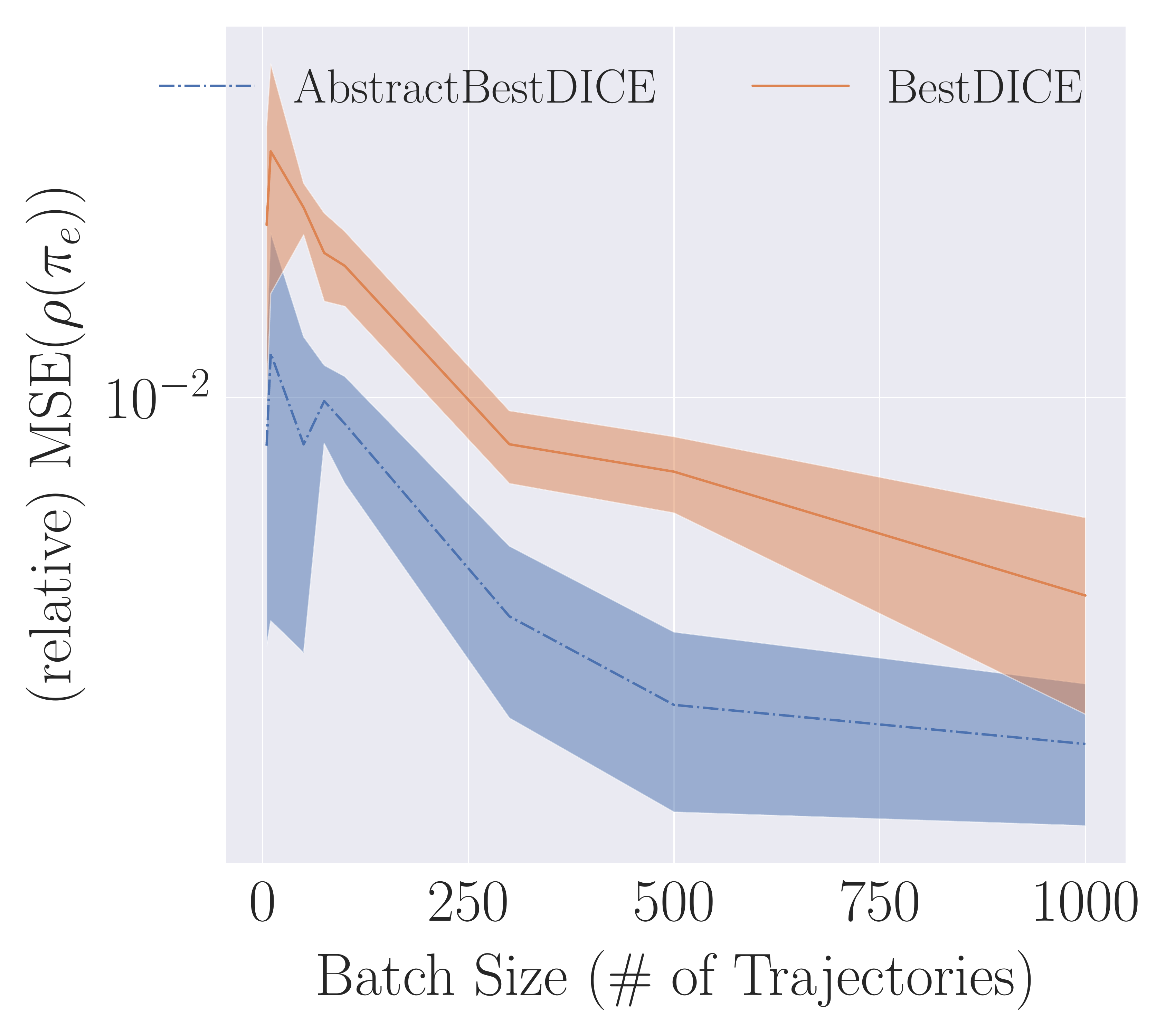}}
        \subfigure[Walker2D]{\label{fig:walker_vs_batch}\includegraphics[scale=0.118]{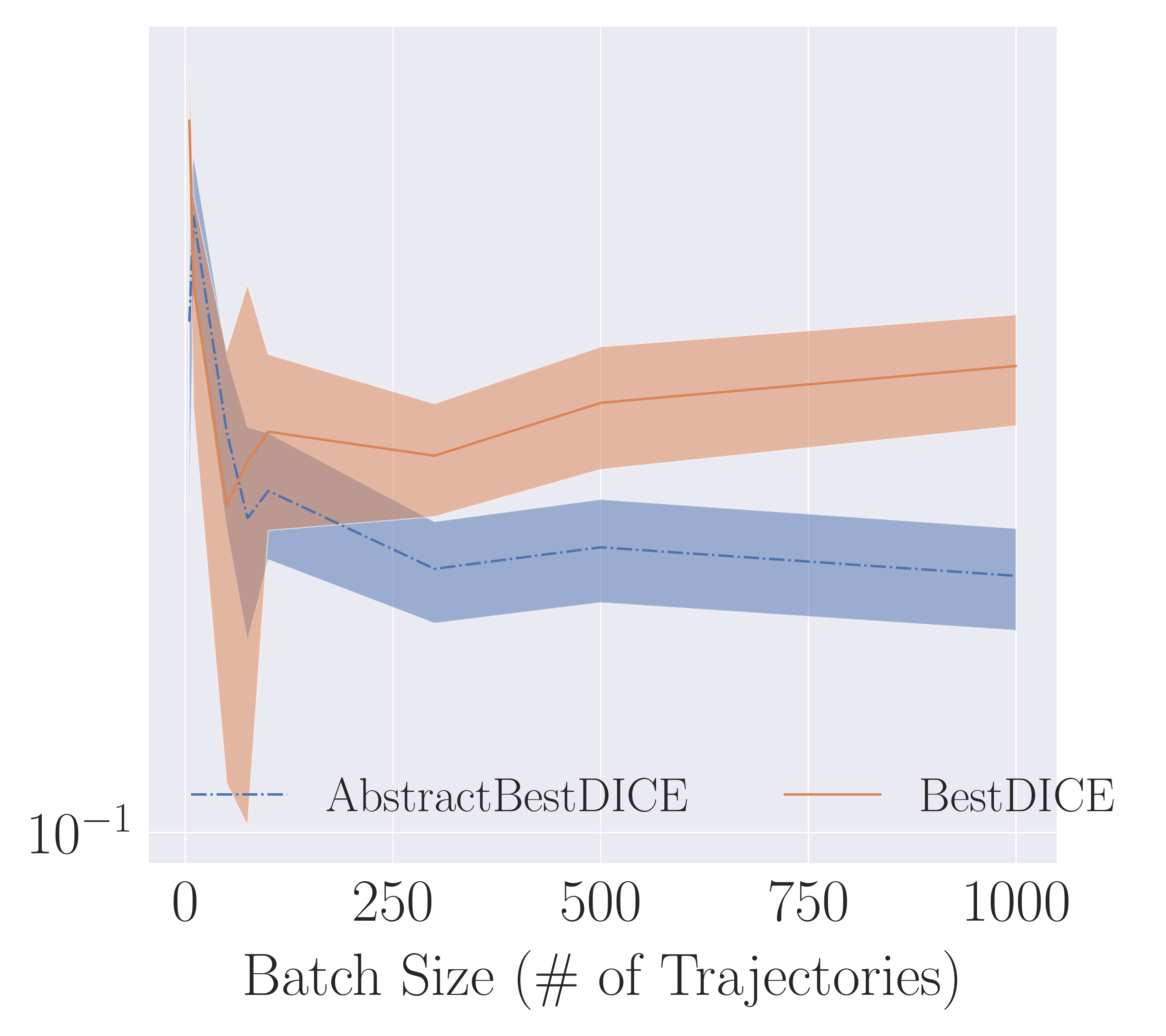}}
        \subfigure[Pusher]{\label{fig:pusher_vs_batch}\includegraphics[scale=0.118]{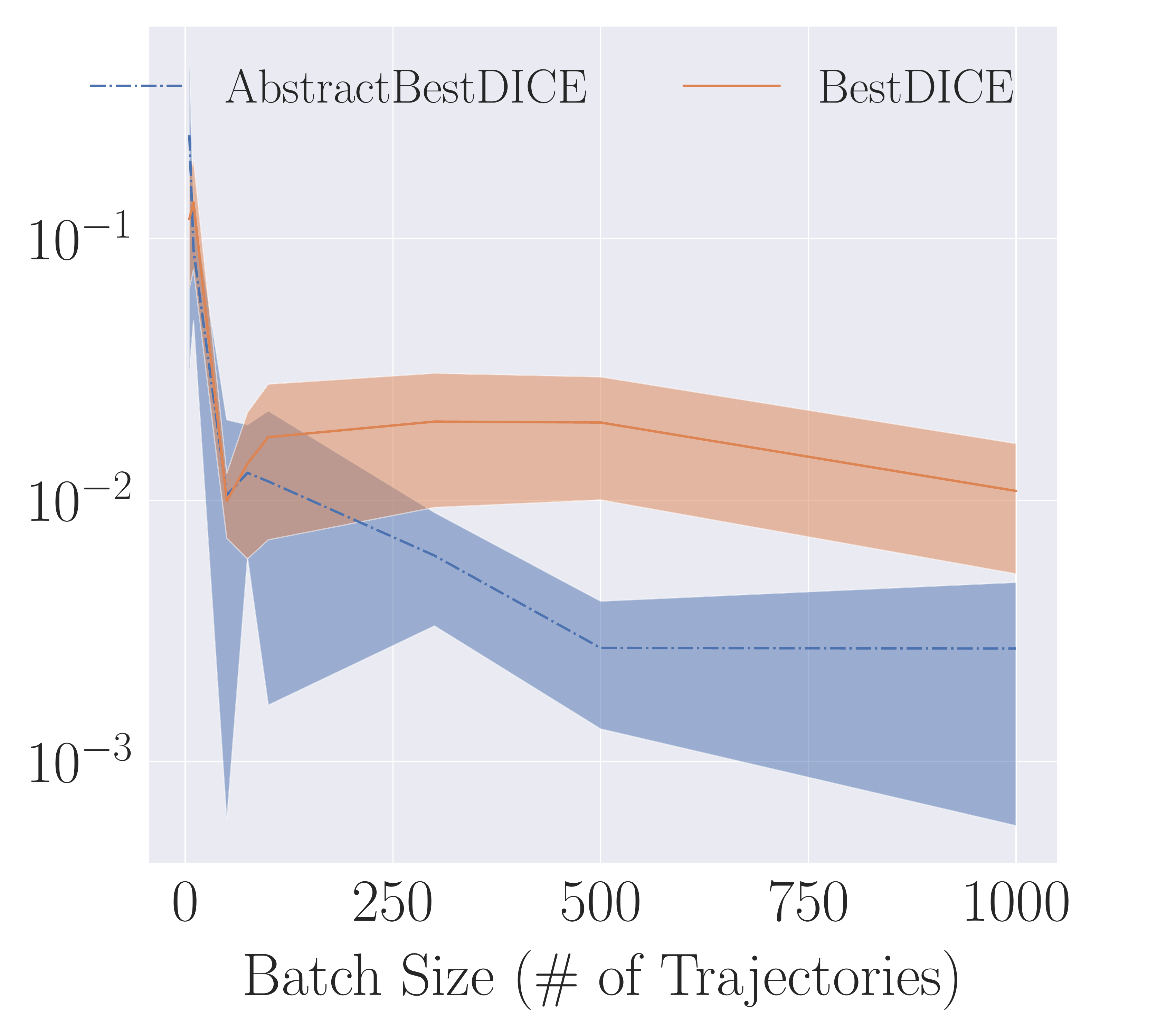}}
        \subfigure[AntUMaze]{\label{fig:antumaze_vs_batch}\includegraphics[scale=0.118]{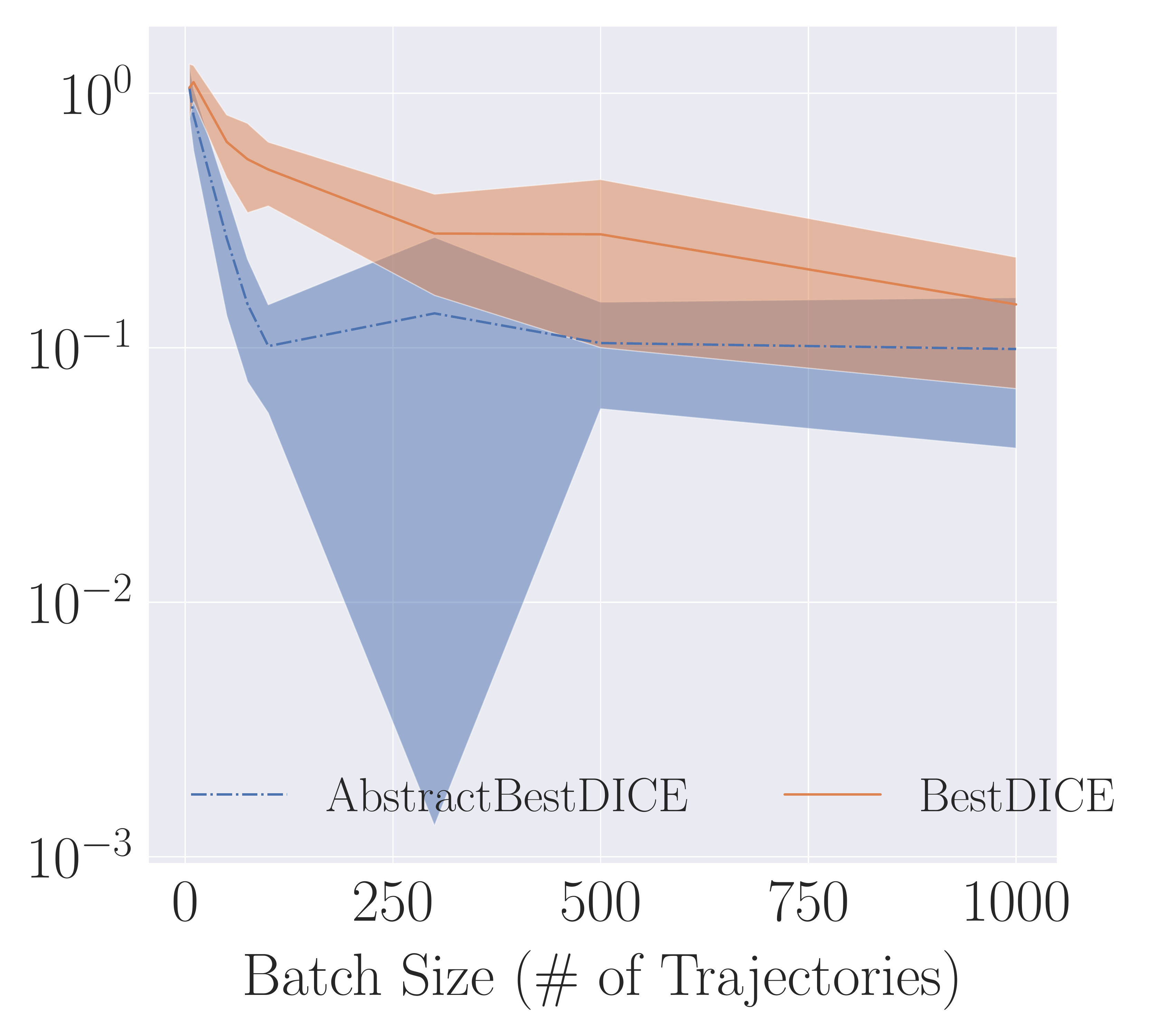}}
    \caption{\footnotesize Relative MSE vs. Batch Size (\# of trajectories). Vertical axis is log-scaled. Errors are computed over $15$ trials with $95\%$ confidence intervals. Lower is better.}
    \label{fig:vs_batch}
\end{figure*}
\subsubsection{Data-Efficiency} Figure \ref{fig:vs_batch}
shows the results of our (relative) MSE vs. batch size experiment
for the function approximation case. For a given batch size, we 
train each algorithm for $100$k epochs with different hyperparameters sets,
record the (relative) MSE on the last epoch by each hyperparameter set, and plot
the lowest MSE achieved by these hyperparameter sets. We find that
AbstractBestDICE is able to achieve lower MSE than BestDICE for a given batch size. We note that while
hyperparameter tuning is difficult in OPE, in this experiment, we aim to evaluate
the performance of each algorithm assuming both had favorable hyperparameters.

% \begin{figure}
%     \centering
%         \subfigure[Reacher]{\label{fig:reacher_vs_batch}\includegraphics[scale=0.14]{visuals/experiments/reacher/reacher_vs_batch_other_fixed_200.pdf}}
%         \subfigure[Pusher]{\label{fig:pusher_vs_batch}\includegraphics[scale=0.15]{visuals/experiments/pusher/pusher_vs_batch_other_fixed_300.pdf}}
%         \subfigure[AntUMaze]{\label{fig:antumaze_vs_batch}\includegraphics[scale=0.15]{visuals/experiments/antumaze/antumaze_vs_batch_other_fixed_500.pdf}}
%         \subfigure[AntUMaze]{\label{fig:antumaze_vs_batch}\includegraphics[scale=0.15]{visuals/experiments/antumaze/antumaze_vs_batch_other_fixed_500.pdf}}
%     \caption{\footnotesize Relative MSE vs. Batch Size (\# of trajectories). Vertical axis is log-scaled. Errors are computed over $15$ trials with $95\%$ confidence intervals. Lower is better.}
%     \label{fig:vs_batch}
% \end{figure}

\subsubsection{Hyperparameter Robustness} Finally, we study
the robustness of these algorithms to hyperparameters tuning. In practical OPE, hyperparameter tuning with respect to MSE is
    impractical since the true $\rho(\pi_e)$ is unknown \cite{fu2021opebench, paine2020hparam}. Thus,
    we want OPE algorithms to be as robust as possible to hyperparameter
    tuning. The main hyperparameters for DICE
    are the learning
    rates of $\zeta$ and $\nu$, $\alpha_\zeta$ and $\alpha_\nu$. For these
    experiments, we focus on very small batch sizes, where we would 
    expect high
    sensitivity. The results
of this study are in Figure \ref{fig:hp_sens}.
    We find that our algorithm has a less volatile
    MSE than BestDICE (also see appendix \ref{app:func_approx_exp} for more similar results). 
    In a related experiment, we also find
    AbstractBestDICE can be more stable than BestDICE during training (see
    appendix \ref{app:func_approx_exp}).

\begin{figure}[]
    \centering
    \subfigure{\label{fig:antumaze_hp_sens}\includegraphics[scale=0.17]{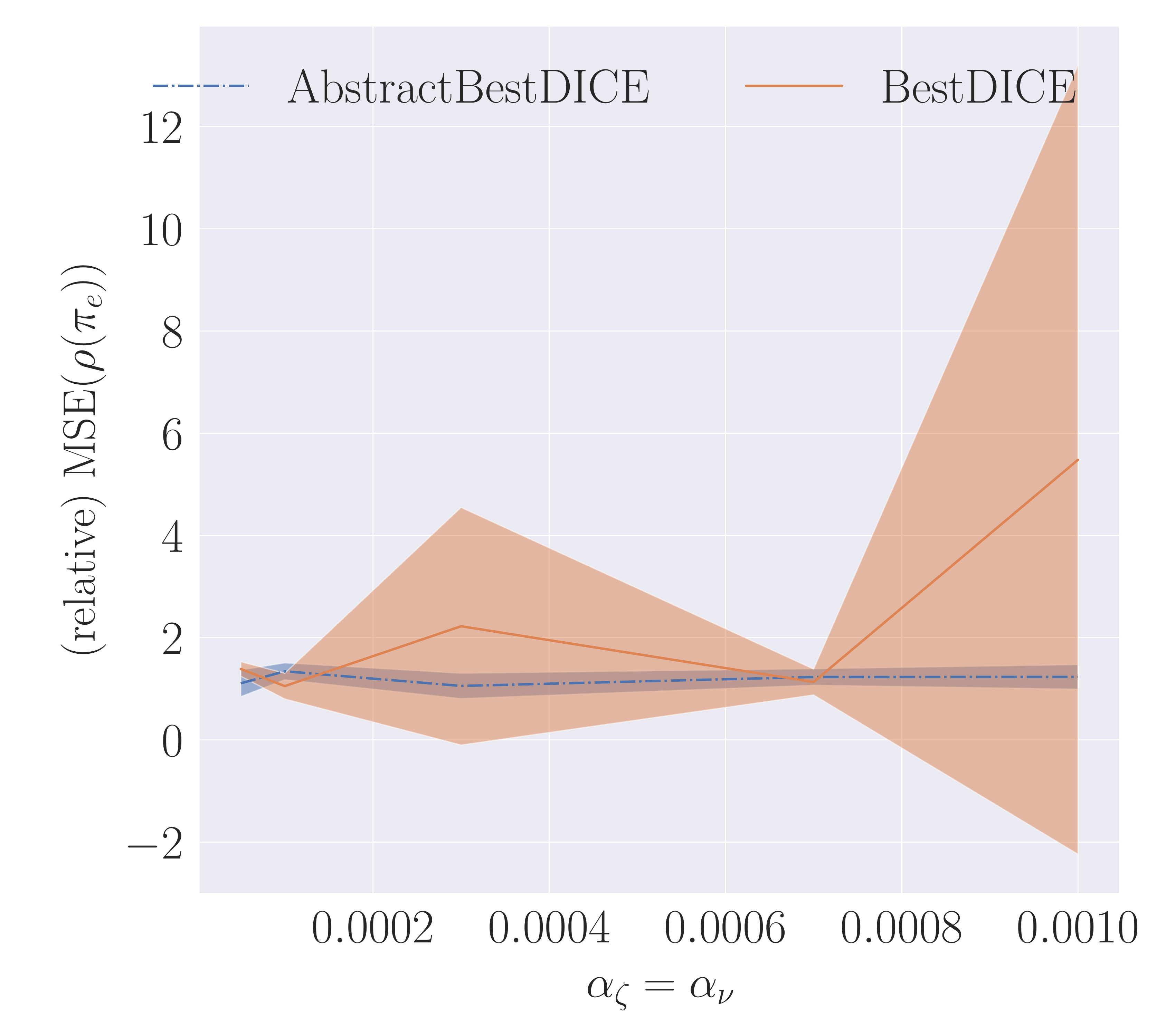}}
    \caption{\footnotesize Robustness of BestDICE and AbstractBestDICE to hyperparameters on
    the AntUMaze domain for batch size (\# of trajectories) of $5$. Errors are computed over $15$ trials with $95\%$ confidence intervals. Lower is better.}
    \label{fig:hp_sens}
\end{figure}

\section{Related Work}
    \textbf{MIS and Off-Policy Evaluation}. There have been
    broadly three families
    of MIS algorithms in the OPE literature to estimate
    state-action density ratios. One is the DICE family, which
    includes:
    minimax-weight learning \cite{masatoshi2019minimax}, DualDice
    \cite{nachum2019dualdice}, GenDICE 
    \cite{zhang2020gendice}, GradientDICE \
    \cite{zhang2020gradientdice}, and BestDICE 
    \cite{nachum2020bestdice}. In our work,
    we adapt BestDICE to estimate the abstract ratios.
    The second family of MIS algorithms is the COP-TD algorithm \cite{hallak2017coptd, gelada2019coptd}, which
    learns the state
    density ratios with an online TD-styled update. The
    third family is the variational power method 
    \cite{wen2020vpm} algorithm which generalizes
    the power iteration method to estimate density ratios. While
    our focus has been on MIS algorithms, there are many other OPE algorithms such as model-based methods \cite{zhang2021modelbasedOPE, hanna2017bootstrapping, liu2018mbope},
    fitted-Q evaluation \cite{le2019FQE}, doubly-robust
    methods \cite{jiang2015dr, thomas2016dataefficient},
    and IS \cite{precup2000ISOPE, thomas2015thesis, hanna2019importance, thomas2015hcope}.
    
    \textbf{State Abstraction and Representation Learning}. The literature on state abstraction
    is extensive \cite{sing1994abs, dietterich1999hrl, ferns2011bisim, li2006stateabs, abel2020thesis}. However, much of this work has been exclusively focused on building
    a theory of abstraction and on learning
    optimal policies. 
    A related topic
    to state abstraction is representation learning. Recently, there
    has been much work showing the importance of good representations
    for offline RL \cite{wang2021instability, yin2022offlinelinear, geng2022effective, zhan2022realizeconc, chen2022offline}. To the best of our knowledge, no work
    has leveraged state abstraction techniques to improve the accuracy 
    of OPE algorithms.
    
    % . For example, 
    % \citealt{wang2021instability}  show that bad 
    % representations 
    % can lead to divergence of OPE algorithms;
    %  \citealt{yin2022offlinelinear} try to understand the statistical limits of
    % offline RL with linear representations; \citealt{fujimoto2021mis} take an alternative approach
    % to learning state-action density ratios by leveraging
    % the successor representation; and \citealt{geng2022effective} try to learn good
    % representations to handle the specific problem of
    % out-of-distribution actions
    % during offline control. In our work, we studied 
    % abstraction as a particular representation form to lower
    % variance of OPE algorithm estimates.
    
    % \citealt{zhan2022realizeconc, chen2022offline} discuss how
    % sample-efficient offline RL can be achieved by relaxing
    % strict completeness assumptions on the representations;
    
\section{Summary and Future Work}
    In this work, we showed
    that we can improve the accuracy of OPE estimates by
    projecting the original ground state-space into a lower-dimensional abstract state-space using state abstraction and performing OPE in the resulting abstract Markov decision process. Our 
    theoretical results proved that: 1)
    abstract state-action ratios have variance at most that
    of the ground ratios; and 2) the abstract MIS OPE estimator
    is unbiased, strongly consistent, and can have lower variance than
    the ground equivalent. We then highlighted the challenges that arise when estimating the abstract ratios from data, identified sufficient conditions to overcome these issues, and adapted BestDICE into AbstractBestDICE to estimate the abstract ratios. In our empirical results, we obtained more accurate OPE estimates with added hyperparameter robustness on difficult, high-dimensional state-space tasks.
    
    There are several directions for future work. First,
    Assumptions \ref{assumption:trans_bisim}  and \ref{assumption:pi_act_equality} are strict. Further investigation is needed to see if these
    assumptions can be relaxed. Second, we assumed
    the abstraction function was given. It would be interesting to leverage existing ideas \cite{gelada2019bisim, zhang2021bisim} to learn $\phi$. Finally, we want to emphasize that this work 
    instantiates the general abstraction + OPE direction. While this work 
    focused exclusively
    on MIS algorithms, a promising
    direction will be to apply
    abstraction techniques to model-based, trajectory IS,
    and value-function based OPE.

\section*{Acknowledgements}
Support for this research was provided by the Office of the Vice Chancellor for Research and Graduate Education at
the University of Wisconsin — Madison with funding from the Wisconsin Alumni Research Foundation. The authors thank the anonymous reviewers, Nicholas Corrado, Ishan Durugkar, and Subhojyoti Mukherjee for their helpful comments in improving this work.

\bibliography{aaai23}

\onecolumn
\appendix

\section{Appendix}

\subsection{Preliminaries}
This section provides the supporting lemmas and definitions 
that we leverage to prove our lemmas and theorems.
\begin{definition}[Almost Sure Convergence] A sequence of
random variables, $(X_n)_{n=1}^\infty$, almost surely converges to the random variable,
$X$ if
\begin{align*}
    \PR\left(\lim_{n\to\infty} X_n = X\right) = 1
\end{align*}
We write $X_n \overset{a.s.}\to X $ to denote that the sequence $(X_n)_{n=1}^\infty$
converges almost surely to $X$.
\end{definition}

\begin{definition}[(Strongly) Consistent Estimator] Let $\theta$
be a real number and $(\hat{\theta}_n)_{n=1}^\infty$ be an infinite
sequence of random variables. We call $\hat{\theta}_n$ a
(strongly) consistent estimator of $\theta$ if and only if
$\hat{\theta}_n \overset{a.s.}\to \theta$. 
\end{definition}

\begin{lemma}
If $(X_i)_{i=1}^\infty$ is a sequence of uniformly
bounded real-valued random variables, 
then 
$X_n \overset{a.s.}\to X$ if and only if $\lim_{N\to\infty}\E[(X_n - X)^2] = 0$.
\label{lemma:thomas_mse_consistency}
\end{lemma}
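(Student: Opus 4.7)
The plan is to split the equivalence into its two implications and use the uniform bound to supply an integrable envelope in each direction. Let $M > 0$ be such that $|X_n|, |X| \leq M$ almost surely for every $n$, so that $(X_n - X)^2 \leq 4M^2$ almost surely.

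For the forward direction ($X_n \overset{a.s.}\to X \Rightarrow \lim_n \E[(X_n - X)^2] = 0$), I would first note that $(X_n - X)^2 \to 0$ almost surely by continuity of $y \mapsto y^2$. Since the constant $4M^2$ is integrable with respect to the probability measure and almost-surely dominates $(X_n - X)^2$, the dominated convergence theorem immediately gives $\E[(X_n - X)^2] \to 0$. This half is essentially a one-line DCT invocation.

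For the reverse direction, my approach has two stages. Stage one converts $L^2$ convergence into convergence in probability via Markov's inequality: for any $\epsilon > 0$,
\[
\PR(|X_n - X| > \epsilon) \leq \frac{\E[(X_n - X)^2]}{\epsilon^2} \longrightarrow 0.
\]
Stage two extracts a sub-sequence $(n_k)$ along which $\E[(X_{n_k} - X)^2] \leq 2^{-k}$, so that $\sum_k \PR\bigl(|X_{n_k} - X| > k^{-1}\bigr) \leq \sum_k k^{2}/2^{k} < \infty$, and then invokes the first Borel--Cantelli lemma to conclude $X_{n_k} \overset{a.s.}\to X$ along the sub-sequence.

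The main obstacle, which I expect to be the crux, is promoting almost-sure convergence of a sub-sequence to almost-sure convergence of the full sequence: in full generality, $L^2$ convergence under only a uniform bound does not imply almost-sure convergence of the whole sequence. My plan to close this gap is to use the form in which the lemma is actually applied in this paper, namely to the estimator $\hat{\rho}(\pi_e^\phi)$ indexed by sample size $N$. The argument establishing Theorem \ref{theorem:our_mse_consistency} provides a rate (an $O(1/N)$ decay of the MSE under the i.i.d.\ data assumption used throughout the MIS analysis), and summability of the resulting probability bounds allows Borel--Cantelli to apply directly to the full sequence rather than merely to a sub-sequence, yielding strong consistency in the almost-sure sense. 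Thus in the stated generality the reverse implication is an ``if the MSE decays fast enough'' statement; the paper's usage satisfies this stronger rate, which is what makes the use of the lemma in Corollary \ref{theorem:our_consistency} legitimate.
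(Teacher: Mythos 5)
The paper offers no argument of its own for this lemma---its entire ``proof'' is a pointer to Lemma 3 of \citet{thomas2016dataefficient}---so the only substantive question is whether your argument is sound. Your forward direction is correct and complete: $(X_n-X)^2\to 0$ almost surely by continuity, and the uniform envelope $4M^2$ lets dominated convergence conclude $\E[(X_n-X)^2]\to 0$. You are also right that the reverse direction is the crux, and in fact the biconditional as stated is false: on $([0,1],\lambda)$ the ``typewriter'' sequence $X_n=\mathbf{1}_{[j2^{-k},(j+1)2^{-k}]}$ for $n=2^k+j$, $0\le j<2^k$, is uniformly bounded by $1$ and satisfies $\E[X_n^2]=2^{-k}\to 0$, yet $\limsup_n X_n=1$ and $\liminf_n X_n=0$ at every point, so it converges almost surely nowhere. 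No proof of the lemma in its stated generality can exist; any fix must happen at the point of use, exactly as you suspect.

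Your proposed fix, however, does not go through as written. An $O(1/N)$ decay of the MSE gives, via Chebyshev, $\PR(|X_N-X|>\epsilon)=O(1/(N\epsilon^2))$, and $\sum_N 1/N$ diverges, so the first Borel--Cantelli lemma does \emph{not} apply to the full sequence; it only yields almost-sure convergence along sparse subsequences such as $N_k=k^2$, which is the same place your general argument already got stuck. To promote this to the full sequence you need either a summable tail bound---e.g.\ a fourth-moment bound of order $O(1/N^2)$ or a Hoeffding bound, both available here because the summands in Equation~\ref{eq:our_estimator} are i.i.d.\ and bounded under Assumptions~\ref{assumption:coverage} and~\ref{assumption:bounded_reward}---or the subsequence argument followed by an interpolation step controlling the partial sums between $k^2$ and $(k+1)^2$. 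Either route is just the standard proof of the strong law of large numbers, so the cleanest repair of Corollary~\ref{theorem:our_consistency} is to drop Lemma~\ref{lemma:thomas_mse_consistency} altogether and apply Kolmogorov's SLLN directly to the sample mean of the bounded i.i.d.\ terms $\zeta^\phi(s_i^\phi,a_i)r^\phi(s_i^\phi,a_i)$, with Theorem~\ref{theorem:our_unbiased} identifying the limit as $\rho(\pi_e)$.
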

\begin{proof}
	See Lemma 3 in \citet{thomas2016dataefficient}.
\end{proof}

\subsection{Assumptions and Definitions}
    In the main paper, we provided the major assumptions required for our theoretical and empirical work relevant to abstraction and OPE.
    Here we provide supporting assumptions typically used in the OPE literature used for the theoretical analysis.
    \begin{assumption} [Coverage] For all 
    $(s,a) \in \sset \times \aset$, if $\pi_e(a|s) > 0$ then 
    $\pi_b(a|s) > 0$.
    \label{assumption:coverage}
    \end{assumption}

    % \begin{assumption} [Single behavior policy] We assume a single
    % behavior policy, $\pi_b$, generates the batch of data $\mathcal{D}$.
    % \label{assumption:single_pib}
    % \end{assumption}
    
    \begin{assumption} [Non-negative reward] We assume that the reward function 
    is bounded between $[0, \infty)$.
    \label{assumption:bounded_reward}
    \end{assumption}
    
    \begin{definition} [Ground state normalized weightings] For a given
        policy $\pi$, each ground state $s\in s^\phi$,  has
        a state aggregation weight, $w_\pi(s) = \frac{d_\pi(s)}{\sum_{s'\in \phi^{-1}(s^\phi)} d_\pi(s')}$,
        where $d_\pi(s)$ is the discounted state-occupancy measure of $\pi$.
        \label{def:norm_weights_agg_states}
    \end{definition}

\subsection{Proofs}
\label{app:proofs}

In the proofs below, we denote the collection of behavior policies
that generated $\mathcal{D}$ with $\pi_{\mathcal{D}}$. That is, $\pi_\mathcal{D}$ is the conditional probability of an action occurring in a given state in the data. Similarly, we also have $\pi^\phi_\mathcal{D}$. These minor changes give us 
$d_{\mathcal{D}} = d_{\pi_\mathcal{D}}$ and $d_{\mathcal{D}^\phi} = d_{\pi^\phi_\mathcal{D}}$.

\begin{restatable}{lemma}{lemExpAbsGroundF}
For an arbitrary function, $f$, $\E_{s^\phi \sim d_{\pi^\phi}, a \sim \pi^\phi}\left[f(s^\phi, a)\right] = \E_{s \sim d_\pi, a \sim \pi}\left[f(\phi(s), a)\right]$.
\label{lemma:exp_f_g_abs_equal}
\end{restatable}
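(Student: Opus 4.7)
The plan is to unfold the left-hand expectation using the definitions of $d_{\pi^\phi}$ and $\pi^\phi$ given in Section \ref{sec:bg_state_abs}, and then collapse the resulting double sum over abstract states and their preimages into a single sum over ground states using the fact that $\phi$ partitions $\sset$ into disjoint sets $\{\phi^{-1}(s^\phi)\}_{s^\phi \in \sset^\phi}$.

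Concretely, I would start from
\begin{equation*}
\E_{s^\phi \sim d_{\pi^\phi}, a \sim \pi^\phi}\left[f(s^\phi, a)\right] = \sum_{s^\phi \in \sset^\phi} \sum_{a \in \aset} d_{\pi^\phi}(s^\phi)\, \pi^\phi(a \mid s^\phi)\, f(s^\phi, a),
\end{equation*}
then substitute $d_{\pi^\phi}(s^\phi) = \sum_{s \in \phi^{-1}(s^\phi)} d_\pi(s)$ and $\pi^\phi(a \mid s^\phi) = \sum_{s \in \phi^{-1}(s^\phi)} w_\pi(s)\, \pi(a \mid s)$ with $w_\pi(s) = d_\pi(s) / \sum_{s' \in \phi^{-1}(s^\phi)} d_\pi(s')$. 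The product $d_{\pi^\phi}(s^\phi)\, \pi^\phi(a \mid s^\phi)$ simplifies by cancellation to $\sum_{s \in \phi^{-1}(s^\phi)} d_\pi(s)\, \pi(a \mid s)$, which is the key algebraic step.

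Next, since every ground state $s \in \sset$ belongs to exactly one abstract state (namely $\phi(s)$), the outer sum over $s^\phi$ combined with the inner sum over $\phi^{-1}(s^\phi)$ reindexes to a single sum over $s \in \sset$, with $f(s^\phi, a)$ replaced by $f(\phi(s), a)$. This yields $\sum_{s, a} d_\pi(s)\, \pi(a \mid s)\, f(\phi(s), a) = \E_{s \sim d_\pi, a \sim \pi}[f(\phi(s), a)]$, completing the proof.

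The main obstacle is purely bookkeeping: making sure the $w_\pi$ normalizers cancel cleanly in the product $d_{\pi^\phi}\pi^\phi$, and being careful that the partition property of $\phi$ is what allows swapping the nested sum over $(s^\phi, s \in \phi^{-1}(s^\phi))$ for a single sum over $s \in \sset$ without double-counting. For the continuous-state case one would replace sums by integrals and use the change-of-variables $s^\phi = \phi(s)$, but the argument is otherwise identical, so I would present it in the discrete form as above.
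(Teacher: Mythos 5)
Your proposal is correct and follows essentially the same route as the paper's proof: expand the abstract expectation, substitute the definitions of $d_{\pi^\phi}$ and $\pi^\phi$ with $w_\pi(s) = d_\pi(s)/\sum_{s'\in\phi^{-1}(s^\phi)}d_\pi(s')$ so the normalizer cancels to leave $\sum_{s\in\phi^{-1}(s^\phi)}d_\pi(s)\pi(a|s)$, and then collapse the nested sum over the partition into a single sum over $\sset$. The bookkeeping concerns you flag (clean cancellation and no double-counting via the partition property) are exactly the two steps the paper justifies, so nothing is missing.
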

\begin{proof}
\begin{align*}
    \E_{s^\phi \sim d_{\pi^\phi}, a \sim \pi^\phi}\left[f(s^\phi, a)\right] &= \sum_{s^\phi, a} d_{\pi^\phi}(s^\phi) \pi^\phi(a|s^\phi) f(s^\phi, a)\\
    &\labelrel={lemma:exp_f_g_abs_equal_1}\sum_{s^\phi, a}   d_{\pi^\phi}(s^\phi)\sum_{s \in \phi^{-1}(s^\phi)} \frac{\pi(a|s)d_\pi(s)}{d_{\pi^\phi}(s^\phi)} f(s^\phi, a)\\
    &=\sum_{s^\phi, a}   \sum_{s \in \phi^{-1}(s^\phi)} \pi(a|s)d_\pi(s) f(s^\phi, a)\\
    &= \sum_{s, a}  \pi(a|s)d_\pi(s) f(\phi(s), a)\\
    \E_{s^\phi \sim d_\pi^\phi, a \sim \pi^\phi}\left[f(s^\phi, a)\right] &= \E_{s \sim d_\pi, a \sim \pi}\left[f(\phi(s), a)\right]
\end{align*}
where \eqref{lemma:exp_f_g_abs_equal_1} is due to Definition \ref{def:norm_weights_agg_states} and we can replace $s^\phi$
with $\phi(s)$ when we know $s\in s^\phi$.
\end{proof}

\theoremOursLowerVarianceRatios*
\begin{proof}

Before comparing the variances, we note that due to Assumption
\ref{assumption:coverage} and Lemma
\ref{lemma:exp_f_g_abs_equal}:
\begin{align*}
    \E_{s\sim d_{\pi_\mathcal{D}}, a\sim \pi_\mathcal{D}}\left[\frac{d_{\pi_e}(s)\pi_e(a|s)}{d_{\pi_\mathcal{D}}(s)\pi_\mathcal{D}(a|s)}\right] = \E_{s\sim d_{\pi^\phi_\mathcal{D}}, a\sim \pi^\phi_\mathcal{D}}\left[\frac{d_{\pi^\phi_e}(s^\phi)\pi^\phi_e(a|s^\phi)}{d_{\pi^\phi_\mathcal{D}}(s^\phi)\pi^\phi_\mathcal{D}(a|s^\phi)}\right] = 1
\end{align*}

Denote, $V^g := \text{Var}\left( \frac{d_{\pi_e}(s)\pi_e(a|s)}{d_{\pi_\mathcal{D}}(s)\pi_\mathcal{D}(a|s)}\right)$ and $V^\phi := \text{Var}\left(\frac{d_{\pi^\phi_e}(s^\phi)\pi^\phi_e(a|s^\phi)}{d_{\pi^\phi_\mathcal{D}}(s^\phi)\pi^\phi_\mathcal{D}(a|s^\phi)}\right)$. Now consider the difference between the two variances. 
\begin{align*}
    D &= V^g - V^\phi\\
    &= \text{Var}\left( \frac{d_{\pi_e}(s)\pi_e(a|s)}{d_{\pi_\mathcal{D}}(s)\pi_\mathcal{D}(a|s)}\right) - \text{Var}\left(\frac{d_{\pi^\phi_e}(s^\phi)\pi^\phi_e(a|s^\phi)}{d_{\pi^\phi_\mathcal{D}}(s^\phi)\pi^\phi_\mathcal{D}(a|s^\phi)}\right)\\
    &= \E_{s\sim d_{\pi_\mathcal{D}}, a\sim \pi_\mathcal{D}}\left[\left(\frac{d_{\pi_e}(s)\pi_e(a|s)}{d_{\pi_\mathcal{D}}(s)\pi_\mathcal{D}(a|s)}\right)^2\right]
    - \E_{s^\phi \sim d_{\pi^\phi_\mathcal{D}}, a\sim \pi_\mathcal{D}^\phi}\left[\left(\frac{d_{\pi^\phi_e}(s^\phi)\pi^\phi_e(a|s^\phi)}{d_{\pi^\phi_\mathcal{D}}(s^\phi)\pi^\phi_\mathcal{D}(a|s^\phi)}\right)^2\right]\\
    &= \sum_{s, a}d_{\pi_\mathcal{D}}(s)\pi_\mathcal{D}(a|s)\left(\frac{d_{\pi_e}(s)\pi_e(a|s)}{d_{\pi_\mathcal{D}}(s)\pi_\mathcal{D}(a|s)}\right)^2
    - \sum_{s^\phi, a}d_{\pi^\phi_\mathcal{D}}(s^\phi)\pi^\phi_\mathcal{D}(a|s^\phi) \left(\frac{d_{\pi^\phi_e}(s^\phi)\pi^\phi_e(a|s^\phi)}{d_{\pi^\phi_\mathcal{D}}(s^\phi)\pi^\phi_\mathcal{D}(a|s^\phi)}\right)^2\\
    &= \sum_{s^\phi, a}\left(\sum_{s\in s^\phi}d_{\pi_\mathcal{D}}(s)\pi_\mathcal{D}(a|s)\left(\frac{d_{\pi_e}(s)\pi_e(a|s)}{d_{\pi_\mathcal{D}}(s)\pi_\mathcal{D}(a|s)}\right)^2
    - d_{\pi^\phi_\mathcal{D}}(s^\phi)\pi^\phi_\mathcal{D}(a|s^\phi) \left(\frac{d_{\pi^\phi_e}(s^\phi)\pi^\phi_e(a|s^\phi)}{d_{\pi^\phi_\mathcal{D}}(s^\phi)\pi^\phi_\mathcal{D}(a|s^\phi)}\right)^2\right)
\end{align*}

We can analyze this difference by 
looking at one abstract state and one action and all the states that belong to it. That
is, for a fixed abstract state, $s^\phi$, and fixed action, $a$, we have:

\begin{align*}
    D' &= \sum_{s\in s^\phi}d_{\pi_\mathcal{D}}(s)\pi_\mathcal{D}(a|s)\left(\frac{d_{\pi_e}(s)\pi_e(a|s)}{d_{\pi_\mathcal{D}}(s)\pi_\mathcal{D}(a|s)}\right)^2
    - \left(d_{\pi^\phi_\mathcal{D}}(s^\phi)\pi^\phi_\mathcal{D}(a|s^\phi) \left(\frac{d_{\pi^\phi_e}(s^\phi)\pi^\phi_e(a|s^\phi)}{d_{\pi^\phi_\mathcal{D}}(s^\phi)\pi^\phi_\mathcal{D}(a|s^\phi)}\right)^2\right)\\
    &= \left(\sum_{s\in s^\phi}\frac{(d_{\pi_e}(s)\pi_e(a|s))^2}{d_{\pi_\mathcal{D}}(s)\pi_\mathcal{D}(a|s)}\right)
    - \left( \frac{(d_{\pi^\phi_e}(s^\phi)\pi^\phi_e(a|s^\phi))^2}{d_{\pi^\phi_\mathcal{D}}(s^\phi)\pi^\phi_\mathcal{D}(a|s^\phi)}\right)\\
    &\labelrel={theorem:our_variance_ratio_1}  \left(\left(\sum_{s\in s^\phi}\frac{(d_{\pi_e}(s)\pi_e(a|s))^2}{d_{\pi_\mathcal{D}}(s)\pi_\mathcal{D}(a|s)}\right)
    - \left( \frac{(d_{\pi^\phi_e}(s^\phi)\pi^\phi_e(a|s^\phi)^2}{d_{\pi^\phi_\mathcal{D}}(s^\phi)\pi^\phi_\mathcal{D}(a|s^\phi)}\right)\right)\\
\end{align*}
where \eqref{theorem:our_variance_ratio_1} is due to Definition \ref{def:norm_weights_agg_states}.

If we can show that $D' \geq 0$ for all possible sizes of $|s^\phi|$, we will
the have the original difference, $D$, is a sum of only non-negative terms, thus proving Theorem \ref{theorem:our_lower_var_ratios}. We will prove $D' \geq 0$ by inductive proof on the size of $|s^\phi|$
from $1$ to some $n \leq |\sset|$.

Let our statement to prove, $P(n)$ be that $D' \geq 0$ where $n = |s^\phi|$. This is
trivially true for $P(1)$ where the ground state equals the abstract state. Now consider
the inductive hypothesis, $P(n)$ is true for $n\geq1$. Now with the inductive step,
we must show that $P(n+1)$ is true given $P(n)$ is true. Starting with the 
inductive hypothesis:

\begin{align*}
    D'' &= \underbrace{\left(\sum_{s\in s^\phi}\frac{(d_{\pi_e}(s)\pi_e(a|s))^2}{d_{\pi_\mathcal{D}}(s)\pi_\mathcal{D}(a|s)}\right)}_{S}
    - \left( \frac{(\overbrace{(d_{\pi^\phi_e}(s^\phi)\pi^\phi_e(a|s^\phi)}^{C})^2}{\underbrace{d_{\pi^\phi_\mathcal{D}}(s^\phi)\pi^\phi_\mathcal{D}(a|s^\phi)}_{C'}}\right) \geq 0\\
\end{align*}
% Let 
% $S = \left(\sum_{s\in\phi(s)}\frac{(d^{\pi_e}(s)\pi_e(a|s))^2}{d^{\pi_\mathcal{D}}(s)\pi_\mathcal{D}(a|s)}\right)$, $C = d^{\pi^\phi_e}(\phi(s))\pi^\phi_e(a|\phi(s))$, 
% and $C' = d^{\pi^\phi_\mathcal{D}}(\phi(s))\pi^\phi_\mathcal{D}(a|\phi(s))$. Then we have:
We define $S := \left(\sum_{s\in s^\phi}\frac{(d_{\pi_e}(s)\pi_e(a|s))^2}{d_{\pi_\mathcal{D}}(s)\pi_\mathcal{D}(a|s)}\right)$, $C := (d_{\pi^\phi_e}(s^\phi)\pi^\phi_e(a|s^\phi)$, and $C' :=d_{\pi^\phi_\mathcal{D}}(s^\phi)\pi^\phi_\mathcal{D}(a|s^\phi)$. After making the substitutions, we have:
\begin{align}
    C^2 \leq SC'
\end{align}
We have the above result holding true for when the $|s^\phi| = n$. Now consider 
the inductive step in relation to the inductive hypothesis where a new state, $s_{n+1}$ is added to the abstract state. We have the
following difference:

\begin{align*}
    D'' &= S + \frac{\left(d_{\pi_e}(s_{n+1})\pi_e(a|s_{n+1})\right)^2}{d_{\pi_\mathcal{D}}(s_{n+1})\pi_\mathcal{D}(a|s_{n+1})} - \frac{(C + d_{\pi_e}(s_{n+1})\pi_e(a|s_{n+1}))^2}{C' + d_{\pi_\mathcal{D}}(s_{n+1})\pi_\mathcal{D}(a|s_{n+1})}\\
\end{align*}
For ease in notation, let $x = d_{\pi_e}(s_{n+1})\pi_e(a|s_{n+1})$ and 
$y = d_{\pi_\mathcal{D}}(s_{n+1})\pi_\mathcal{D}(a|s_{n+1})$. The above difference is then:
\begin{align*}
    D'' &= S + \frac{x^2}{y} - \frac{(C + x)^2}{C' + y}\\
        &= \frac{Sy + x^2}{y} - \frac{(C + x)^2}{C' + y}\\
        &= \frac{1}{y(C' + y)}((Sy + x^2)(C' + y) - (C + x)^2y)\\
        &= \frac{1}{y(C' + y)} (SyC' + Sy^2 + x^2C' + x^2y - C^2y - x^2y
    -2Cxy)\\
    &= \frac{1}{y(C' + y)}(SyC' + Sy^2 + x^2C' - C^2y -2Cxy)\\
\end{align*}
The above difference, $D''$, is minimized most when $C$ is as large as possible. From the
inductive hypothesis, we have $C\leq \sqrt{SC'}$. The minimum difference can
be written as:
\begin{align*}
 D''&= \frac{1}{y(C' + y)}(Sy^2 + x^2C' -2\sqrt{SC'}xy)\\
    &= \frac{1}{y(C' + y)} (y\sqrt{S} - x\sqrt{C'})^2\\
    &\geq 0
\end{align*}

So we have $D'' \geq 0$ for $|s^\phi| = n + 1$, which means $D' \geq 0$ . We have showed that $P(n)$ is true for all $n$. We now have the original
difference, $D$, to be a sum of non-negative terms after performing this same
grouping for all abstract states and actions, which results in:
\begin{align*}
    \text{Var}\left( \frac{d_{\pi^\phi_e}(s^\phi)\pi^\phi_e(a|s^\phi)}{d_{\pi^\phi_\mathcal{D}}(s^\phi)\pi^\phi_\mathcal{D}(a|s^\phi)}\right) &\leq
    \text{Var}\left( \frac{d_{\pi_e}(s)\pi_e(a|s)}{d_{\pi_\mathcal{D}}(s)\pi_\mathcal{D}(a|s)}\right)
\end{align*}

Thus, we have:
\begin{align*}
    \text{Var}\left(\frac{d_{\pi^\phi_e}(s^\phi, a)}{d_{\pi^\phi_\mathcal{D}}(s^\phi, a)}\right)\leq \text{Var}\left(\frac{d_{\pi_e}(s, a)}{d_{\pi_\mathcal{D}}(s, a)}\right)
\end{align*}
\end{proof}

\lemRpiToAbsRpi*
\begin{proof}
Consider the definition of $R_\pi^\phi$:
\begin{align*}
    \rho(\pi^\phi) &= \sum_{s^\phi, a}d_{\pi^\phi}(s^\phi)\pi^\phi(a|s^\phi)r(s^\phi, a)\\
    &=\sum_{s^\phi, a} \left(\left(\sum_{s\in\phi^{-1}(s^\phi)} d_\pi(s)\right)
    \left(\sum_{s\in\phi^{-1}(s^\phi)} \pi(a|s)w_\pi(s)\right)
    r(s^\phi,a)\right)\\
    &\labelrel={lemma:g_val_abs_val_equality_1}\sum_{\phi(s), a} \left(\left(\sum_{s\in\phi^{-1}(s^\phi)} d_\pi(s)\right)
    \left(\sum_{s\in\phi^{-1}(s^\phi)} \frac{\pi(a|s)d_\pi(s)}{\sum_{s'\in\phi^{-1}(s^\phi)} d_\pi(s')}\right) r(s^\phi,a)\right)  \\
    &\labelrel={lemma:g_val_abs_val_equality_2} \sum_{\phi(s), a}
    \left(\sum_{s\in\phi^{-1}(s^\phi)} \pi(a|s)d_\pi(s)\right)r(s^\phi,a)\\
    &\labelrel={lemma:g_val_abs_val_equality_3}\sum_{\phi(s), a}
    \left(\sum_{s\in\phi^{-1}(s^\phi)} \pi(a|s)d_\pi(s)r(s,a)\right)\\
    &= \sum_{s, a}\pi(a|s)d_\pi(s)r(s,a)\\
    \rho(\pi^\phi) &= \rho(\pi)
\end{align*}
where \eqref{lemma:g_val_abs_val_equality_1} is due to Definition \ref{def:norm_weights_agg_states}, 
\eqref{lemma:g_val_abs_val_equality_2} is due to Definition \ref{def:norm_weights_agg_states} and Assumption \ref{assumption:reward_equality}
\end{proof}

\begin{restatable}{theorem}{theoremOursUnbiased} If 
Assumption \ref{assumption:reward_equality} holds, our estimator, $\hat{\rho}(\pi^\phi_e)$
as defined in Equation \ref{eq:our_estimator}, is an unbiased estimator of $\rho(\pi_e)$.
\label{theorem:our_unbiased}
\end{restatable}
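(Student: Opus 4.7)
The plan is to compute $\E[\hat{\rho}(\pi^\phi_e)]$ directly via a standard importance sampling argument in the abstract MDP and then invoke Proposition \ref{lemma:g_val_abs_val_equality} to equate the abstract average reward with the ground average reward. By linearity of expectation and the fact that each sample $(s_i^\phi, a_i)$ in $\mathcal{D}^\phi$ is drawn from $d_{\mathcal{D}^\phi}$, it suffices to show that a single term has expectation equal to $\rho(\pi_e)$.

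The first step is to unroll the expectation of a single term:
\begin{align*}
\E\!\left[\frac{d_{\pi^\phi_e}(s^\phi, a)}{d_{\mathcal{D}^\phi}(s^\phi, a)} r^\phi(s^\phi, a)\right]
  &= \sum_{s^\phi, a} d_{\mathcal{D}^\phi}(s^\phi, a)\, \frac{d_{\pi^\phi_e}(s^\phi, a)}{d_{\mathcal{D}^\phi}(s^\phi, a)}\, r^\phi(s^\phi, a) \\
  &= \sum_{s^\phi, a} d_{\pi^\phi_e}(s^\phi, a)\, r^\phi(s^\phi, a)
  = \rho(\pi^\phi_e).
\end{align*}
This is just the standard IS identity and only requires the coverage assumption (Assumption \ref{assumption:coverage}) so that the ratio is well-defined wherever $d_{\pi^\phi_e}$ puts mass. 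Note that cancelling $d_{\mathcal{D}^\phi}(s^\phi,a)$ is legitimate on the support of $d_{\pi^\phi_e}$, and $r^\phi$ is unambiguously defined on abstract state-action pairs precisely because Assumption \ref{assumption:reward_equality} guarantees that $r(s,a)$ is constant across all $s \in \phi^{-1}(s^\phi)$.

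Next, I would apply Proposition \ref{lemma:g_val_abs_val_equality}, which states that under Assumption \ref{assumption:reward_equality}, $\rho(\pi^\phi) = \rho(\pi)$ for any policy $\pi$ (with the abstract MDP and abstract policy constructed using the weighting $w_\pi$). Applying this with $\pi = \pi_e$ yields $\rho(\pi^\phi_e) = \rho(\pi_e)$. Chaining these two equalities gives $\E[\hat{\rho}(\pi^\phi_e)] = \rho(\pi_e)$ term by term, so by linearity the full sample average is unbiased.

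I do not expect any genuine obstacle here, since both required pieces are already assembled: the IS manipulation is elementary, and Proposition \ref{lemma:g_val_abs_val_equality} does the heavy lifting of crossing the abstraction boundary. The only subtle point to state carefully is that the definition $r^\phi(s^\phi, a) = r(s,a)$ for any $s \in \phi^{-1}(s^\phi)$ requires Assumption \ref{assumption:reward_equality} for consistency; without it, $r^\phi$ would instead be a $w$-weighted average and the cancellation between $r^\phi$ values computed from the data (which implicitly uses $w_{\mathcal{D}}$) and those needed for $\rho(\pi^\phi_e)$ (which implicitly uses $w_{\pi_e}$) would not go through. This is exactly why Assumption \ref{assumption:reward_equality} suffices, even though Assumptions \ref{assumption:trans_bisim} and \ref{assumption:pi_act_equality} are not invoked at this stage.
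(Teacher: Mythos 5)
Your proposal is correct and follows essentially the same route as the paper: compute the expectation of a single weighted sample to get $\rho(\pi_e^\phi)$, invoke Proposition \ref{lemma:g_val_abs_val_equality} to identify this with $\rho(\pi_e)$, and finish by linearity of expectation. The only difference is that the paper takes the expectation over the ground distribution $d_{\pi_\mathcal{D}}$ and explicitly regroups the sum over $s\in\phi^{-1}(s^\phi)$ to show that the projected samples are distributed according to $d_{\mathcal{D}^\phi}$ --- the one small fact you assert rather than derive, and which follows immediately from the definitions of $d_{\pi^\phi}$ and $w_\pi$.
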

\begin{proof}

We first consider the expectation of a single sample, $X = \frac{d_{\pi^\phi_e}(s^\phi)\pi^\phi_e(a|s^\phi)}{d_{\pi^\phi_\mathcal{D}}(s^\phi)\pi^\phi_\mathcal{D}(a|s^\phi)}r^\phi(s^\phi,a)$:

\begin{align*}
    \E_{s \sim d_{\pi_\mathcal{D}}, a \sim \pi_\mathcal{D}}\left[X\right] &= \sum_{s,a}d_{\pi_\mathcal{D}}(s)\pi_\mathcal{D}(a|s)\frac{d_{\pi^\phi_e}(s^\phi)\pi^\phi_e(a|s^\phi)}{d_{\pi^\phi_\mathcal{D}}(s^\phi)\pi^\phi_\mathcal{D}(a|s^\phi)}r^\phi(s^\phi,a)\\
    &\labelrel={theorem:our_unbiased_1} \sum_{s^\phi, a}\sum_{s \in \phi^{-1}(s^\phi)}d_{\pi_\mathcal{D}}(s)\pi_\mathcal{D}(a|s)\frac{d_{\pi^\phi_e}(s^\phi)\pi^\phi_e(a|s^\phi)}{d_{\pi^\phi_\mathcal{D}}(s^\phi)\pi^\phi_\mathcal{D}(a|s^\phi)}r^\phi(s^\phi,a) \\
    &\labelrel={theorem:our_unbiased_2}  \sum_{s^\phi, a}\frac{d_{\pi^\phi_e}(s^\phi)\pi^\phi_e(a|s^\phi)}{d_{\pi^\phi_\mathcal{D}}(s^\phi)\pi^\phi_\mathcal{D}(a|s^\phi)}\sum_{s \in \phi^{-1}(s^\phi)}d_{\pi_\mathcal{D}}(s)\pi_\mathcal{D}(a|s)r^\phi(s^\phi,a)\\
    &\labelrel={theorem:our_unbiased_3} \sum_{s^\phi, a}\frac{d_{\pi^\phi_e}(s^\phi)\pi^\phi_e(a|s^\phi)}{d_{\pi^\phi_\mathcal{D}}(s^\phi)\pi^\phi_\mathcal{D}(a|s^\phi)}r^\phi(s^\phi,a)\sum_{s \in \phi^{-1}(s^\phi)}d_{\pi_\mathcal{D}}(s)\pi_\mathcal{D}(a|s)\\
    &= \sum_{s^\phi, a}\frac{d_{\pi^\phi_e}(s^\phi)\pi^\phi_e(a|s^\phi)}{d_{\pi^\phi_\mathcal{D}}(s^\phi)\pi^\phi_\mathcal{D}(a|s^\phi)}r^\phi(s^\phi,a)\sum_{s'\in \phi^{-1}(s^\phi)} d_{\pi_\mathcal{D}}(s')\sum_{s \in \phi^{-1}(s^\phi)}\frac{d_{\pi_\mathcal{D}}(s)\pi_\mathcal{D}(a|s)}{\sum_{s'\in \phi^{-1}(s^\phi)} d_{\pi_\mathcal{D}}(s')}\\
    &= \sum_{s^\phi, a}\frac{d_{\pi^\phi_e}(s^\phi)\pi^\phi_e(a|s^\phi)}{d_{\pi^\phi_\mathcal{D}}(s^\phi)\pi^\phi_\mathcal{D}(a|s^\phi)}r^\phi(s^\phi,a)(d_{\pi^\phi_\mathcal{D}}(s^\phi)\pi^\phi_\mathcal{D}(a|
    s^\phi))\\
    &= \sum_{s^\phi, a}d_{\pi^\phi_e}(s^\phi)\pi^\phi_e(a|s^\phi)r^\phi(s^\phi,a)\\
    &\labelrel={theorem:our_unbiased_4}  \rho(\pi^\phi_e)\\
    \E_{s \sim d_{\pi_\mathcal{D}}, a \sim \pi_\mathcal{D}}\left[X\right] &\labelrel={theorem:our_unbiased_5} \rho(\pi_e)
\end{align*}
where \eqref{theorem:our_unbiased_3} is due to Definition \ref{def:norm_weights_agg_states} and Assumption \ref{assumption:reward_equality},
\eqref{theorem:our_unbiased_4} is due to Assumption \ref{assumption:coverage}, and 
\eqref{theorem:our_unbiased_5} is due to Proposition \ref{lemma:g_val_abs_val_equality}.

We have the bias defined as:
\begin{align*}
    \text{Bias}[\hat{\rho}(\pi^\phi_e)] &= \E_{s \sim d_{\pi_\mathcal{D}}, a \sim \pi_\mathcal{D}}[\hat{\rho}(\pi^\phi_e)] - R_{\pi_e}\\
    &= \E_{s \sim d_{\pi_\mathcal{D}}, a \sim \pi_\mathcal{D}}\left[\frac{1}{mT}\sum_{i=1}^{mT} \frac{d_{\pi^\phi_e}(s_i^\phi)\pi^\phi_e(a_i|s_i^\phi)}{d_{\pi^\phi_\mathcal{D}}(s_i^\phi)\pi^\phi_\mathcal{D}(a_i|s_i^\phi)}r^\phi(s_i^\phi,a_i)\right] - R_{\pi_e}\\
    &\labelrel={theorem:our_unbiased_6}  \frac{1}{mT}\sum_{i=1}^{mT}\E_{s_i \sim d_{\pi_\mathcal{D}}, a_i \sim \pi_\mathcal{D}}\left[ \frac{d_{\pi^\phi_e}(s_i^\phi)\pi^\phi_e(a_i|s_i^\phi)}{d_{\pi^\phi_\mathcal{D}}(s_i^\phi)\pi^\phi_\mathcal{D}(a_i|s_i^\phi)}r^\phi(s_i^\phi,a_i)\right] - R_{\pi_e}\\
    &\labelrel={theorem:our_unbiased_7} \left(\frac{1}{mT}\sum_{i=1}^{mT}R_{\pi_e}\right) - R_{\pi_e}\\
    &= \rho(\pi_e) - \rho(\pi_e)\\
    \text{Bias}[\hat{\rho}(\pi^\phi_e)]  &= 0
\end{align*}
where \eqref{theorem:our_unbiased_6} is due to linearity of expectation and \eqref{theorem:our_unbiased_7} is due to expectation of a single sample.
\end{proof}

\theoremOursMSEConsistent*
\begin{proof}
    We have the MSE of $\hat{\rho}(\pi^\phi_e)$ w.r.t $\rho(\pi_e)$
    defined in terms
    of the bias and variance as follows:

\begin{align*}
    \text{MSE}(\hat{\rho}(\pi^\phi_e)) = \E[(\hat{\rho}(\pi^\phi_e) - \rho(\pi_e))^2] &= \text{Var}[\hat{\rho}(\pi^\phi_e)] + (\text{Bias}[\hat{\rho}(\pi^\phi_e)])^2\\
    &\labelrel={theorem:our_mse_consistency_1}  \text{Var}[\hat{\rho}(\pi^\phi_e)]\\
    &= \frac{1}{(mT)^2} \text{Var}\left(\sum_{i=1}^{mT} \frac{d_{\pi^\phi_e}(s_i^\phi)\pi^\phi_e(a_i|s_i^\phi)}{d_{\pi^\phi_\mathcal{D}}(s_i^\phi)\pi^\phi_\mathcal{D}(a_i|s_i^\phi)}r^\phi(s_i^\phi,a_i)\right)
\end{align*}
where \eqref{theorem:our_mse_consistency_1} is because $\hat{\rho}$ is an unbiased estimator as shown in Theorem \ref{theorem:our_unbiased}.

Due to Assumptions \ref{assumption:coverage} and \ref{assumption:bounded_reward}, $\left(\sum_{i=1}^{mT} \frac{d_{\pi^\phi_e}(s_i^\phi)\pi^\phi_e(a_i|s_i^\phi)}{d_{\pi^\phi_\mathcal{D}}(s_i^\phi)\pi^\phi_\mathcal{D}(a_i|s_i^\phi)}r^\phi(s_i^\phi,a_i)\right)$ is a bounded value. Thus, as $mT\to\infty$, $\text{Var}[\hat{\rho}(\pi^\phi_e)]\to0$. We then have $\lim_{mT\to\infty}\E[(\hat{\rho}(\pi^\phi_e) - \rho(\pi_e))^2] = 0$. Thus,
the estimator $\hat{\rho}(\pi^\phi_e)$ is consistent in MSE.
\end{proof}

\begin{restatable}{corollary}{theoremOursStronglyConsistent}If Assumption
    \ref{assumption:reward_equality} holds, then
    our estimator, $\hat{\rho}(\pi^\phi_e)$
    as defined in Equation \ref{eq:our_estimator} is an asymptotically strongly consistent estimator 
    of $\rho(\pi_e)$.
    \label{theorem:our_consistency}
\end{restatable}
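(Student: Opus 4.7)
The plan is to derive this corollary directly from Theorem \ref{theorem:our_mse_consistency} together with Lemma \ref{lemma:thomas_mse_consistency}, since strong consistency is by definition almost-sure convergence of $\hat{\rho}(\pi^\phi_e)$ to $\rho(\pi_e)$, and Lemma \ref{lemma:thomas_mse_consistency} gives an equivalence between MSE convergence and almost-sure convergence whenever the underlying sequence is uniformly bounded. So essentially all the analytical work has already been done in proving Theorem \ref{theorem:our_mse_consistency}; what remains is a boundedness check plus an invocation of the lemma.

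Concretely, I would proceed as follows. First, I would recall from Theorem \ref{theorem:our_mse_consistency} that $\lim_{N\to\infty}\E\left[(\hat{\rho}(\pi^\phi_e) - \rho(\pi_e))^2\right] = 0$ under Assumption \ref{assumption:reward_equality} (together with the standard Assumptions \ref{assumption:coverage} and \ref{assumption:bounded_reward} from the appendix). Second, I would verify that the sequence of random variables $\{\hat{\rho}_N(\pi^\phi_e)\}_{N=1}^\infty$ is uniformly bounded: each summand $\frac{d_{\pi^\phi_e}(s_i^\phi, a_i)}{d_{\mathcal{D}^\phi}(s_i^\phi, a_i)} r^\phi(s_i^\phi, a_i)$ is bounded because the ratio is finite by Assumption \ref{assumption:coverage} (extended to the abstract space using Lemma \ref{lemma:exp_f_g_abs_equal}) and the reward is bounded by some $R_{\max}<\infty$ per Assumption \ref{assumption:bounded_reward}; the average over $N$ samples therefore lies in a fixed bounded interval independent of $N$.

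Third, with uniform boundedness in hand, I would apply Lemma \ref{lemma:thomas_mse_consistency}, which states that for uniformly bounded sequences $X_n \overset{a.s.}{\to} X$ if and only if $\lim_{n\to\infty}\E[(X_n - X)^2] = 0$. Setting $X_n := \hat{\rho}_N(\pi^\phi_e)$ and $X := \rho(\pi_e)$, the MSE convergence from Theorem \ref{theorem:our_mse_consistency} yields $\hat{\rho}(\pi^\phi_e) \overset{a.s.}{\to} \rho(\pi_e)$, which by the definition of a (strongly) consistent estimator is exactly the statement of Corollary \ref{theorem:our_consistency}.

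I do not foresee any real obstacle here: the only subtlety is ensuring that the uniform boundedness hypothesis of Lemma \ref{lemma:thomas_mse_consistency} is rigorously justified. The density ratio $d_{\pi^\phi_e}/d_{\mathcal{D}^\phi}$ is finite on the support of $d_{\mathcal{D}^\phi}$ thanks to Assumption \ref{assumption:coverage}, and $r^\phi$ inherits boundedness from $r$ via Assumption \ref{assumption:reward_equality} together with Assumption \ref{assumption:bounded_reward}. Once that is noted, the rest of the argument is a one-line appeal to the lemma.
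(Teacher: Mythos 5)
Your proposal matches the paper's own proof exactly: both cite Theorem \ref{theorem:our_mse_consistency} for MSE convergence and then apply Lemma \ref{lemma:thomas_mse_consistency} to upgrade this to almost-sure convergence. Your explicit verification of the uniform-boundedness hypothesis (via Assumptions \ref{assumption:coverage} and \ref{assumption:bounded_reward}) is a small but welcome addition that the paper leaves implicit.
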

\begin{proof}
    Theorem \ref{theorem:our_mse_consistency} showed that $\hat{\rho}(\pi^\phi_e)$ is consistent in terms
    of MSE. Then by applying Lemma \ref{lemma:thomas_mse_consistency}, we have
    $\hat{\rho}(\pi^\phi_e)$ to be
    an asymptotically strongly consistent estimator
    of $\rho(\pi_e)$. That is, $\hat{\rho}(\pi^\phi_e) \overset{a.s.}{\to}\rho(\pi_e)$.
\end{proof}

Note that this proof is very similar to the one given in Theorem \ref{theorem:our_lower_var_ratios}, where the main the difference is
that we are analyzing the variance of an OPE estimator on given
batch of data.
\theoremOursVariance*
\begin{proof}

We first consider the general form of the variance of
the baseline estimator, $\hat{\rho}(\pi_e)$ (and the 
similar form applies to 
$\hat{\rho}(\pi^\phi_e)$ ). For simplicity in notation, we take
the batch size, $m = 1$, but the analysis holds for
general $m$.

\begin{align*}
    \text{Var}[\hat{\rho}(\pi_e)] &= \text{Var}\left(\frac{1}{T}\sum_{t=1}^T \frac{d_{\pi_e}(s_t)\pi_e(a_t|s_t)}{d_{\pi_\mathcal{D}}(s_t)\pi_\mathcal{D}(a_t|s_t)}r(s_t,a_t)\right)\\
    &= \frac{1}{T^2}\left(\sum_{t=1}^T\underbrace{\text{Var}\left( \frac{d_{\pi_e}(s_t)\pi_e(a_t|s_t)}{d_{\pi_\mathcal{D}}(s_t)\pi_\mathcal{D}(a_t|s_t)}r(s_t,a_t)\right)}_{V^g_t} + 2\sum_{t<k}\text{Cov}\left(\frac{d_{\pi_e}(s_t,a_t)}{d_{\pi_\mathcal{D}}(s_t,a_t)}r(s_t, a_t), \frac{d_{\pi_e}(s_k,a_k)}{d_{\pi_\mathcal{D}}(s_k,a_k)}r(s_k, a_k)\right)\right)
\end{align*}

Here we have $V_t^g$ to be the variance of a single sample at time
$t$. Similarly, for our estimator, we have $V_t^\phi = \text{Var}\left( \frac{d_{\pi^\phi_e}(s^\phi)\pi^\phi_e(a|s^\phi)}{d_{\pi^\phi_\mathcal{D}}(s^\phi)\pi^\phi_\mathcal{D}(a|s^\phi)}r^\phi(s^\phi,a)\right)$. We will show that $V_t^\phi\leq V_t^g$. We drop the subscript $t$
for convenience since the analysis applies for all $t$.

Before comparing the variances, we note that due to Assumption
\ref{assumption:reward_equality} and \ref{assumption:coverage}, Lemma
\ref{lemma:exp_f_g_abs_equal}, and Proposition \ref{lemma:g_val_abs_val_equality}:
\begin{align*}
    \E_{s\sim d_{\pi_\mathcal{D}}, a\sim \pi_\mathcal{D}}\left[\frac{d_{\pi_e}(s)\pi_e(a|s)}{d_{\pi_\mathcal{D}}(s)\pi_\mathcal{D}(a|s)}r(s,a)\right] = \E_{s\sim d_{\pi_\mathcal{D}}, a\sim \pi_\mathcal{D}}\left[\frac{d_{\pi^\phi_e}(\phi(s))\pi^\phi_e(a|\phi(s))}{d_{\pi^\phi_\mathcal{D}}(\phi(s))\pi^\phi_\mathcal{D}(a|\phi(s))}r^\phi(\phi(s),a)\right] = \rho(\pi_e)
\end{align*}

Consider the difference between the two variances. 
\begin{align*}
    D &= V^g - V^\phi\\
    &= \text{Var}\left( \frac{d_{\pi_e}(s)\pi_e(a|s)}{d_{\pi_\mathcal{D}}(s)\pi_\mathcal{D}(a|s)}r(s,a)\right) - \text{Var}\left( \frac{d_{\pi^\phi_e}(\phi(s))\pi^\phi_e(a|\phi(s))}{d_{\pi^\phi_\mathcal{D}}(\phi(s))\pi^\phi_\mathcal{D}(a|\phi(s))}r^\phi(\phi(s),a)\right)\\
    &= \E_{s\sim d_{\pi_\mathcal{D}}, a\sim \pi_\mathcal{D}}\left[\left(\frac{d_{\pi_e}(s)\pi_e(a|s)}{d_{\pi_\mathcal{D}}(s)\pi_\mathcal{D}(a|s)}r(s,a)\right)^2\right]
    - \E_{s \sim d_{\pi_\mathcal{D}}, a\sim \pi_\mathcal{D}}\left[\left(\frac{d_{\pi^\phi_e}(\phi(s))\pi^\phi_e(a|\phi(s))}{d_{\pi^\phi_\mathcal{D}}(\phi(s))\pi^\phi_\mathcal{D}(a|\phi(s))}r^\phi(\phi(s),a)\right)^2\right]\\
    &\labelrel={theorem:our_variance_1} \E_{s\sim d_{\pi_\mathcal{D}}, a\sim \pi_\mathcal{D}}\left[\left(\frac{d_{\pi_e}(s)\pi_e(a|s)}{d_{\pi_\mathcal{D}}(s)\pi_\mathcal{D}(a|s)}r(s,a)\right)^2\right]
    - \E_{s^\phi \sim d_{\pi^\phi_\mathcal{D}}, a\sim \pi_\mathcal{D}^\phi}\left[\left(\frac{d_{\pi^\phi_e}(s^\phi)\pi^\phi_e(a|s^\phi)}{d_{\pi^\phi_\mathcal{D}}(s^\phi)\pi^\phi_\mathcal{D}(a|s^\phi)}r^\phi(s^\phi,a)\right)^2\right]\\
    &= \sum_{s, a}d_{\pi_\mathcal{D}}(s)\pi_\mathcal{D}(a|s)\left(\frac{d_{\pi_e}(s)\pi_e(a|s)}{d_{\pi_\mathcal{D}}(s)\pi_\mathcal{D}(a|s)}r(s,a)\right)^2
    - \sum_{s^\phi, a}d_{\pi^\phi_\mathcal{D}}(s^\phi)\pi^\phi_\mathcal{D}(a|s^\phi) \left(\frac{d_{\pi^\phi_e}(s^\phi)\pi^\phi_e(a|s^\phi)}{d_{\pi^\phi_\mathcal{D}}(s^\phi)\pi^\phi_\mathcal{D}(a|s^\phi)}r^\phi(s^\phi,a)\right)^2
\end{align*}
where \eqref{theorem:our_variance_1} is due to  Lemma \ref{lemma:exp_f_g_abs_equal}.

We can analyze this difference by 
looking at one abstract state and one action and all the states that belong to it. That
is, for a fixed abstract state, $s^\phi$, and fixed action, $a$, we have:

\begin{align*}
    D' &= \sum_{s\in s^\phi}d_{\pi_\mathcal{D}}(s)\pi_\mathcal{D}(a|s)\left(\frac{d_{\pi_e}(s)\pi_e(a|s)}{d_{\pi_\mathcal{D}}(s)\pi_\mathcal{D}(a|s)}r(s,a)\right)^2
    - \left(d_{\pi^\phi_\mathcal{D}}(s^\phi)\pi^\phi_\mathcal{D}(a|s^\phi) \left(\frac{d_{\pi^\phi_e}(s^\phi)\pi^\phi_e(a|s^\phi)}{d_{\pi^\phi_\mathcal{D}}(s^\phi)\pi^\phi_\mathcal{D}(a|s^\phi)}r^\phi(s^\phi,a)\right)^2\right)\\
    &= \left(\sum_{s\in s^\phi}\frac{(d_{\pi_e}(s)\pi_e(a|s)r(s,a))^2}{d_{\pi_\mathcal{D}}(s)\pi_\mathcal{D}(a|s)}\right)
    - \left( \frac{(d_{\pi^\phi_e}(s^\phi)\pi^\phi_e(a|s^\phi)r^\phi(s^\phi,a))^2}{d_{\pi^\phi_\mathcal{D}}(s^\phi)\pi^\phi_\mathcal{D}(a|s^\phi)}\right)\\
    &\labelrel={theorem:our_variance_2}  r(s,a)^2\left(\left(\sum_{s\in s^\phi}\frac{(d_{\pi_e}(s)\pi_e(a|s))^2}{d_{\pi_\mathcal{D}}(s)\pi_\mathcal{D}(a|s)}\right)
    - \left( \frac{(d_{\pi^\phi_e}(s^\phi)\pi^\phi_e(a|s^\phi)^2}{d_{\pi^\phi_\mathcal{D}}(s^\phi)\pi^\phi_\mathcal{D}(a|s^\phi)}\right)\right)\\
\end{align*}
where \eqref{theorem:our_variance_2} is due to Definition \ref{def:norm_weights_agg_states} and Assumption \ref{assumption:reward_equality}.

If we can show that $D' \geq 0$ for all possible sizes of $|s^\phi|$, we will
the have the original difference, $D$, a sum of only non-negative terms, thus proving Theorem \ref{theorem:our_variance}. We will prove $D' \geq 0$ by proof by induction on the size of $|s^\phi|$
from $1$ to some $n \leq |\sset|$. First note that $r(s,a)^2 > 0$, so we
can ignore this term.

Let our statement to prove, $P(n)$ be that $D' \geq 0$ where $n = |s^\phi|$. This is
trivially true for $P(1)$ where the ground state equals the abstract state. Now consider
the inductive hypothesis, $P(n)$ is true for $n\geq1$. Now with the inductive step,
we must show that $P(n+1)$ is true given $P(n)$ is true. Starting with the 
inductive hypothesis:

\begin{align*}
    D'' &= \underbrace{\left(\sum_{s\in s^\phi}\frac{(d_{\pi_e}(s)\pi_e(a|s))^2}{d_{\pi_\mathcal{D}}(s)\pi_\mathcal{D}(a|s)}\right)}_{S}
    - \left( \frac{(\overbrace{(d_{\pi^\phi_e}(s^\phi)\pi^\phi_e(a|s^\phi)}^{C})^2}{\underbrace{d_{\pi^\phi_\mathcal{D}}(s^\phi)\pi^\phi_\mathcal{D}(a|s^\phi)}_{C'}}\right) \geq 0\\
\end{align*}

% Let 
% $S = \left(\sum_{s\in\phi(s)}\frac{(d^{\pi_e}(s)\pi_e(a|s))^2}{d^{\pi_\mathcal{D}}(s)\pi_\mathcal{D}(a|s)}\right)$, $C = d^{\pi^\phi_e}(\phi(s))\pi^\phi_e(a|\phi(s))$, 
% and $C' = d^{\pi^\phi_\mathcal{D}}(\phi(s))\pi^\phi_\mathcal{D}(a|\phi(s))$. Then we have:
After making the substitutions, we have:
\begin{align}
    C^2 \leq SC'
\end{align}
We have the above result holding true for when the $|s^\phi| = n$. Now consider 
the inductive step in relation to the inductive hypothesis where a new state, $s_{n+1}$ is added to the abstract state. We have the
following difference:

\begin{align*}
    D'' &= S + \frac{\left(d_{\pi_e}(s_{n+1})\pi_e(a|s_{n+1})\right)^2}{d_{\pi_\mathcal{D}}(s_{n+1})\pi_\mathcal{D}(a|s_{n+1})} - \frac{(C + d_{\pi_e}(s_{n+1})\pi_e(a|s_{n+1}))^2}{C' + d_{\pi_\mathcal{D}}(s_{n+1})\pi_\mathcal{D}(a|s_{n+1})}\\
\end{align*}
For ease in notation, let $x = d_{\pi_e}(s_{n+1})\pi_e(a|s_{n+1})$ and 
$y = d_{\pi_\mathcal{D}}(s_{n+1})\pi_\mathcal{D}(a|s_{n+1})$. The above difference is then:
\begin{align*}
    D'' &= S + \frac{x^2}{y} - \frac{(C + x)^2}{C' + y}\\
        &= \frac{Sy + x^2}{y} - \frac{(C + x)^2}{C' + y}\\
        &= \frac{1}{y(C' + y)}((Sy + x^2)(C' + y) - (C + x)^2y)\\
        &= \frac{1}{y(C' + y)} (SyC' + Sy^2 + x^2C' + x^2y - C^2y - x^2y
    -2Cxy)\\
    &= \frac{1}{y(C' + y)}(SyC' + Sy^2 + x^2C' - C^2y -2Cxy)\\
\end{align*}
The above difference, $D''$, is minimized most when $C$ is as large as possible. From the
inductive hypothesis, we have $C\leq \sqrt{SC'}$. The minimum difference can
be written as:
\begin{align*}
 D''&= \frac{1}{y(C' + y)}(Sy^2 + x^2C' -2\sqrt{SC'}xy)\\
    &= \frac{1}{y(C' + y)} (y\sqrt{S} - x\sqrt{C'})^2\\
    &\geq 0
\end{align*}

So we have $D'' \geq 0$ for $|s^\phi| = n + 1$, which means $D' \geq 0$ . We have showed that $P(n)$ is true for all $n$. We now have the original
difference, $D$, to be a sum of non-negative terms after performing this same
grouping for all abstract states and actions, which results in:
\begin{align*}
    V^\phi &\leq V^g\\
    \text{Var}\left( \frac{d_{\pi^\phi_e}(s^\phi)\pi^\phi_e(a|s^\phi)}{d_{\pi^\phi_\mathcal{D}}(s^\phi)\pi^\phi_\mathcal{D}(a|s^\phi)}r^\phi(s^\phi,a)\right) &\leq
    \text{Var}\left( \frac{d_{\pi_e}(s)\pi_e(a|s)}{d_{\pi_\mathcal{D}}(s)\pi_\mathcal{D}(a|s)}r(s,a)\right)
\end{align*}

Thus, when the covariance terms interact favorably, that is, if for any fixed $1\leq t < k \leq T$,\\ $\text{Cov}\left(\frac{d_{\pi^\phi_e}(s_t^\phi,a_t)}{d_{\pi^\phi_\mathcal{D}}(s_t^\phi,a_t)}r^\phi(s_t^\phi, a_t), \frac{d_{\pi^\phi_e}(s_k^\phi,a_k)}{d_{\pi^\phi_\mathcal{D}}(s_k^\phi,a_k)}r^\phi(s_k^\phi, a_k)\right) \leq
    \text{Cov}\left(\frac{d_{\pi_e}(s_t,a_t)}{d_{\pi_\mathcal{D}}(s_t,a_t)}r(s_t, a_t), \frac{d_{\pi_e}(s_k,a_k)}{d_{\pi_\mathcal{D}}(s_k,a_k)}r(s_k, a_k)\right)$, we have:
\begin{align*}
    \text{Var}(\hat{\rho}(\pi^\phi_e))\leq \text{Var}(\hat{\rho}(\pi_e))
\end{align*}
\end{proof}

\subsection{Additional AbstractBestDICE Derivation Details}
\label{app:abs_dice_der}
    
    We proceed assuming $\phi$ satisfies Assumptions \ref{assumption:reward_equality}, \ref{assumption:trans_bisim},
    and \ref{assumption:pi_act_equality} from the main text. We base
    the following derivation on that of DualDICE \cite{nachum2019dualdice}. The only difference between DualDICE and BestDICE (which we use in our experiments) is that the latter enforces a positivity and unit mean constraint on the ratios.
    
    \textbf{Technical observation} Consider the same technical observation made in DualDICE: the solution to the scalar convex optimization problem
    $\min_x J(x)  \coloneqq \frac{1}{2}mx^2 - nx$,
    where $m\in\mathbb{R}_{>0}$ and $n\in\mathbb{R}_{\geq 0}$, is $x^* = \frac{n}{m}$. This observation can then be connected 
    to a similar convex
    problem but in terms of functions where the ratio $\frac{n}{m}$ 
    corresponds to the true abstract ratios, $d_{\pi^\phi_e}(s^\phi, a)/d_{\mathcal{D}^\phi}(s^\phi, a)$. Consider the following
    convex problem where $x \in \sset^\phi\times\aset\to[0, C] \subset \mathbb{R}$:
    \begin{equation*}
        \begin{split}
        \min_{x:\sset^\phi\times\aset\to\mathcal{C}} J(x) &:= \frac{1}{2}\E_{(s^\phi, a) \sim d_{\mathcal{D}^\phi}}[x(s^\phi, a)^2]
        - \E_{(s^\phi, a) \sim d_{\pi^\phi_e}}[x(s^\phi, a)]
        \end{split}
    \end{equation*}
    The solution to this optimization problem is the true abstract state-action ratios:
    $x^*(s^\phi, a) = d_{\pi^\phi_e}(s^\phi, a)/d_{\mathcal{D}^\phi}(s^\phi, a)$.
    
     \textbf{Change-of-variables} As also done by \citet{nachum2019dualdice}, we next consider a change-of-variables trick to obtain an objective that can be approximated with samples from $\mathcal{D}^\phi$. 
     However, as noted in Section \ref{sec:abs_mis}, there are two challenges we must overcome before applying this procedure on $\mathcal{D}^\phi$. To overcome these challenges, we identified Assumptions \ref{assumption:trans_bisim} and \ref{assumption:pi_act_equality} as conditions that $\phi$ must satisfy. With a satisfactory $\phi$, we can proceed as follows. Consider an arbitrary
    function $\nu \in \sset^\phi\times\aset\to\mathbb{R}$ that
    satisfies $\nu(s^\phi, a) = x(s^\phi, a) + \mathcal{B}^{\pi^\phi_e}\nu(s^\phi, a)$
    where  $\mathcal{B}^{\pi^\phi_e}\nu(s^\phi, a) = \gamma\E_{s'^\phi\sim {P^\phi}(s^\phi, a), a'\sim\pi^\phi_e(s'^\phi)}[\nu(s'^\phi, a')]$. Since $x(s^\phi, a) \in [0, C]$ and $\gamma < 1$, $\nu$ is well-defined. We can
    the replace  $x(s^\phi,a)$
    with $(\nu -\mathcal{B}^{\pi^\phi_e}\nu)(s^\phi,a)$. For the second
    expectation, we define $\beta^\phi_t(s^\phi) := \PR(s^\phi=s^\phi_t | s^\phi_0\sim\beta^\phi, a_k\sim\pi_e^\phi(s^\phi_k), s^\phi_{k+1}\sim P^\phi(s^\phi_k, a_k)), 0\leq k < t$
    as the abstract-state visitation probability at time-step $t$ by the 
    $\pi^\phi_e$ in the abstract MDP, and then have:
    \begin{align*}
        \E_{(s^\phi, a) \sim d_{\pi^\phi_e}}[x(s^\phi, a)] &= (1 - \gamma) \sum_{t=0}^\infty\gamma^t\E_{s^\phi\sim\beta^\phi_t, a\sim\pi^\phi_e}[
        x(s^\phi, a)]\\
        &=(1 - \gamma) \sum_{t=0}^\infty\gamma^t\E_{s^\phi\sim\beta_t, a\sim\pi^\phi_e}[
        \nu(s^\phi, a)
        - \gamma\E_{s'^{\phi}\sim P^\phi(s^\phi,a), 
        a'\sim\pi^\phi(s^{\phi'})}[\nu(s'{^\phi}, a')]\\
        &=(1 - \gamma) \sum_{t=0}^\infty\gamma^t\E_{s^\phi\sim\beta^\phi_t, a\sim\pi^\phi_e}[
        \nu(s^\phi, a)] 
        - (1 - \gamma) \sum_{t=0}^\infty\gamma^{t+1}\E_{s^\phi\sim\beta^\phi_{t+1}, a\sim\pi^\phi_e}[\nu(s{^\phi}, a)]\\
        \E_{(s^\phi, a) \sim d_{\pi^\phi_e}}[x(s^\phi, a)] &= (1 - \gamma) \E_{s^\phi\sim\beta^\phi_0, a\sim\pi^\phi_e}[
        \nu(s^\phi, a)]
    \end{align*}
    where $\beta^\phi_0 = d_{0^\phi}$ is the initial abstract state distribution. After the two changes, we then have:
    \begin{equation}
        \begin{split}
        \min_{\nu:\sset^\phi\times\aset\to \mathbb{R}} J(\nu) &:= \frac{1}{2}\E_{\mathcal{D}^\phi}[(\nu - \mathcal{B}^{\pi^\phi_e}\nu)(s^\phi,a)^2] 
        - (1 - \gamma) \E_{s^\phi_0 \sim d_{0^\phi}, a_0\sim \pi^\phi_e}[\nu(s^\phi_0, a_0)]
        \end{split}
        \label{eq:pre_main_opt}
    \end{equation}
    where we have the optimal solution $x^*(s^\phi,a) = (\nu^* - \mathcal{B}^{\pi^\phi_e}\nu^*)(s^\phi,a) = d_{\pi^\phi_e}(s^\phi, a)/d_{\mathcal{D}^\phi}(s^\phi, a)$. 
    
    \subsubsection{Optimization techniques} The optimization problem in Equation (\ref{eq:pre_main_opt})
    presents a couple of optimization challenges. These challenges
    are the same as in DualDICE \cite{nachum2019dualdice} (page 5). Namely,
    \begin{itemize}
        \item The quantity $(\nu - \nu\mathcal{B}^{\pi^\phi_e})(s^\phi,a)$
        involves a conditional exepctation inside the square. When
        the environment dynamics are stochastic and/or the action
        space is large or continuous, this quantity may not be
        readily optimized using stochastic techniques
        \item Even once $\nu^*$ is determined, $(\nu^* - \nu^*\mathcal{B}^{\pi^\phi_e})(s^\phi,a)$ may not be easily
        computable due to the same reasons as above.
    \end{itemize}
    
    To overcome these challenges, Fenchal duality is invoked
    \cite{rockafellar1970opt} where for a convex function
    $f$, $f(x) = \max_{\zeta}x\cdot\zeta - f^*(\zeta)$,
    where $f^*$ is the Fenchal conjugate of $f$. When $f(x) = \frac{1}{2}x^2$, $f^*(\zeta) = \frac{1}{2}\zeta^2$. Thus,
    the optimization given in Equation \ref{eq:pre_main_opt}
    becomes:
    
    \begin{equation}
        \min_{\nu:\sset^\phi\times\aset\to \mathbb{R}} J(\nu) := \frac{1}{2}\E_{\mathcal{D}^\phi}\left[\max_{\zeta}\left((\nu - \mathcal{B}^{\pi^\phi_e}\nu)(s^\phi,a)\zeta - \frac{1}{2}\zeta^2\right)\right] 
        - (1 - \gamma) \E_{s^\phi_0 \sim d_{0^\phi}, a_0\sim \pi^\phi_e}[\nu(s^\phi_0, a_0)]
        \label{eq:main_opt_app}
    \end{equation}
    
    Then by the interchangability principle \cite{tyrrell1998interchange, dai2016interchange}, we replace
    the inner maximization over scalars to a maximization
    over $\zeta:\sset^\phi\times\aset\to\mathbb{R}$,
    resulting in the optimization given by Equation
    (\ref{eq:main_opt_app}):

    \begin{equation*}
        \begin{split}
        \min_{\nu:\sset^\phi\times\aset\to \mathbb{R}} \max_{\zeta:\sset^\phi\times\aset\to \mathbb{R}} J(\nu, \zeta) &:=\E_{\mathcal{D}^\phi}[\zeta(s^\phi,a) (\nu(s^\phi,a)
        - \gamma\E_{ a'\sim\pi^\phi_e}\nu(s'^\phi, a')) - \frac{1}{2} \zeta^2(s^\phi,a)] \\
        &- (1 - \gamma) \E_{s^\phi_0 \sim d_{0^\phi}, a_0\sim \pi^\phi_e}[\nu(s^\phi_0, a_0)]
        \end{split}
    \end{equation*}
    On applying the KKT conditions to the
    inner optimization, which is convex and quadratic, for any $\nu$
     the optimal $\zeta(s^\phi,a) = (\nu - \mathcal{B}^{\pi^\phi_e}\nu)(s^\phi,a)$. Therefore, for the optimal $\nu^*$,
    we have $\zeta^*(s^\phi,a) = (\nu^* - \mathcal{B}^{\pi^\phi_e}\nu^*)(s^\phi,a) = d_{\pi^\phi_e}(s^\phi, a)/d_{\mathcal{D}^\phi}(s^\phi, a)$. This derivation with
    DualDICE naturally extends to BestDICE since BestDICE
    solves the same optimization with the added constraints
    that $\E[\zeta] = 1$ and $\zeta\geq0$, which are
    properties we know the true ratios would satisfy.
\subsection{Additional Tabular Experiments
and Details}
\label{app:tabular_exp}
In this section, we include the remaining tabular experiments and information.
\begin{itemize}
    \item The true value of $\rho(\pi_e)$ was determined by executing  $\pi_e$ for $200$ episodes and averaging the results.
    \item In all experiments, we use $\gamma = 0.999$
    \item In the $\hat{d}_{\pi^\phi_e}$ vs. $d_{\pi^\phi_e}$ plots, we
    use batch size of $300$. The trajectory length was $100$ time-steps.
\end{itemize}
\begin{figure}[H]
    \centering
        \subfigure[Assumption \ref{assumption:trans_bisim} (transition dynamics) violated ]{\includegraphics[scale=0.4]{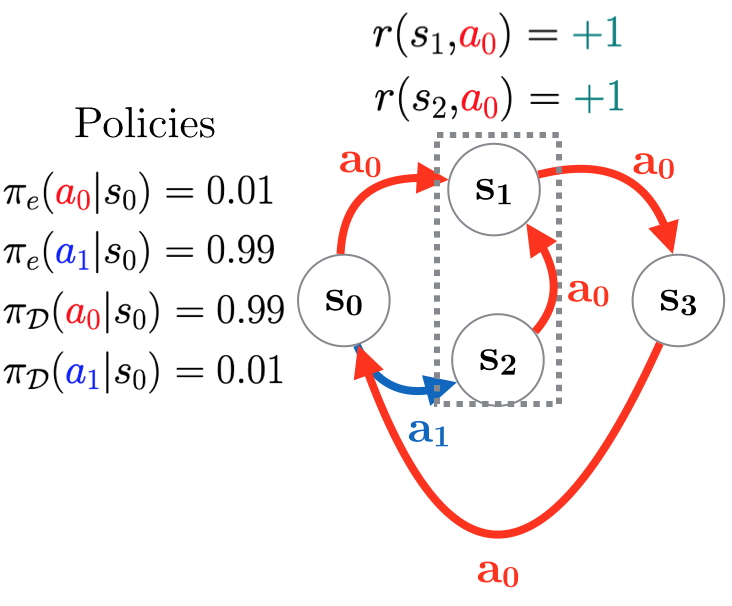}}
        \subfigure[Assumption \ref{assumption:pi_act_equality} ($\pi_e$
        action equality) violated ]{\includegraphics[scale=0.4]{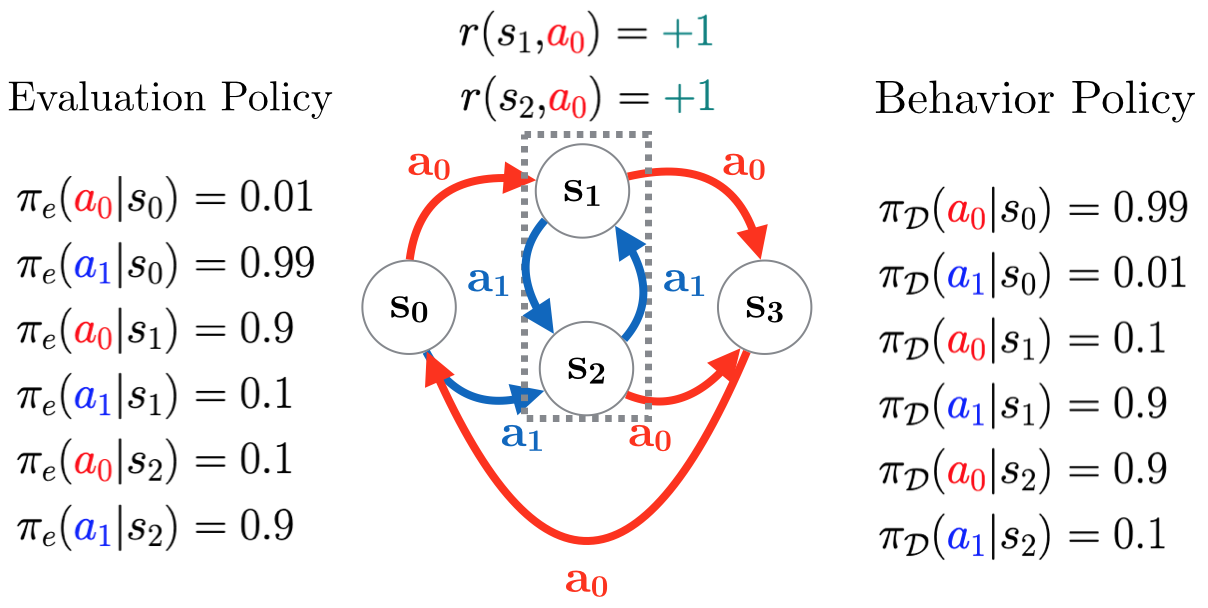}}\\
        \subfigure[Assumptions \ref{assumption:trans_bisim} and \ref{assumption:pi_act_equality} violated. ($a_0$ and $a_1$ in $s_2$ switched) ]{\includegraphics[scale=0.4]{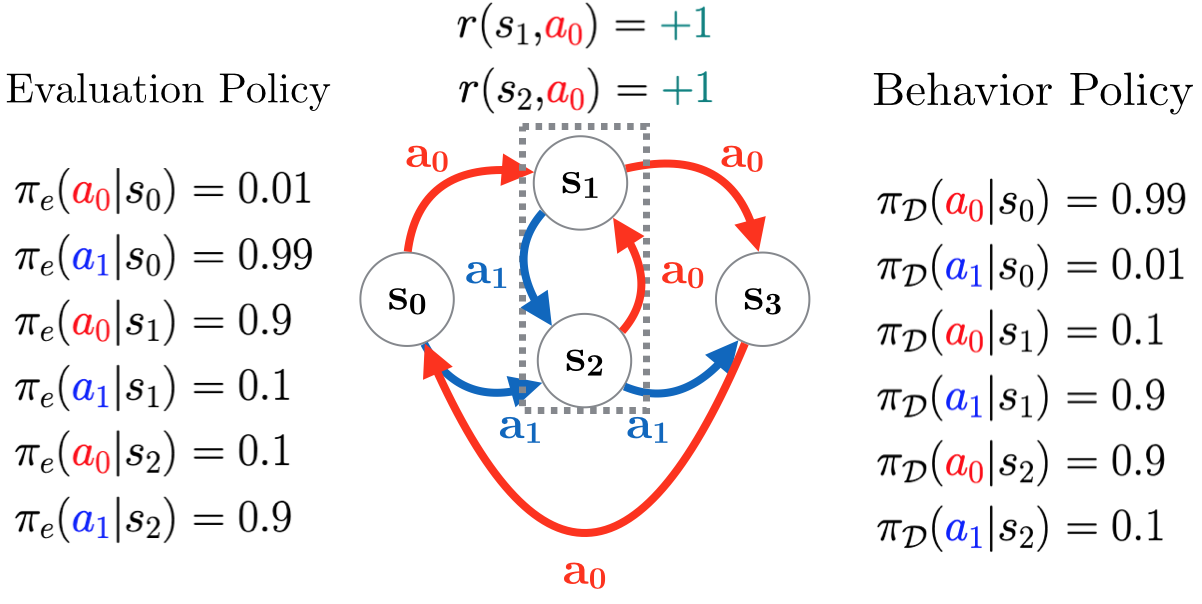}}
    \caption{\footnotesize Variations to the TwoPath MDP from Figure \ref{fig:hard_eg}
    to violate assumptions.}
    \label{fig:mdp_variations_violation}
\end{figure}

In practice, 
Assumptions \ref{assumption:trans_bisim} and \ref{assumption:pi_act_equality} may be hard to validate.
To study the impact of violation of these assumptions, we consider the following variations (Figure \ref{fig:mdp_variations_violation}) to
the original TwoPath MDP presented in Figure \ref{fig:hard_eg}. In all these MDPS, we only
aggregate $s_1$ and $s_2$. The policies in the last two MDPs are the same. Figure \ref{fig:true_ratio_exp2} illustrates the accuracy of AbstractBestDICE's
true ratio estimation. In general, we do see AbstractBestDICE fails to compute the true
abstract ratios when these assumptions are violated.

\begin{figure}[H]
    \centering
        \subfigure[Assumption \ref{assumption:trans_bisim} (transition dynamics) violated]{\label{fig:true_ratio_est_violate_assum2}\includegraphics[scale=0.22]{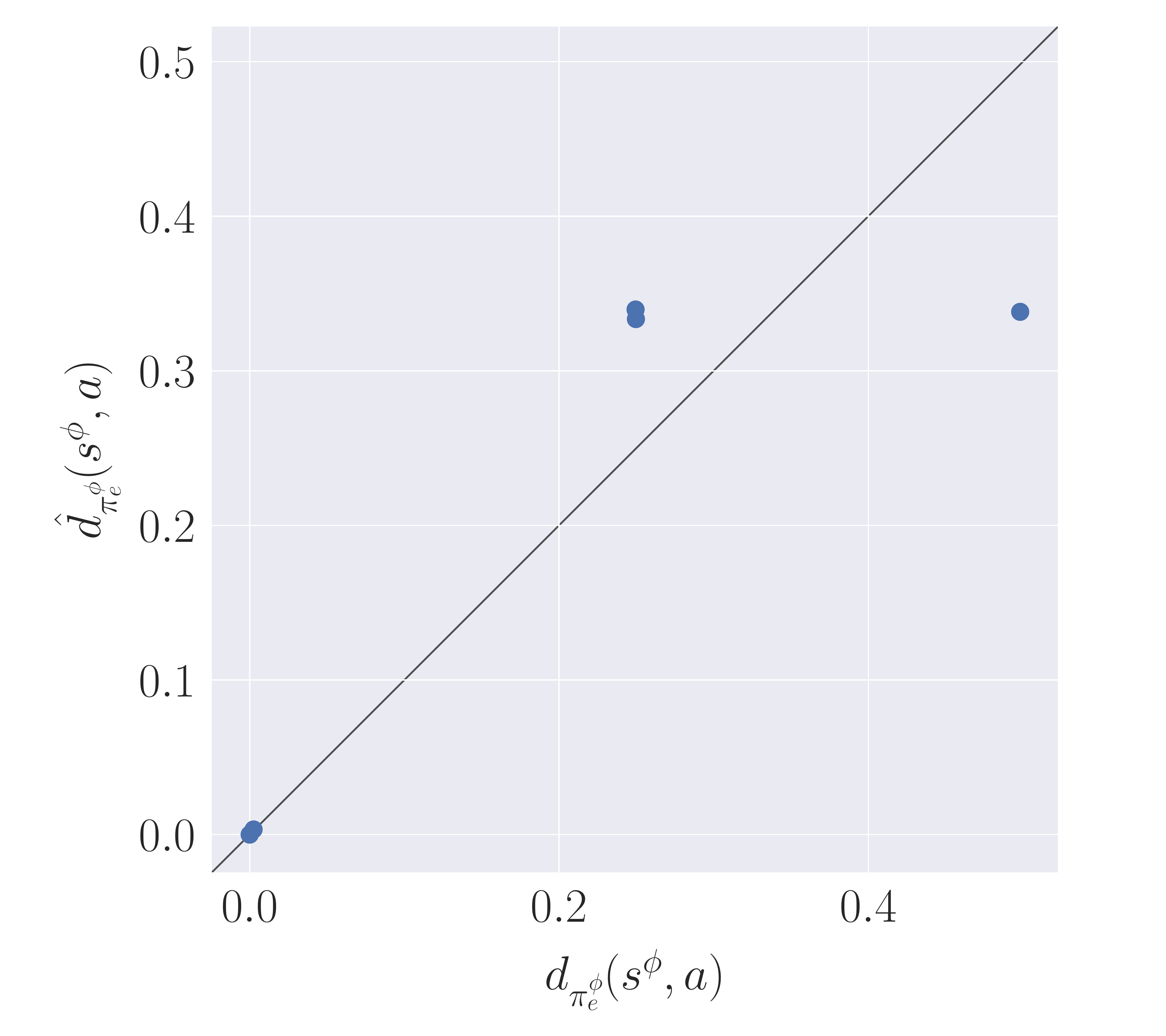}}
        \subfigure[Assumption \ref{assumption:pi_act_equality} ($\pi_e$
        action equality) violated ]{\label{fig:true_ratio_est_violate_assum3}\includegraphics[scale=0.22]{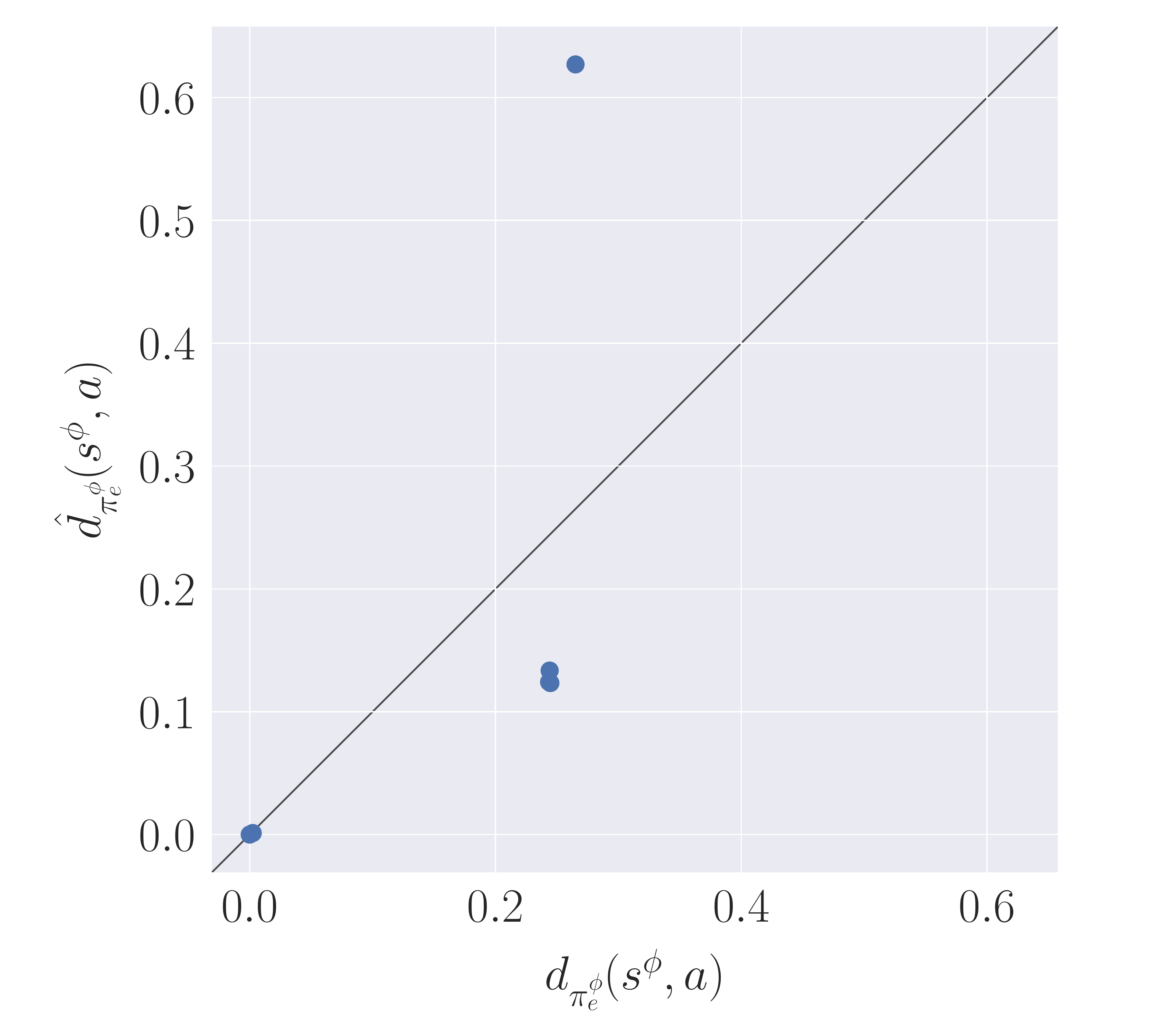}}
        \subfigure[Assumptions \ref{assumption:trans_bisim} and \ref{assumption:pi_act_equality} violated]{\label{fig:true_ratio_est_violate_assum2_assum3}\includegraphics[scale=0.22]{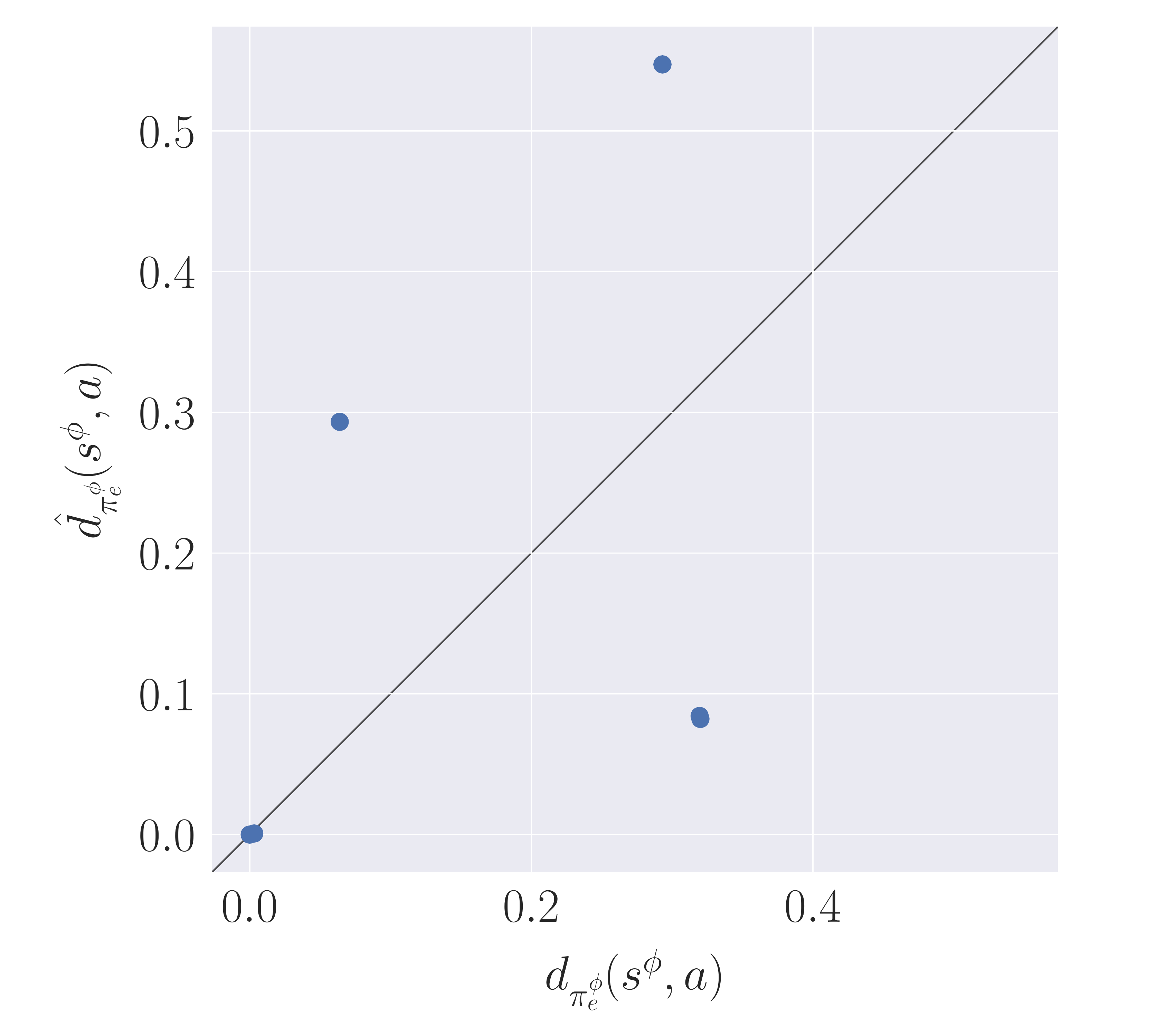}}
    \caption{\footnotesize True ratio estimations with assumption violations. Closer to the linear
    line is better.}
    \label{fig:true_ratio_exp2}
\end{figure}

\subsection{Additional Function Approximation
Experiments and Details}
\label{app:func_approx_exp}

\subsubsection{Oracle $\rho(\pi_e)$ Values} On each domain, we executed $\pi_e$
for $200$ episodes and averaged the results.

\subsubsection{Policies} For each of the domains,
we used the following policies:
\begin{itemize}
    \item Reacher: We trained a policy using
    PPO \cite{schulman2017ppo}. $\pi_e$
    was the trained policy after $100$k time-steps with
    a standard deviation of $0.1$ on the action
    dimensions while $\pi_b$ used $0.5$ as
    the standard deviation.
    \item Walker2D: We trained a policy using
    PPO \cite{schulman2017ppo}. $\pi_e$
    was the trained policy after $100$k time-steps with
    a standard deviation of $0.1$ on the action
    dimensions while $\pi_b$ used $0.5$ as
    the standard deviation.
    \item Pusher: We trained a policy using
    PPO \cite{schulman2017ppo}. $\pi_e$
    was the trained policy after $100$k time-steps with
    a standard deviation of $0.1$ on the action
    dimensions while $\pi_b$ used $0.5$ as
    the standard deviation.
    \item AntUMaze: We used the policies
    made available \cite{fu2021opebench}.
    $\pi_e$ was the final 10th snapshot
    saved and $\pi_b$ was the 5th snapshot.
    Each also had $0.1$ standard deviation
    on the action dimensions.
\end{itemize}

\subsubsection{Trajectory Length} For each of the domains, the trajectory length is: $200$ for Reacher, $500$ for Walker2D, $300$ for Pusher, and $500$ for AntUMaze.

\subsubsection{Hyperparameters} For BestDICE and 
AbstractBestDICE, we fixed the following hyperparameters:

\begin{itemize}
    \item $\gamma = 0.995$ in all experiments.
    \item Neural net architecture: All neural networks
    are $2$ layers with $64$ hidden units using tanh
    activation.
    \item Unit mean constraint learning rate \cite{zhang2020gradientdice, nachum2020bestdice}: $\lambda = 1e^{-3}$.
    \item Optimizer: Adam optimizer with default
    parameters in Pytorch.
    \item Positivity constraint: squaring function on
    the last layer of the neural network.
\end{itemize}
We conducted a search for the learning rate of $\nu$ ($\alpha_\nu)$
and learning rate of $\zeta$ ($\alpha_\zeta$), . The learning rate search for ($\alpha_\nu,
\alpha_\zeta$) was over
$\{(5e^{-5}, 5e^{-5}), (1e^{-4}, 1e^{-4}), (3e^{-4}, 3e^{-4}), (7e^{-4}, 7e^{-4}), (1e^{-3}, 1e^{-3}) \}$. The optimal hyperparameters ($\alpha_\nu = \alpha_\zeta$) for each environment and batch size were:

\begin{table}[H]
\centering

\begin{tabular}{l|l|l|l|l|l|l|l|l}
 &
5 &
10  &
50 &
75  &
100 &
300  &
500 &
1000 \\
Reacher &
$5e^{-5}$ &
$1e^{-4}$  &
$1e^{-3}$ &
$1e^{-4}$   &
$1e^{-4}$  &
$7e^{-4}$   &
$1e^{-3}$  &
$7e^{-4}$  \\
Walker2D &
$1e^{-3}$ &
$5e^{-5}$  &
$5e^{-5}$ &
$5e^{-5}$   &
$5e^{-5}$  &
$5e^{-5}$   &
$5e^{-5}$  &
$5e^{-5}$  \\
Pusher &
$5e^{-5}$ & 
$5e^{-5}$  &
$5e^{-5}$ &
$5e^{-5}$  &
$1e^{-4}$ &
$5e^{-5}$ &
$5e^{-5}$ &
$5e^{-5}$  \\
AntUMaze &
$3e^{-4}$ &
$5e^{-5}$  &
$5e^{-5}$ &
$5e^{-5}$  &
$5e^{-5}$ &
$5e^{-5}$  &
$5e^{-5}$ &
$5e^{-5}$ \\
\end{tabular}
%}
\caption{Optimal hyparameters for AbstractBestDICE on each batch size and environment.}
\label{table1}
\end{table}

\begin{table}[H]
\centering

\begin{tabular}{l|l|l|l|l|l|l|l|l}
 &
5 &
10  &
50 &
75  &
100 &
300  &
500 &
1000 \\
Reacher &
$5e^{-5}$  &
$1e^{-4}$  &
$5e^{-5}$ &
$1e^{-4}$  &
$5e^{-5}$ &
$1e^{-4}$  &
$3e^{-4}$ &
$1e^{-3}$ \\
Walker2D &
$5e^{-5}$ &
$3e^{-4}$  &
$7e^{-4}$ &
$7e^{-4}$   &
$7e^{-4}$  &
$1e^{-4}$   &
$3e^{-4}$  &
$1e^{-4}$  \\
Pusher &
$5e^{-5}$  &
$1e^{-4}$  &
$1e^{-4}$ &
$5e^{-5}$  &
$1e^{-4}$ &
$5e^{-5}$  &
$1e^{-4}$ &
$5e^{-5}$ \\
AntUMaze &
$1e^{-4}$ &
$3e^{-4}$  &
$5e^{-5}$ &
$5e^{-5}$  &
$5e^{-5}$ &
$5e^{-5}$  &
$5e^{-5}$ &
$5e^{-5}$ \\
\end{tabular}
%}
\caption{Optimal hyperparameters for BestDICE on each batch size and environment.}
\label{table2}
\end{table}

\subsubsection{Empirical Estimator} In practice we use a weighted importance sampling \cite{nachum2020bestdice} approach for the function approximation cases to estimate
$\rho(\pi_e)$ (same for BestDICE):
\begin{align*}
    \hat{\rho}(\pi^\phi_e) &= \frac{\sum_{i=1}^N \frac{d_{\pi^\phi_e}(s_i^\phi, a_i)}{d_{\pi^\phi_\mathcal{D}}(s_i^\phi, a_i)}r^\phi(s_i^\phi,a_i)}{\sum_{i=1}^N \frac{d_{\pi^\phi_e}(s_i^\phi, a_i)}{d_{\pi^\phi_\mathcal{D}}(s_i^\phi, a_i)}}
\end{align*}

\subsubsection{Misc Abstraction Details} 
\begin{itemize}
    \item For Walker2D, we modified the default reward function from incremental distance covered at each time-step to distance from start location at each time-step to ensure Assumption \ref{assumption:reward_equality} is satisfied.
    \item For AntUMaze, the reward function is originally $r(s')$ i.e. it is based
on the \textit{next} state that the ant moves to. To ensure Assumption \ref{assumption:reward_equality} is satisfied, we changed this reward function
to be of the \textit{current} state, $r(s)$.
\end{itemize}

\subsubsection{Additional Results}
    \textbf{Baseline Performance Comparison} As also reported by \citet{nachum2020bestdice, fu2021opebench}, we found in preliminary experiments that BestDICE performed much better than other MIS methods such as DualDICE \cite{nachum2019dualdice}, Minimax-Weight Learning \cite{masatoshi2019minimax}, etc.
        \begin{figure}[H]
        \centering
            \subfigure[Reacher ]{\includegraphics[scale=0.15]{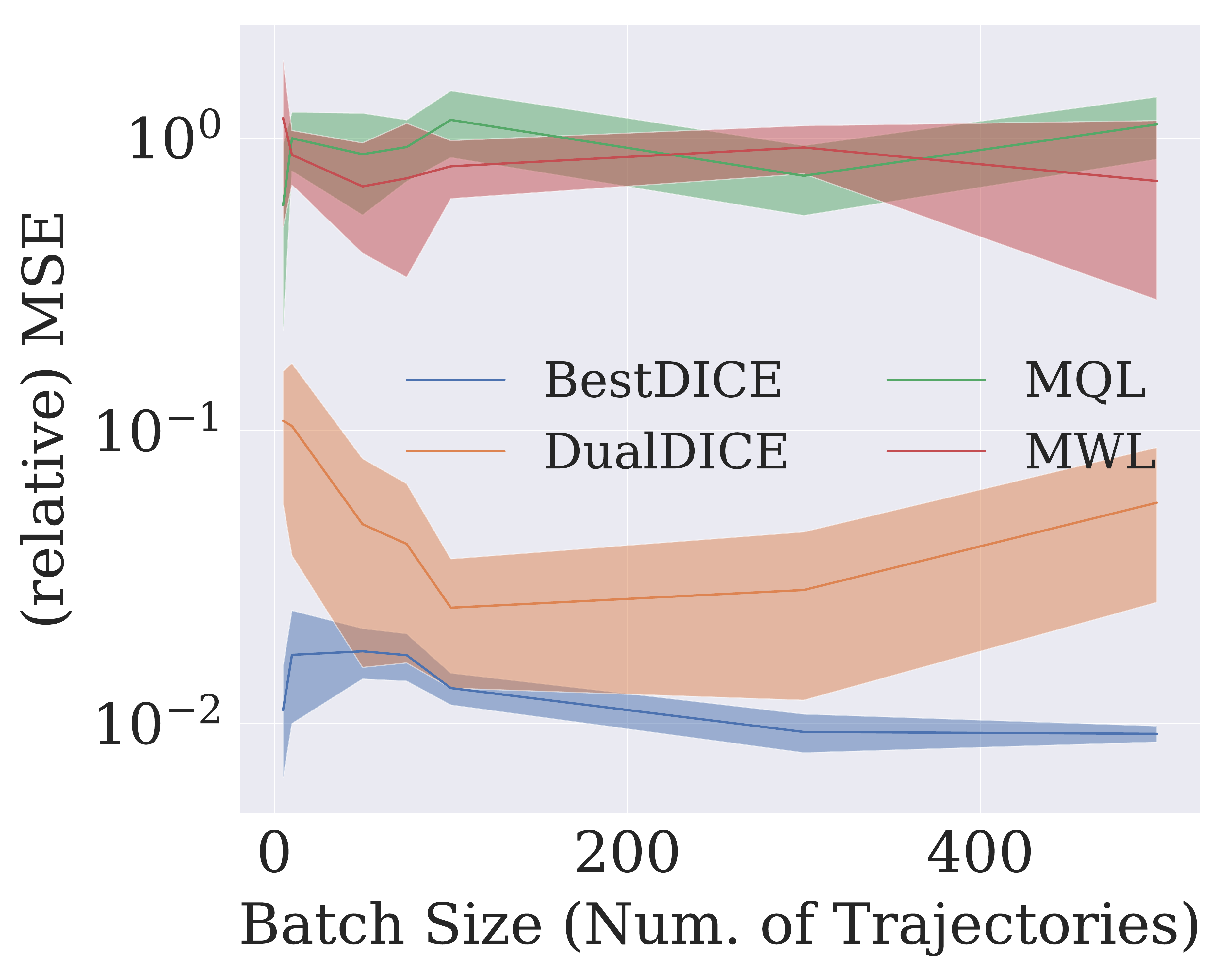}}
            \subfigure[Walker ]{\includegraphics[scale=0.15]{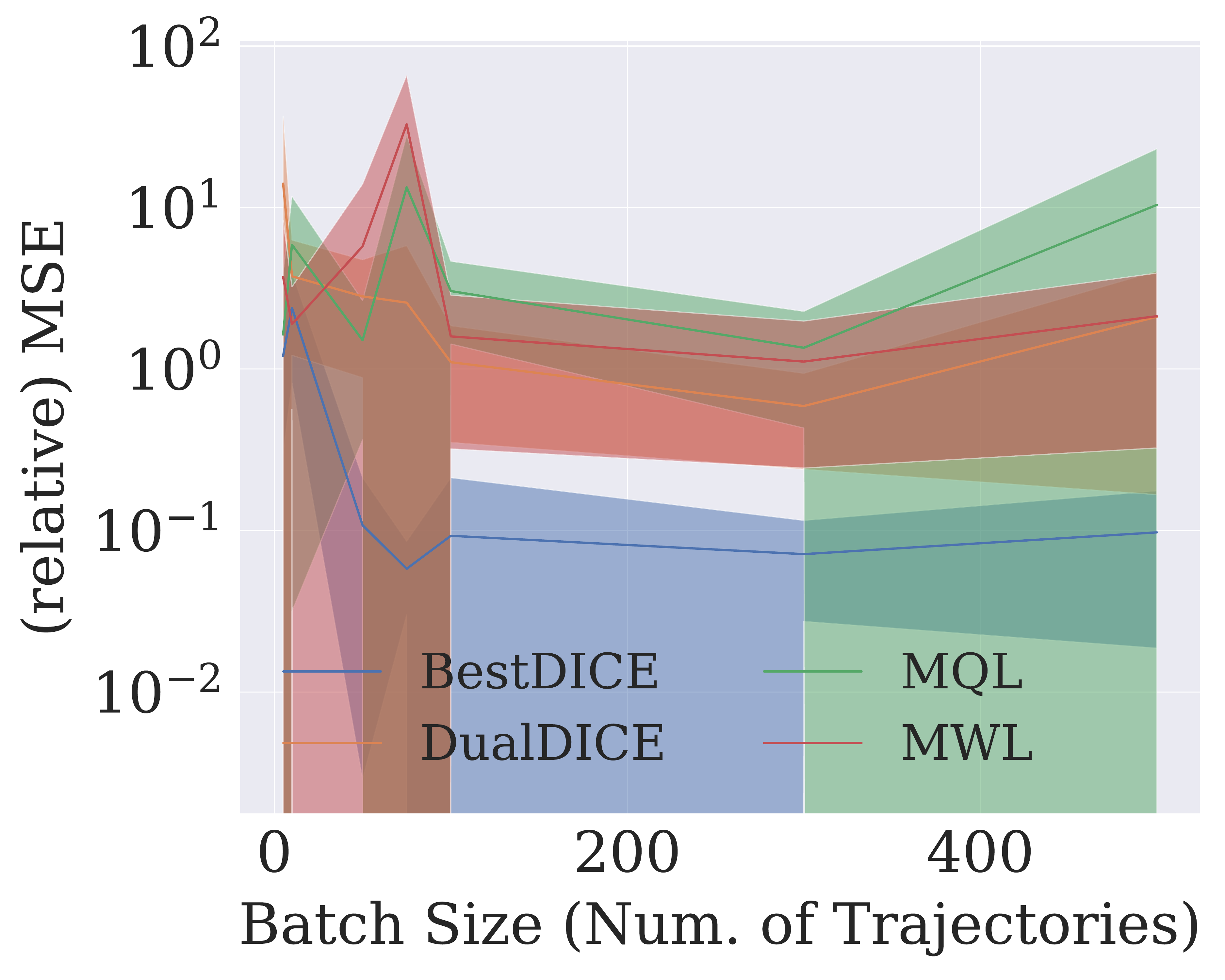}}
        \caption{\footnotesize Baseline Performance comparison among common MIS methods (after hyperparameter search). Lower is better.}
    \end{figure}
    
    \textbf{Additional Hyperparameter Robustness Results} In general, we can see
    AbstractBestDICE can be much more robust than BestDICE to hyperparameter
    tuning.
    \begin{figure}[H]
        \centering
            \subfigure[Reacher (Batch size: 5)]{\includegraphics[scale=0.15]{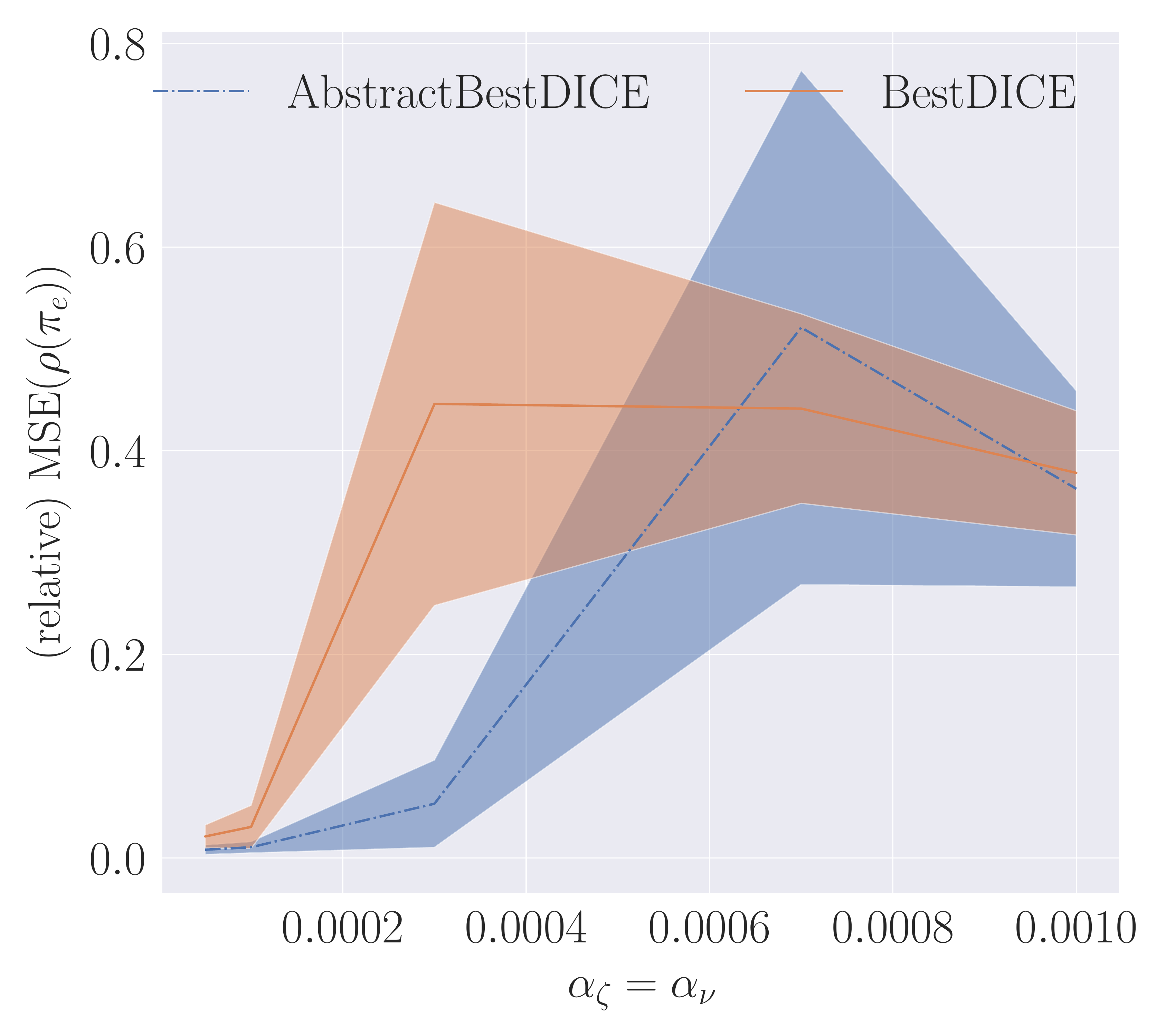}}
            \subfigure[Reacher (Batch size: 10)]{\includegraphics[scale=0.15]{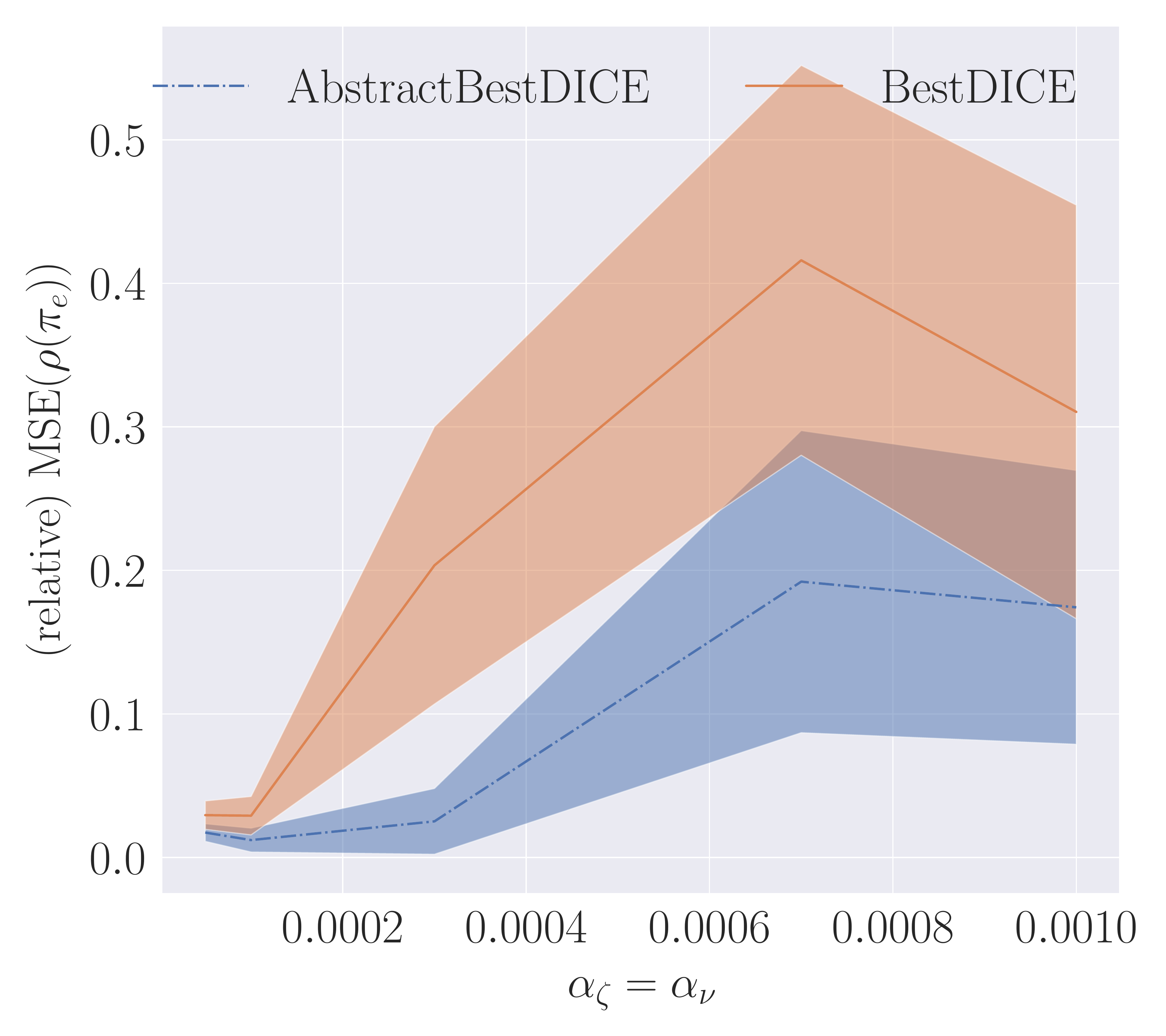}}
            \subfigure[Reacher (Batch size: 50)]{\includegraphics[scale=0.15]{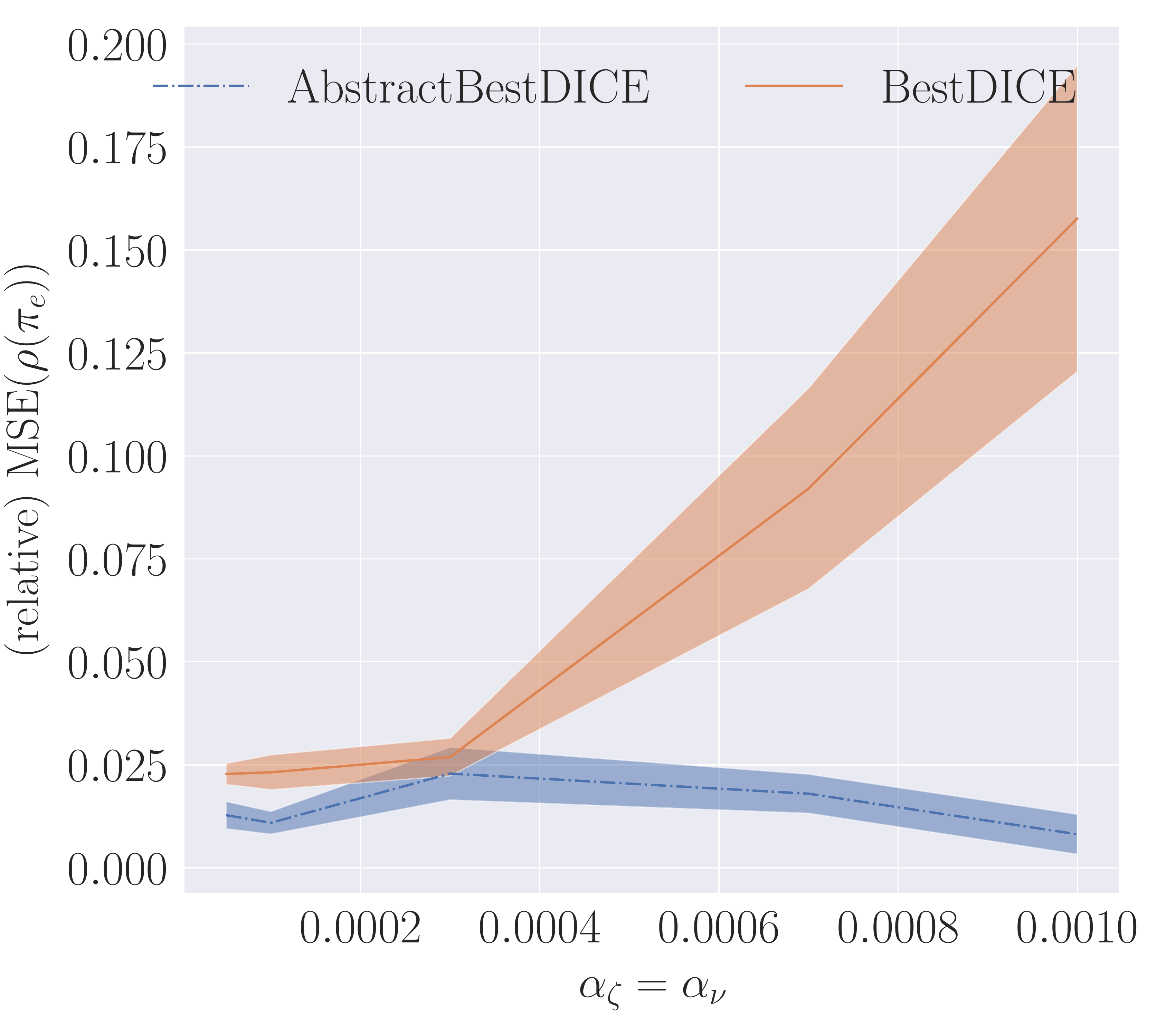}}
            \subfigure[Walker2D (Batch size: 5)]{\includegraphics[scale=0.15]{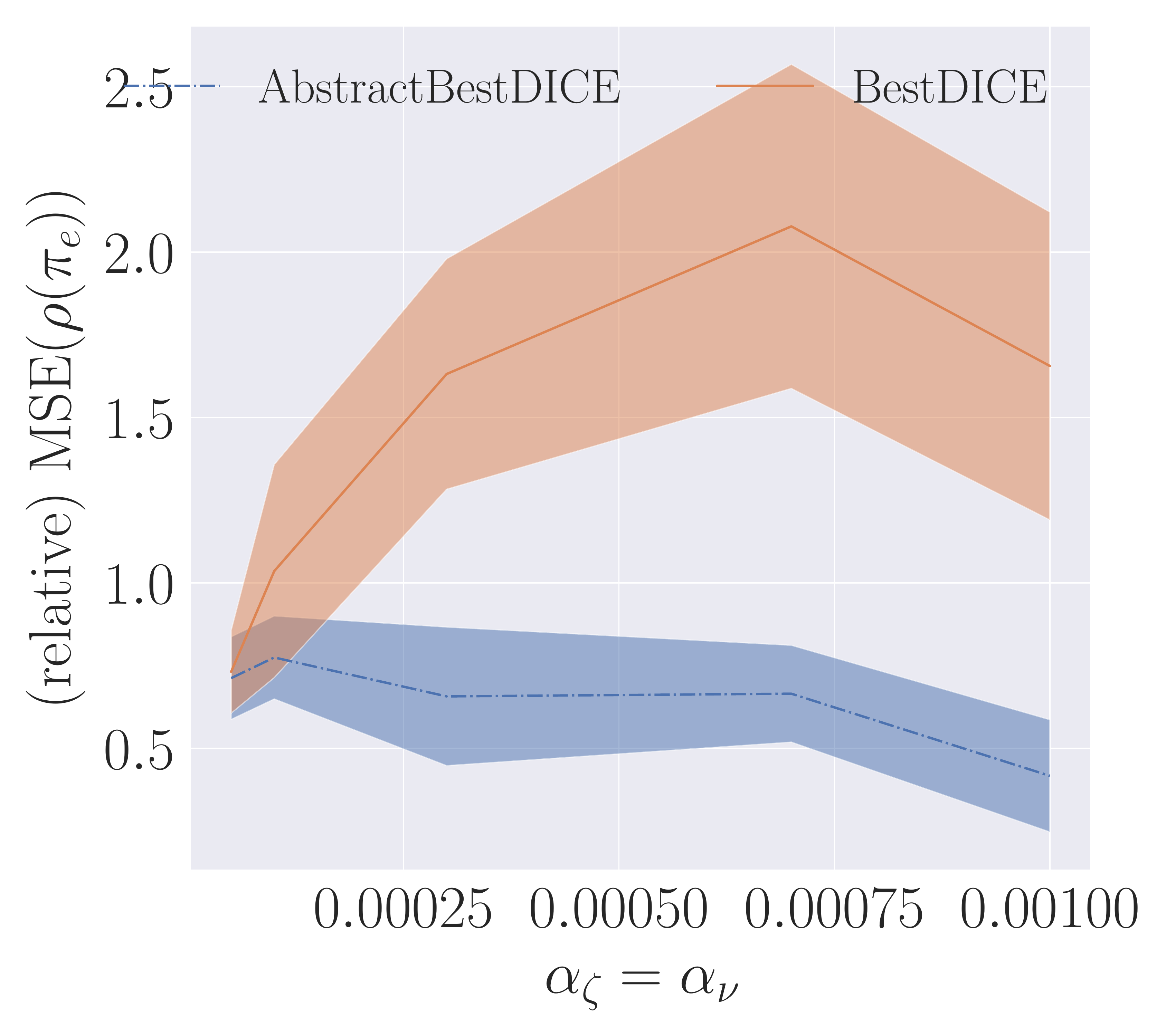}}
            \subfigure[Walker2D (Batch size: 10)]{\includegraphics[scale=0.15]{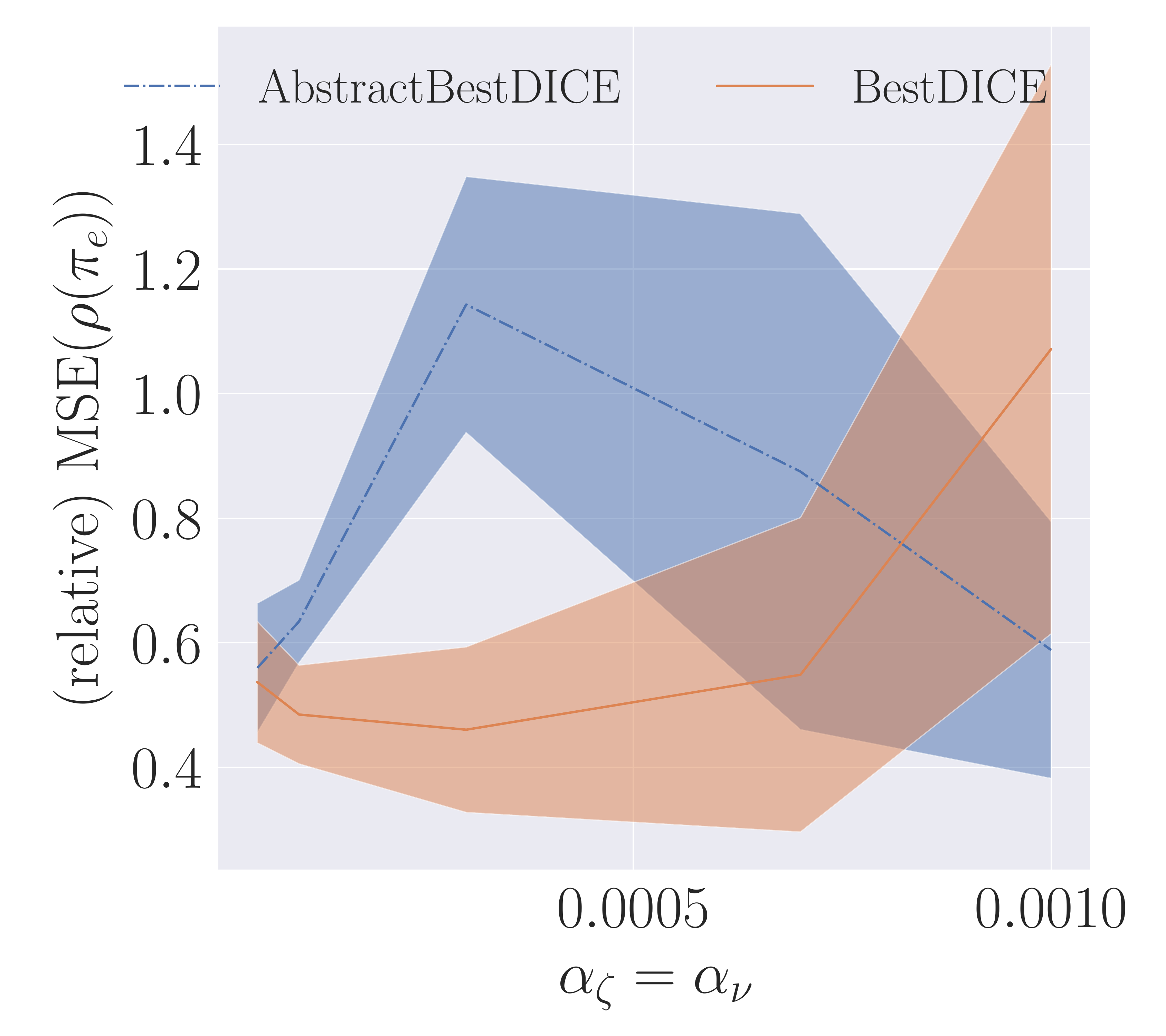}}
            \subfigure[Walker2D (Batch size: 50)]{\includegraphics[scale=0.15]{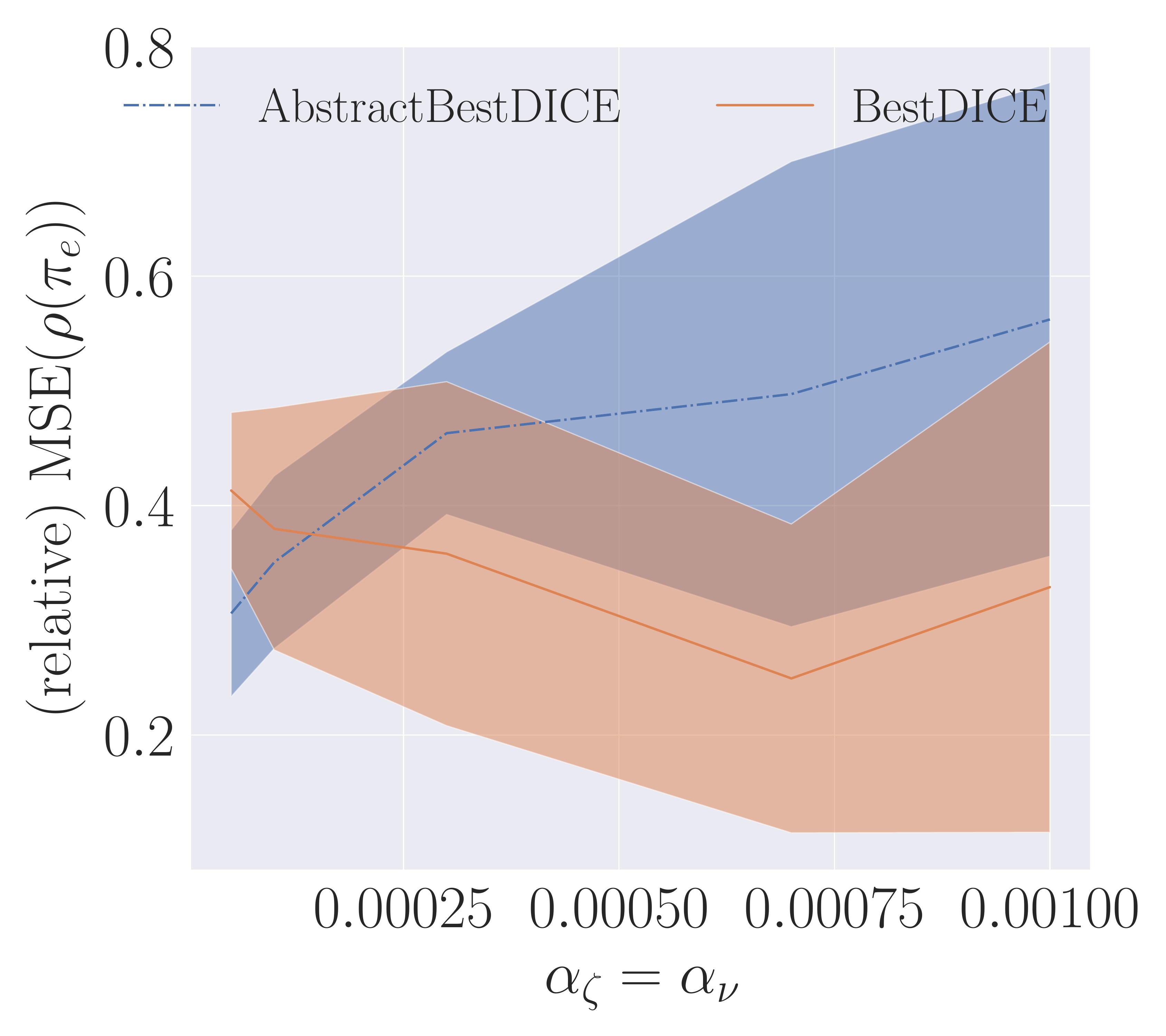}}
            \subfigure[Pusher (Batch size: 5)]{\includegraphics[scale=0.15]{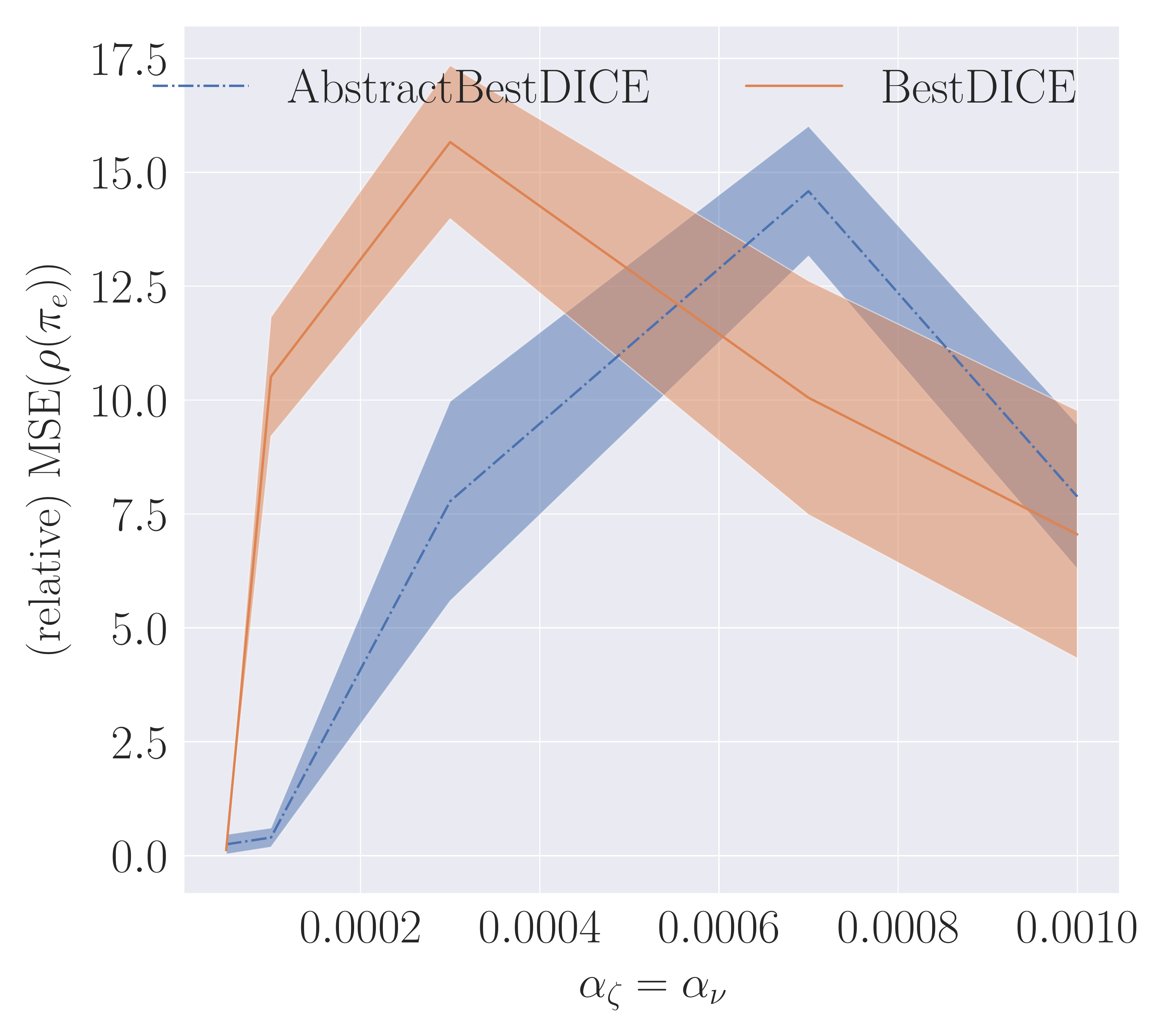}}
            \subfigure[Pusher (Batch size: 10)]{\includegraphics[scale=0.15]{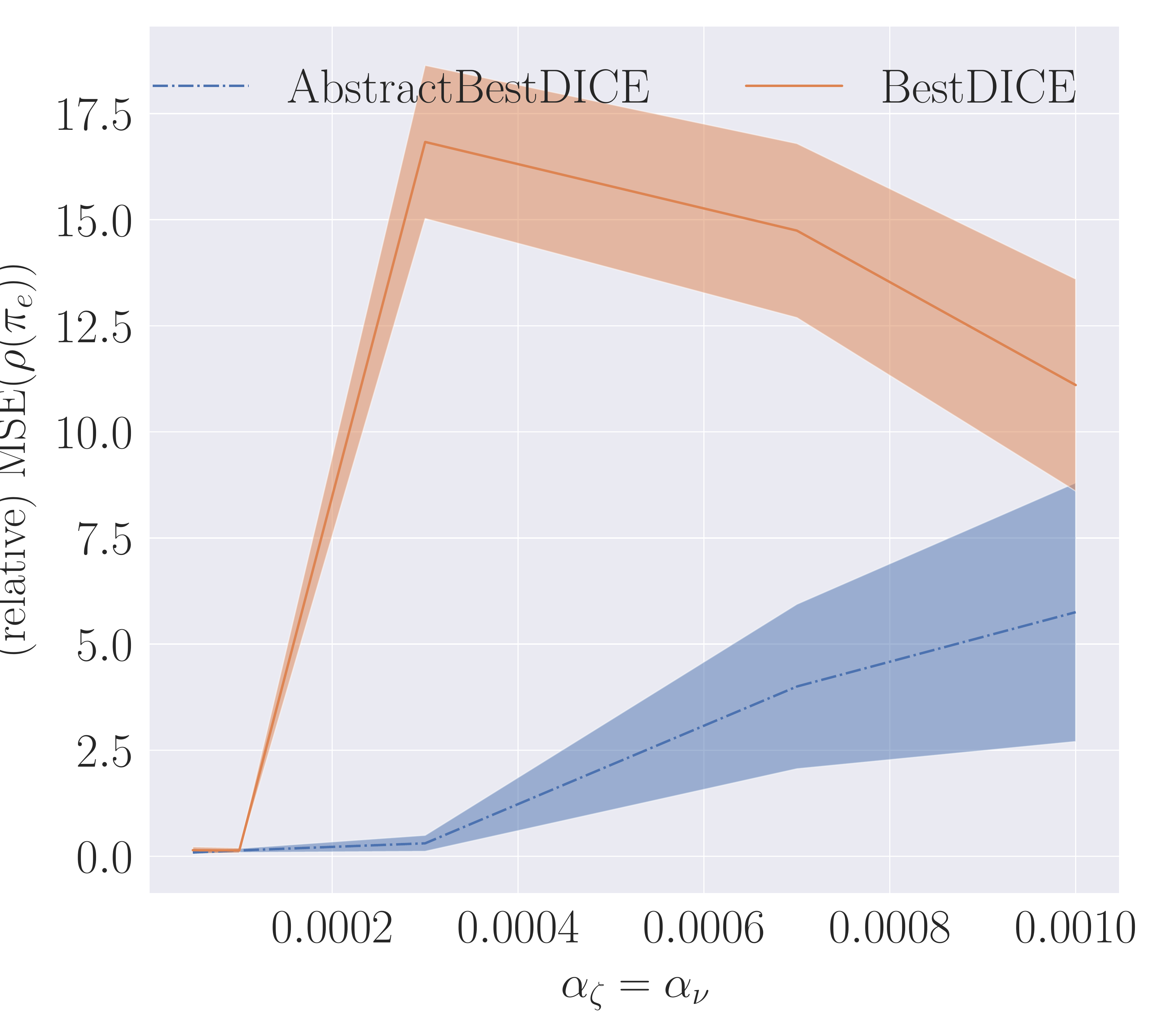}}
            \subfigure[Pusher (Batch size: 50)]{\includegraphics[scale=0.15]{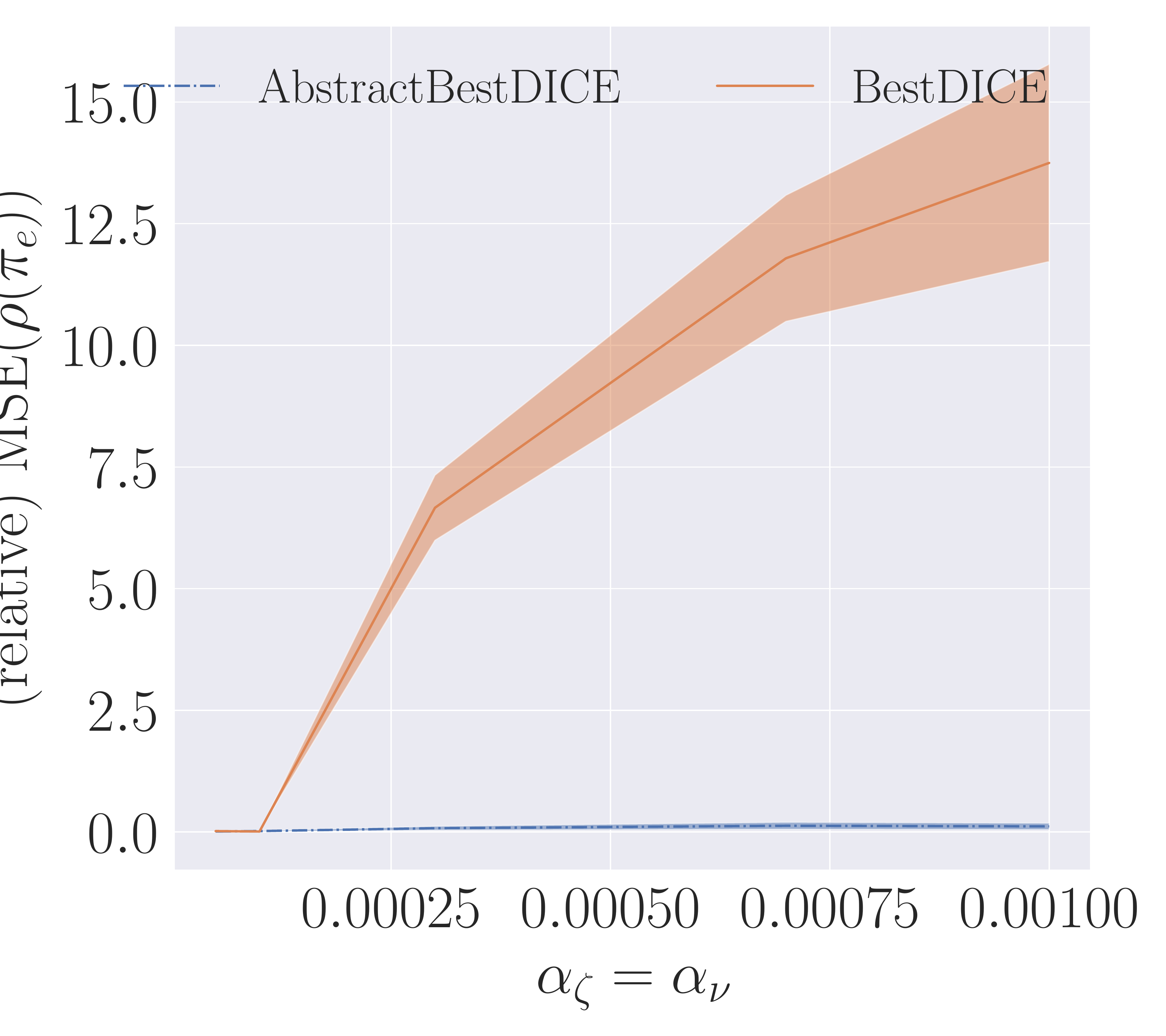}}\\
            \subfigure[AntUMaze (Batch size: 10)]{\includegraphics[scale=0.15]{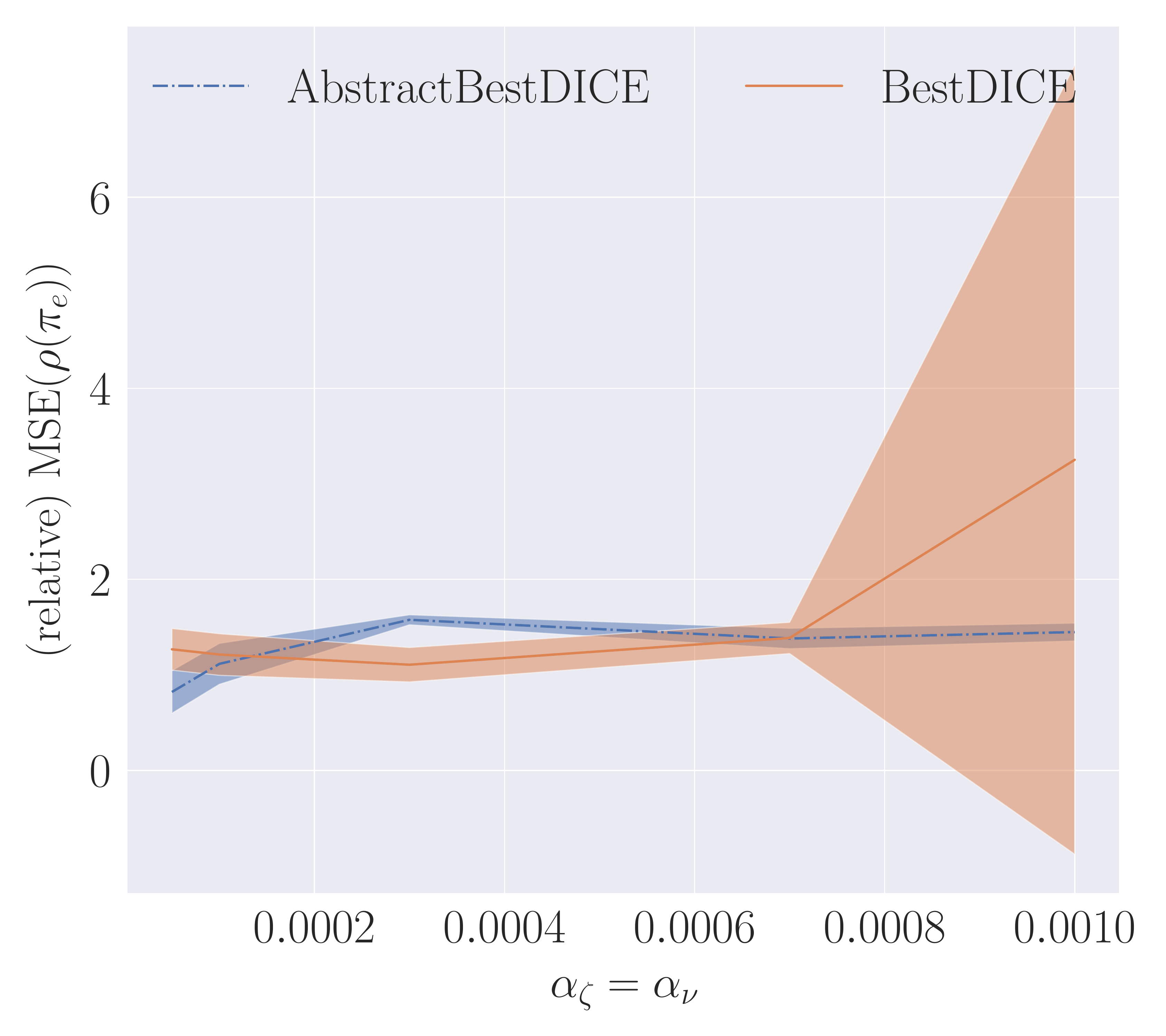}}
            \subfigure[AntUMaze (Batch size: 50)]{\includegraphics[scale=0.15]{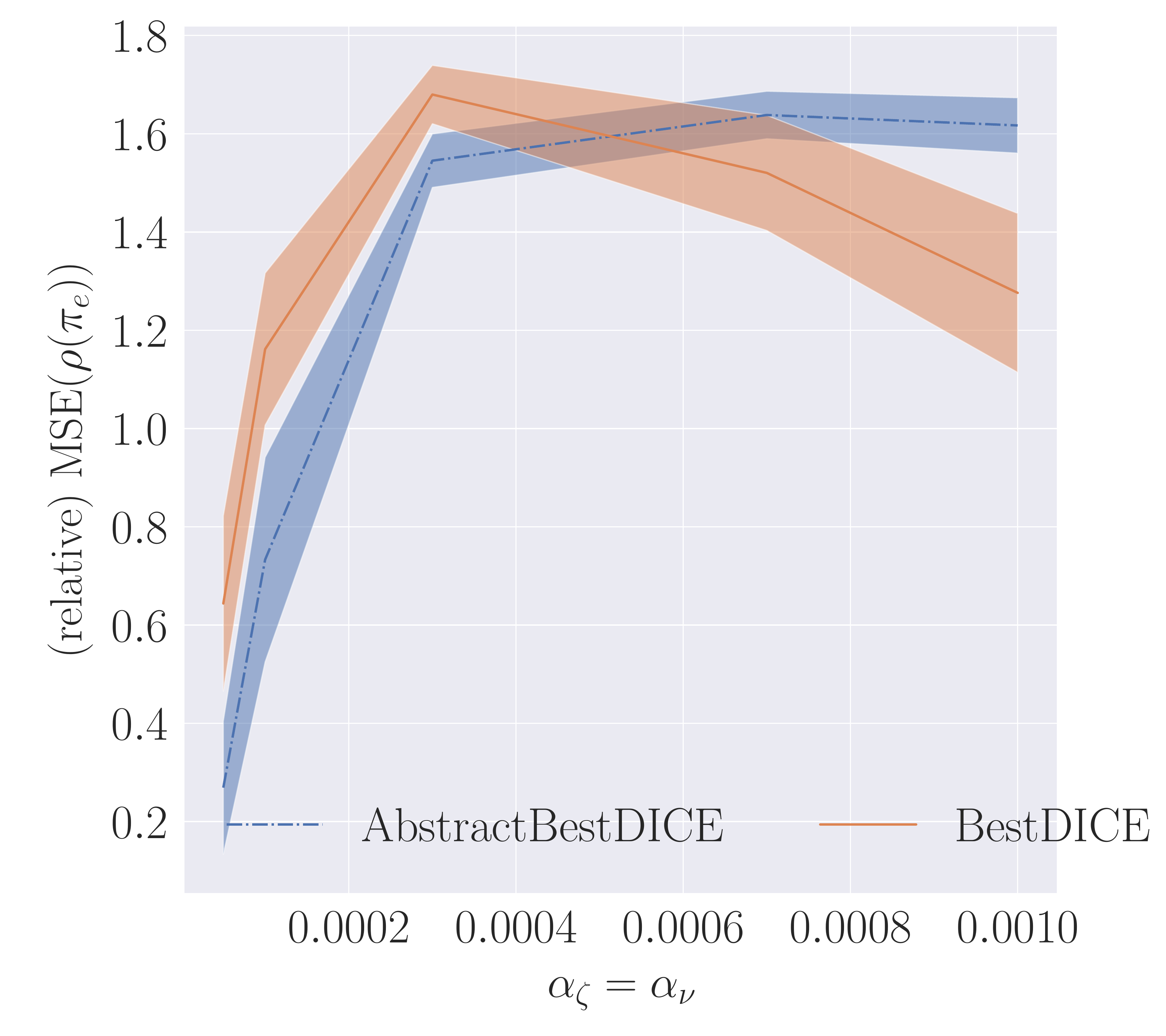}}
        \caption{\footnotesize Hyperparameter sensitivity graph for BestDICE and AbstractBestDICE. $\alpha_\zeta = \alpha_\nu$ Errors are computed over $15$ trials with $95\%$ confidence intervals. Lower is better. Pusher for
        batch size of $50$ is shown in the main paper.}
        \label{fig:hp_sens_app}
    \end{figure}

    \textbf{Training Stability} In Figure \ref{fig:rew_tr} we show that
    $\phi$ can improve training stability.
    \begin{figure}[H]
        \centering
            \subfigure[Reacher (Batch size: 10)]{\includegraphics[scale=0.15]{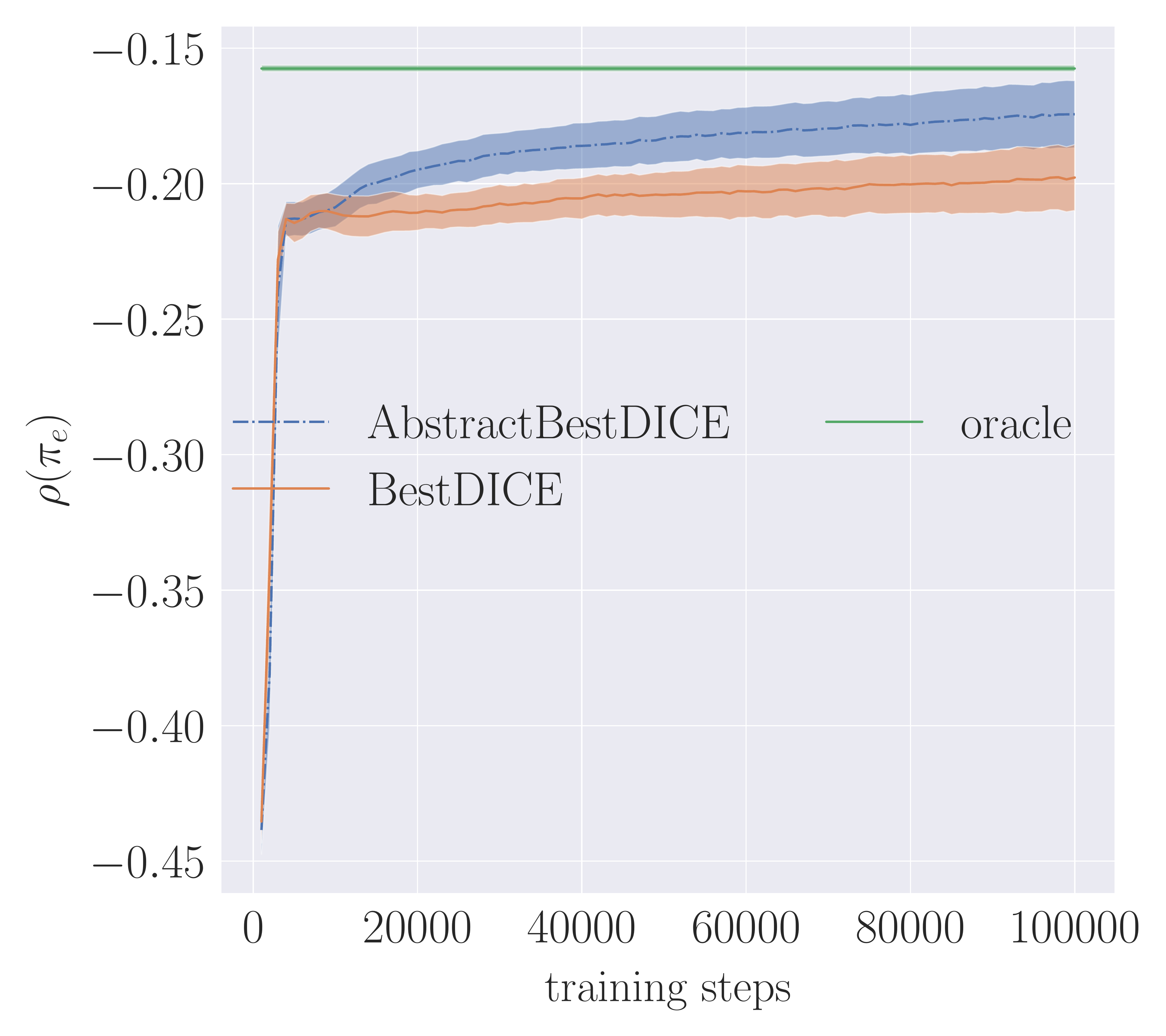}}
            \subfigure[Reacher (Batch size: 500)]{\includegraphics[scale=0.15]{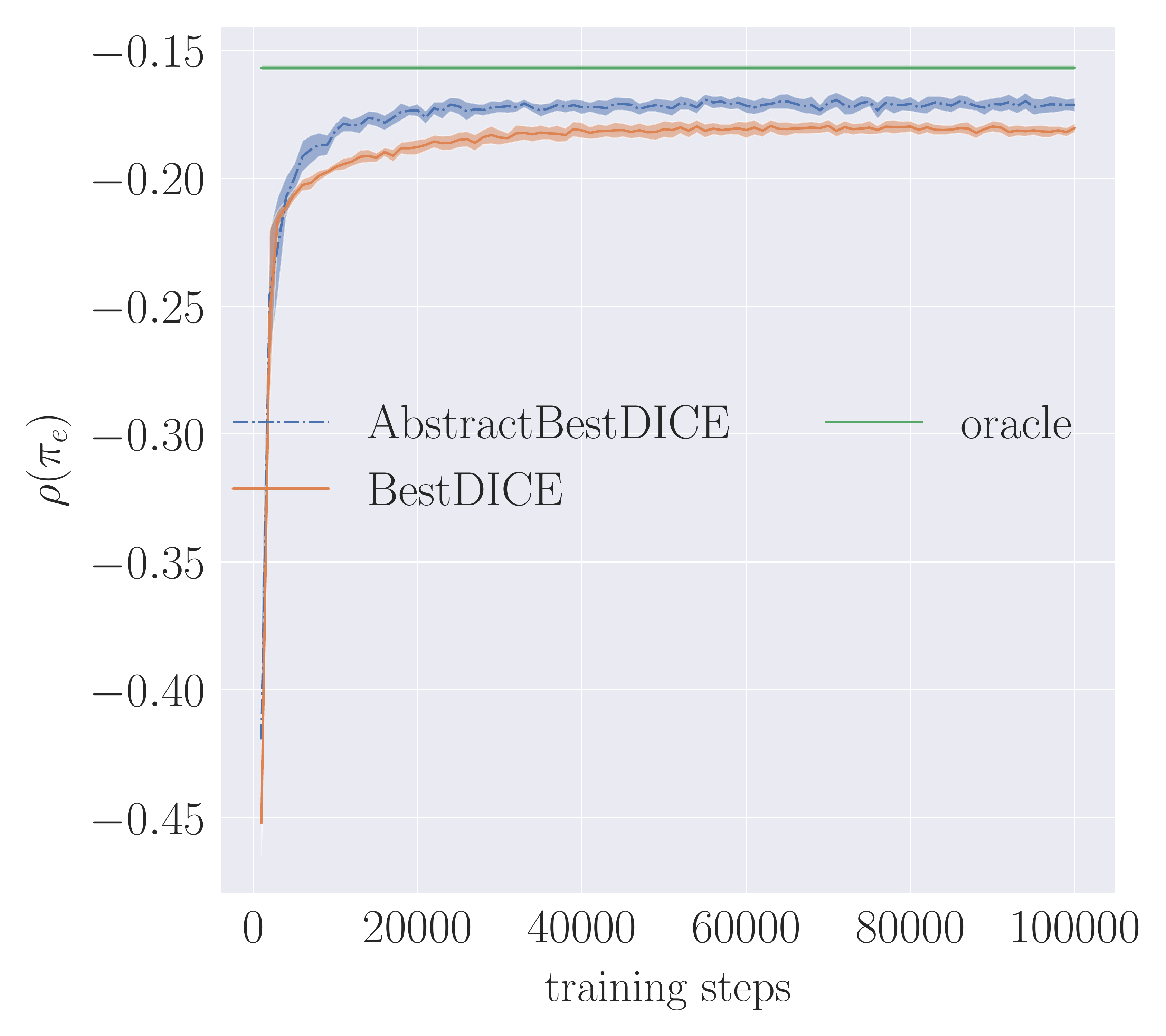}}
            \subfigure[Walker2D (Batch size: 10)]{\includegraphics[scale=0.15]{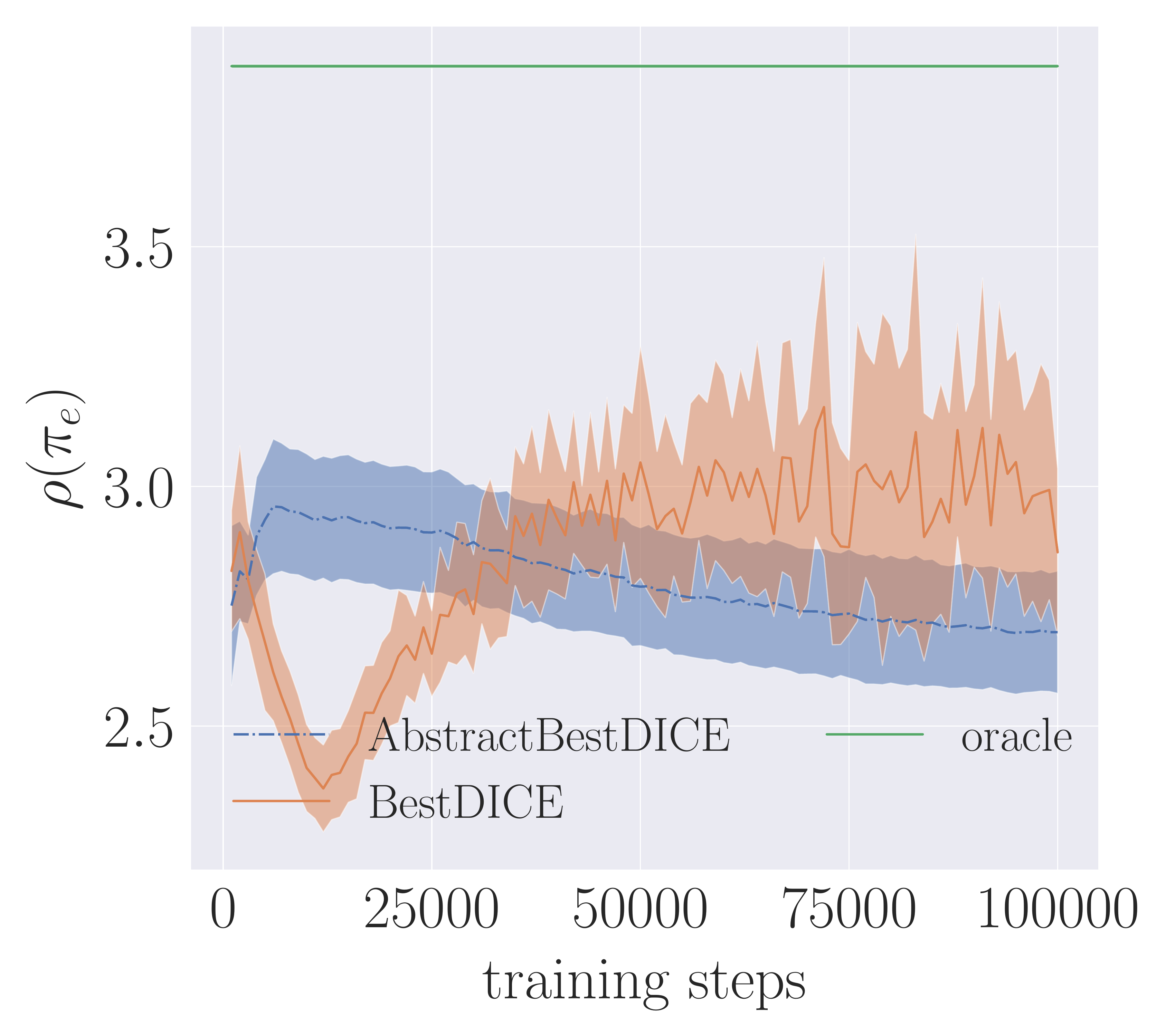}}
            \subfigure[Walker2D (Batch size: 500)]{\includegraphics[scale=0.15]{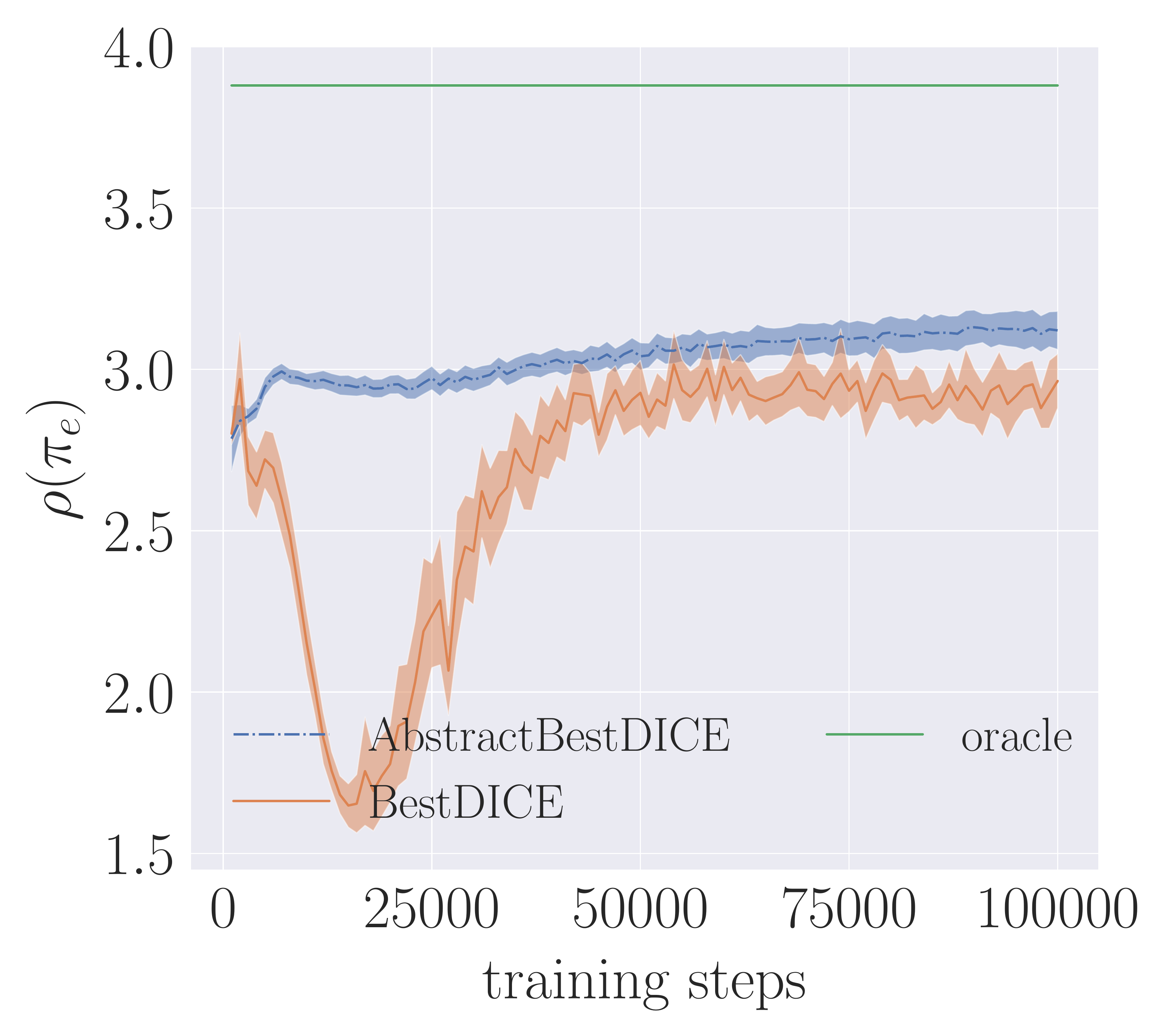}}
            \subfigure[Pusher (Batch size: 10)]{\includegraphics[scale=0.15]{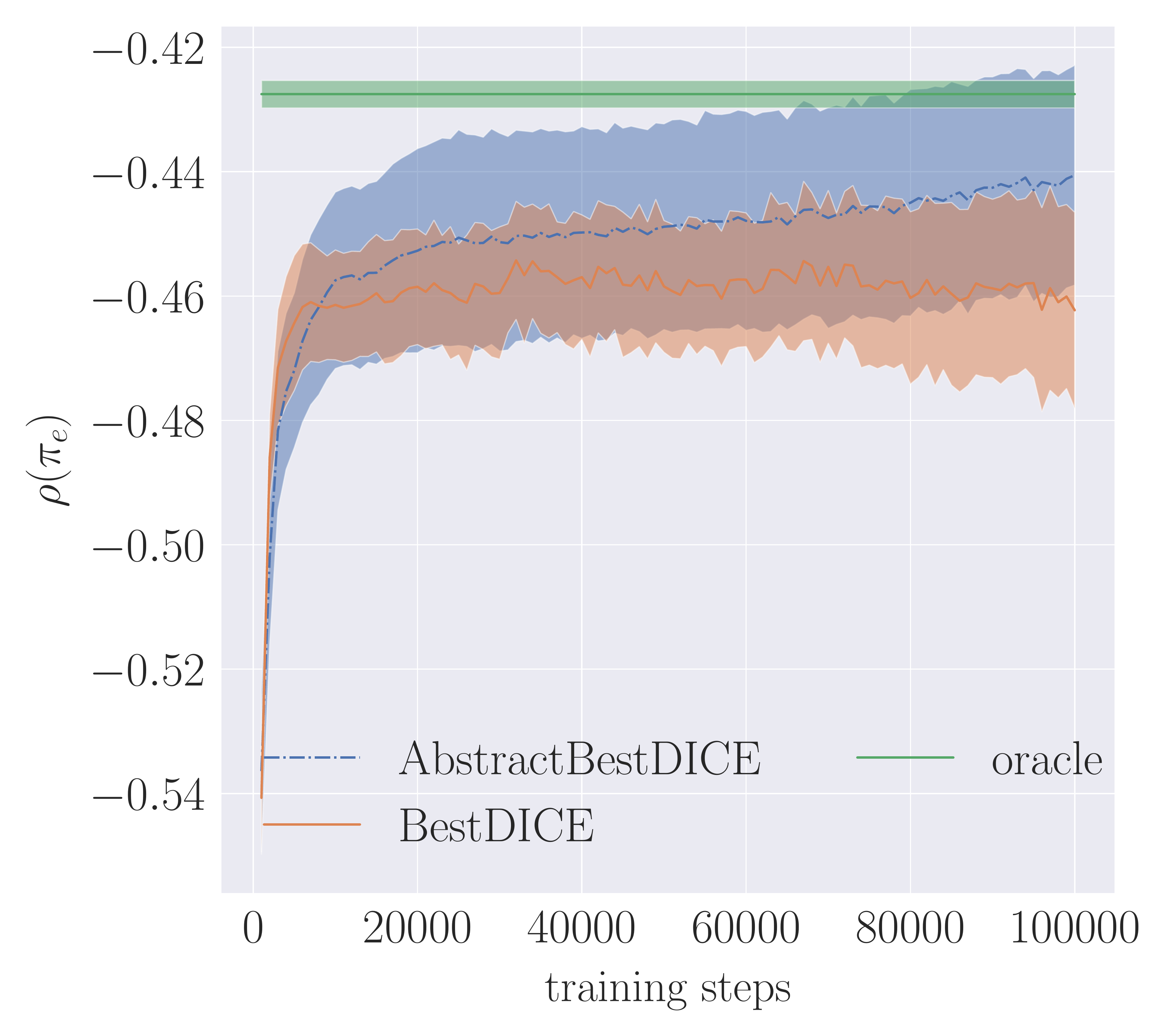}}
            \subfigure[Pusher (Batch size: 500)]{\includegraphics[scale=0.15]{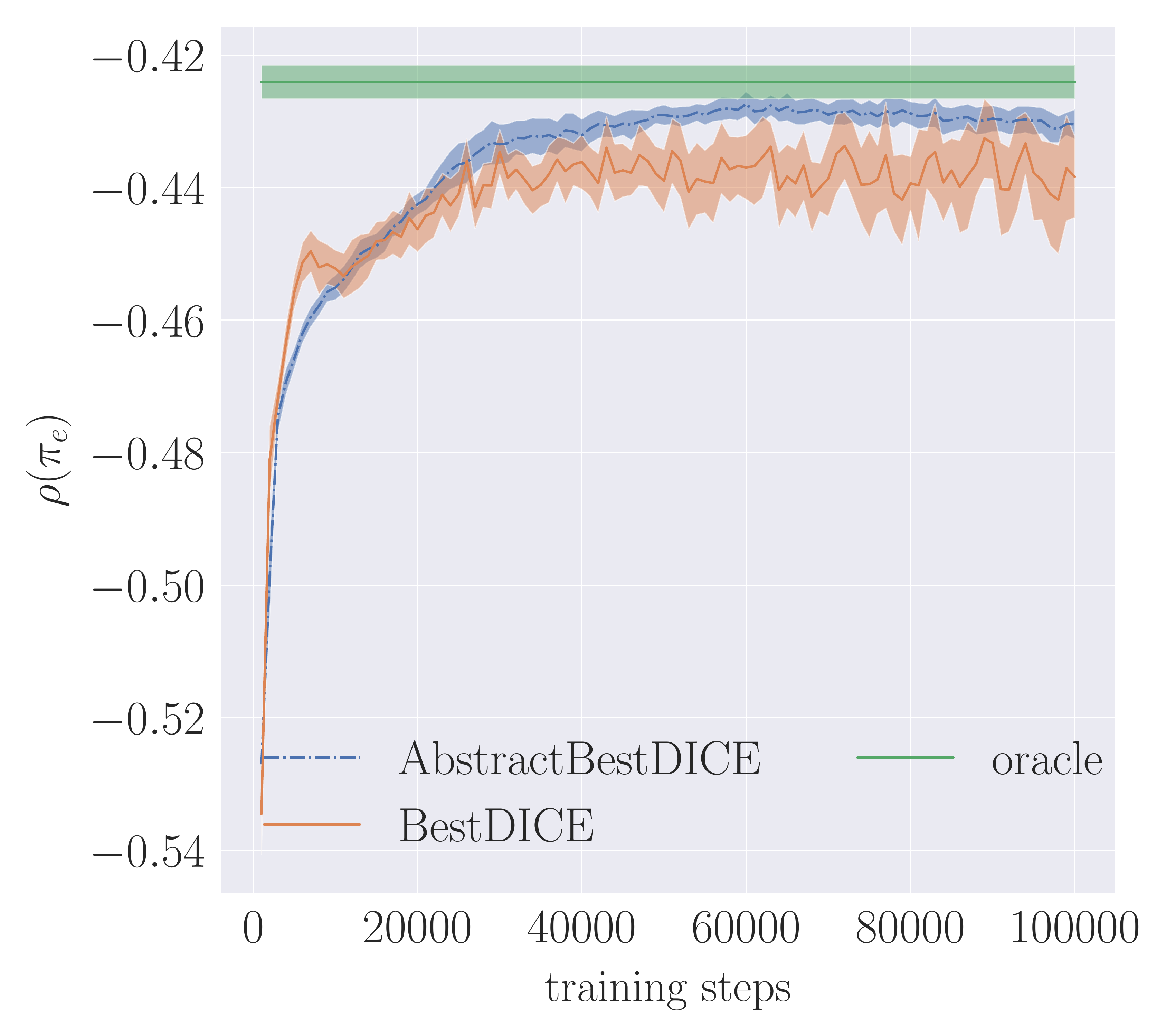}}
            \subfigure[AntUMaze (Batch size: 10)]{\includegraphics[scale=0.15]{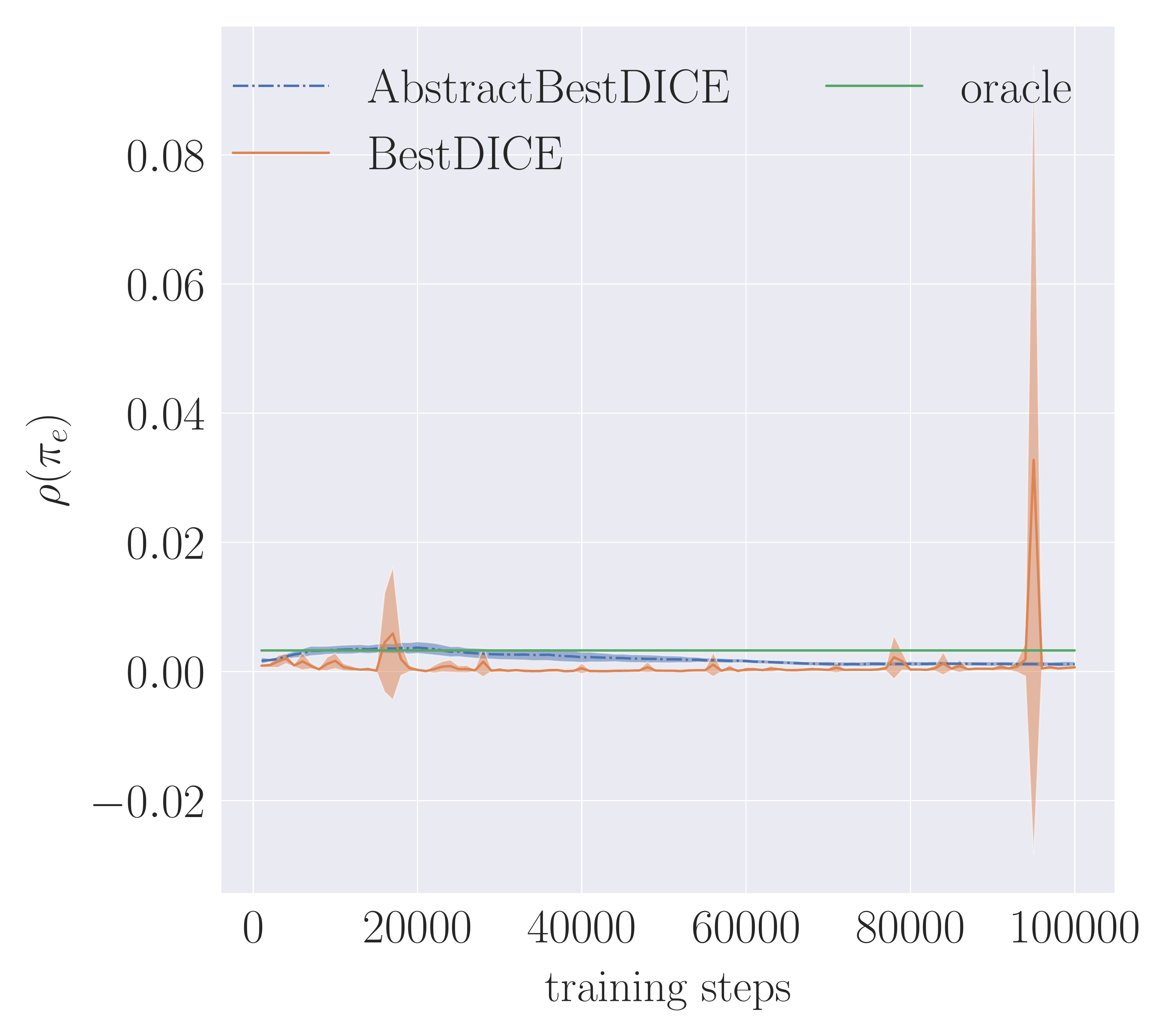}}
            \subfigure[AntUMaze (Batch size: 500)]{\includegraphics[scale=0.15]{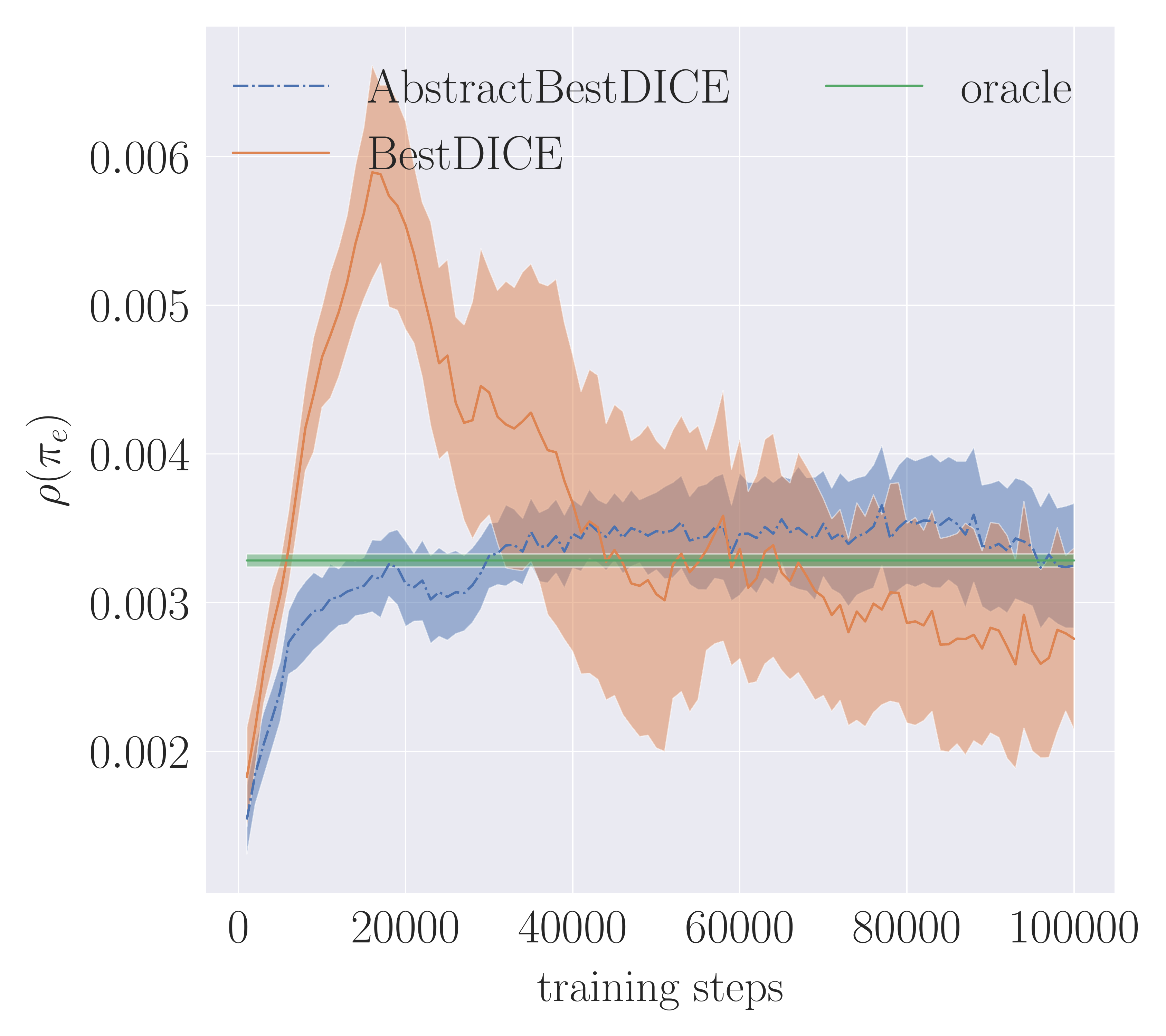}}
        \caption{\footnotesize Reward vs. Training
        Steps.  Errors are computed over $15$ trials with $95\%$ confidence intervals. 
        These figures illustrate the training
        stability of AbstractBestDICE over BestDICE. Lower is better.}
        \label{fig:rew_tr}
    \end{figure}
    
    \textbf{Abstract Quality and Data-Efficiency}. 
    We find that not all abstractions that satisfy
    Assumption \ref{assumption:reward_equality} lead to better performance. For 
    example, the following are valid abstractions on the Reacher task: 1) the
    Euclidean distance between the arm and goal, $s^\phi\in\mathbb{R}$ and
    the 3D vector between the arm and goal, $s^\phi\in\mathbb{R}^3$ (Figure \ref{fig:vs_batch_fail_abs}). However,
    in practice we found that these were unreliable. One possible reason for this
    unreliability is that these abstractions are incredibly extreme and the algorithm
    may be unable to 
    differentiate between abstract state, resulting
    in outputting similar $\zeta^\phi(s^\phi, a)\forall s^\phi$.
    \begin{figure}[H]
        \centering
            \subfigure[Reacher]{\label{fig:reacher_vs_batch_fail_abs}\includegraphics[scale=0.2]{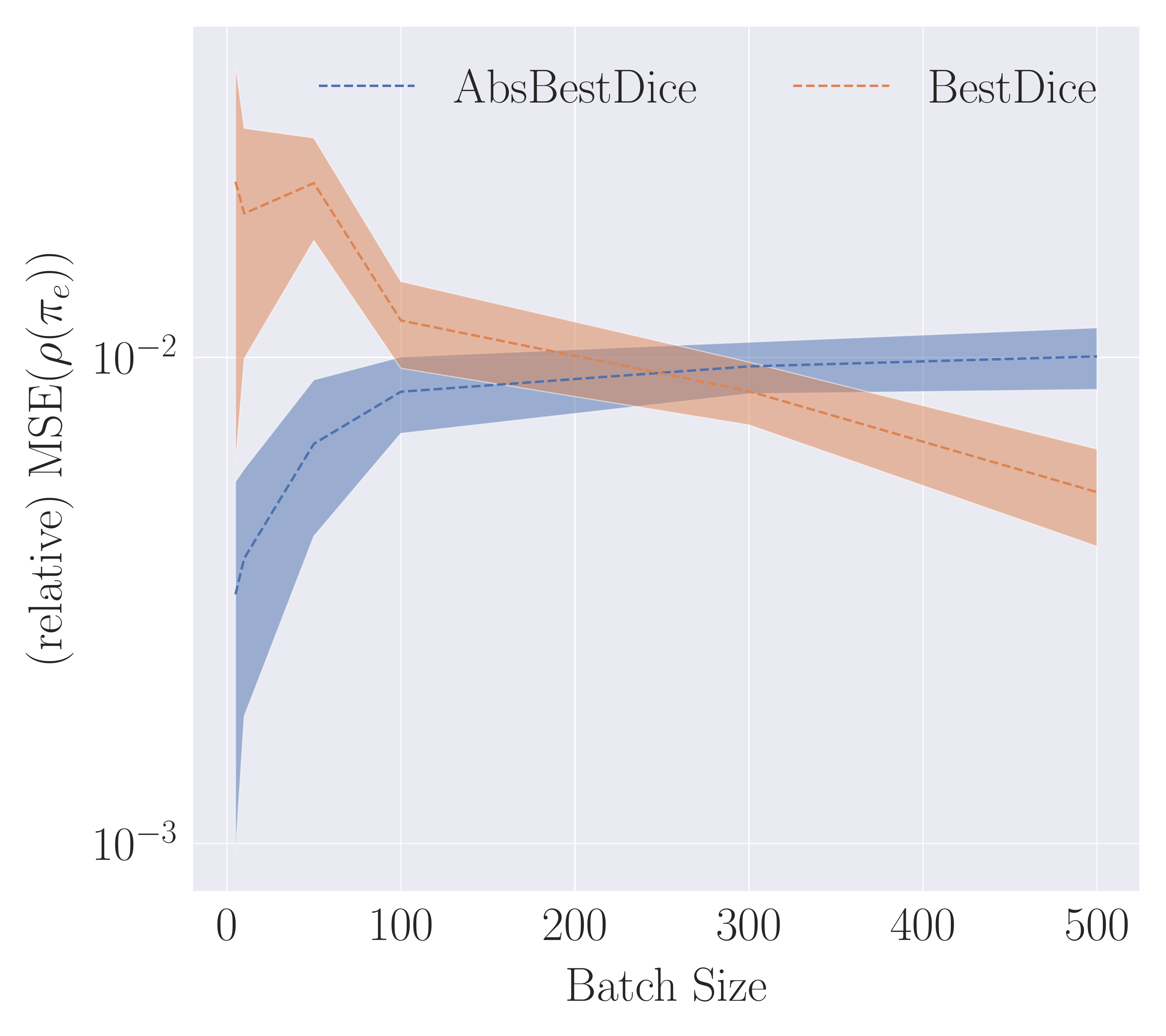}}
        \caption{\footnotesize Relative MSE vs. Batch Size. $y$ axis is log-scaled. Errors are computed over $15$ trials with $95\%$ confidence intervals. 
        This figures illustrate valid abstractions can be more
        data-inefficient than the ground equivalents. Lower is better.}
        \label{fig:vs_batch_fail_abs}
    \end{figure}
    
\subsection{Hardware For Experiments}
\begin{itemize}
    \item Distributed cluster on HTCondor framework
    \item Intel(R) Xeon(R) CPU E5-2470 0 @ 2.30GHz
    \item RAM: 5GB
    \item Disk space: 4GB 
\end{itemize}

% Use \bibliography{yourbibfile} instead or the References section will not appear in your paper

\end{document}